\documentclass[jair,twoside,11pt]{article}
\input epsf

\usepackage{jair}
\usepackage{theapa}
\usepackage{times}
\usepackage{helvet}
\usepackage{courier}
\usepackage{latexsym}
\usepackage{amssymb}
\usepackage{amsmath}
\usepackage{amsthm}
\usepackage{url}
\usepackage{graphicx}
\usepackage{subfigure}
\frenchspacing

\newtheorem{definition}{Definition}
\newtheorem{proposition}{Proposition}
\newtheorem{lemma}{Lemma}

\newtheorem{theorem}{Theorem}
\newtheorem{corollary}[theorem]{Corollary}

\newtheorem{acknowledgement*}{Acknowledgment}

\newcommand{\eoe}{\bigtriangleup}
\newcommand{\bdf}{\beta}
\newcommand{\comp}{\mathit{Comp}}
\newcommand{\cmp}{\mathit{\ comp\ }}
\newcommand{\plcc}{\mathit{PL^{cc}}}
\newcommand{\plmc}{\mathit{PL^{mc}}}
\newcommand{\At}{\mathit{At}}
\newcommand{\dom}{\mathit{Dom}}
\newcommand{\hd}{\mathit{hd}}
\newcommand{\bd}{\mathit{bd}}
\newcommand{\hs}{\mathit{hset}}

\newcommand{\n}{\mathbf{not}}

\newcommand{\smd}{\mathit{smodels}}

\newcommand{\cmd}{\mathit{cmodels}}

\newcommand{\pmd}{\mathit{pbmodels}}
\newcommand{\satzoo}{\mathit{satzoo}}
\newcommand{\pbs}{\mathit{pbs}}

\newcommand{\wsatcc}{\mathit{wsatcc}}
\newcommand{\wsatoip}{\mathit{wsatoip}}
\newcommand{\pmdzoo}{\mathit{pbmodels}\mbox{-}\mathit{satzoo}}

\newcommand{\pmdpbs}{\mathit{pbmodels}\mbox{-}\mathit{pbs}}
\newcommand{\pmdwcc}{\mathit{pbmodels}\mbox{-}\mathit{wsatcc}}
\newcommand{\pmdoip}{\mathit{pbmodels}\mbox{-}\mathit{wsatoip}}

\newcommand{\plwc}{\mathit{PL^{wa}}}
\newcommand{\pb}{\mathit{PB}}
\newcommand{\tocl}{\mathit{\tau_{cl}}}
\newcommand{\topb}{\mathit{\tau_{pb}}}



\jairheading{27}{2006}{299-334}{01/06}{11/06}
\ShortHeadings{Properties and Applications of Programs with Monotone and 
Convex Constraints}
{Liu \& Truszczy\'nski}
\firstpageno{299}

\begin{document}

\date{}

\title{Properties and Applications of Programs with Monotone and Convex 
Constraints}

\author{\name Lengning Liu \email lliu1@cs.uky.edu\\
\name Miros\l aw Truszczy\'nski \email mirek@cs.uky.edu\\
\addr Department of Computer Science, University of Kentucky,\\
Lexington, KY 40506-0046, USA}

\maketitle

\begin{abstract}
We study properties of programs with {\em monotone} and {\em convex} 
constraints. We extend to these formalisms concepts and results from 
normal logic programming. They include the notions of strong and
uniform equivalence with their characterizations, tight programs and 
Fages Lemma, program completion and loop formulas. Our results provide
an abstract account of properties of some recent extensions of logic 
programming with aggregates, especially the formalism of {\em lparse}
programs. They imply a method to compute stable models of {\em lparse} 
programs by means of off-the-shelf solvers of pseudo-boolean 
constraints, which is often much faster than the {\em smodels} system.

\end{abstract}

\section{Introduction}

We study programs with {\em monotone} constraints \cite{mt04,mnt03,mnt06} 
and introduce a related class of programs with {\em convex} constraints. 
These formalisms allow constraints to appear in the heads of program 
rules, which sets them apart from other recent proposals for integrating 
constraints into logic programs \cite{p04,pdb04,pdb06,dlv-agg-03,flp04},
and makes them suitable as an abstract basis for formalisms such as {\em
lparse} programs \cite{sns02}. 

We show that several results from normal logic programming generalize to 
programs with monotone constraints. We also discuss how these techniques 
and results can be extended further to the setting of programs with 
convex constraints. We then apply some of our general results to design 
and implement a method to compute stable models of {\em lparse} programs 
and show that it is often much more effective than {\em smodels} 
\cite{sns02}.

Normal logic programming with the semantics of stable models is an 
effective knowledge representation formalism, mostly due to its ability 
to express default assumptions \cite{baral03,GelLeo02}. However, 
modeling numeric constraints on sets in normal logic programming is 
cumbersome, requires auxiliary atoms and leads to large programs
hard to process efficiently. Since such constraints, often called {\em 
aggregates}, are ubiquitous, researchers proposed extensions of normal 
logic programming with explicit means to express aggregates, and 
generalized the stable-model semantics to the extended settings.

Aggregates imposing bounds on weights of sets of atoms and literals,
called {\em weight} constraints, are especially common in practical 
applications and are included in all recent extensions of logic programs 
with aggregates. Typically, these extensions do not allow aggregates to 
appear in the heads of rules. A notable exception is the formalism of 
{\em programs with weight constraints} \cite{nss99,sns02},
which we refer to as {\em lparse} programs\footnote{Aggregates in the 
heads of rules have also been studied recently by \citeA{sp06} and 
\citeA{spt06}.}.

{\em Lparse} programs are logic programs whose rules have weight
constraints in their heads and whose bodies are conjunctions of weight
constraints. Normal logic programs can be viewed as a subclass of {\em 
lparse} programs and the semantics of {\em lparse} programs generalizes 
the stable-model semantics of
normal logic programs \cite{gl88}.
{\em Lparse} programs are 
one of the most commonly used extensions of logic programming with 
weight constraints.

Since rules in {\em lparse} programs may have weight constraints as 
their heads, the concept of one-step provability is nondeterministic, 
which hides direct parallels between {\em lparse} and normal logic 
programs. An explicit connection emerged when
\citeA{mt04} and \citeA{mnt03,mnt06}
introduced {\em logic programs with monotone constraints}. These programs 
allow aggregates in the heads of rules and support nondeterministic 
computations.
\citeA{mt04} and \citeA{mnt03,mnt06}
proposed a 
generalization of the van Emden-Kowalski one-step provability operator 
to account for that nondeterminism, defined supported and stable models 
for programs with monotone constraints that mirror their normal logic 
programming counterparts, and showed encodings of {\em smodels} programs
as programs with monotone constraints.

In this paper, we continue investigations of programs with monotone
constraints. We show that the notions of uniform and strong equivalence
of programs \cite{lpv01,lin02,tu03,ef03} extend to programs with 
monotone constraints, and that their characterizations \cite{tu03,ef03}
generalize, too.

We adapt to programs with monotone constraints the notion 
of a {\em tight} program \cite{el03} and generalize Fages Lemma 
\cite{fag94}.

We introduce extensions of propositional logic with monotone 
constraints. We define the completion of a monotone-constraint
program with respect to this logic, and generalize the notion of a 
loop formula. We then prove the loop-formula characterization of stable 
models of programs with monotone constraints, extending to the setting 
of monotone-constraint programs results obtained for normal logic 
programs
by \citeA{cl78} and \citeA{lz02}.

Programs with monotone constraints make explicit references to the
default negation operator. We show that by allowing a more general class
of constraints, called {\em convex}, default negation can be eliminated 
from the language. We argue that all results in our paper extend to 
programs with convex constraints.

Our paper shows that programs with monotone and convex constraints have 
a rich theory that closely follows that of normal logic programming. 
It implies that programs with monotone and convex constraints form
an abstract generalization of extensions of normal logic programs. In 
particular, all results we obtain in the abstract setting of programs
with monotone and convex constraints specialize to {\em lparse} programs 
and, in most cases, yield results that are new.

These results have practical implications. The properties of the program 
completion and loop formulas, when specialized to the class of {\em 
lparse} programs, yield a method to compute stable models of {\em 
lparse} programs by means of solvers of {\em pseudo-boolean} constraints,
developed by the propositional satisfiability and integer programming
communities \cite{es03,arms02,wal97,pbcomp05,lt03}. We describe 
this method in detail and present experimental results on its performance. 
The results show that our method on problems we used for testing
typically outperforms {\em smodels}.

\section{Preliminaries}
\label{prel}

We consider the propositional case only and assume a fixed set $\At$
of propositional atoms. It does not lead to loss of generality, as it 
is common to interpret programs with variables in terms of their 
propositional groundings.

The definitions and results we present in this section come from
papers by \citeA{mt04} and \citeA{mnt06}.
Some of them 
are more general as we allow constraints with 
infinite domains and programs with inconsistent constraints in the 
heads.

\noindent
{\bf Constraints.} A {\em constraint} is an expression $A=(X,C)$, 
where $X\subseteq \At$ and $C\subseteq {\cal P}(X)$ (${\cal P}(X)$ 
denotes the powerset of $X$). We call the set $X$ the {\em domain} 
of the constraint $A=(X,C)$ and denote it by $\dom(A)$. Informally
speaking, a constraint $(X,C)$ describes a property of subsets of its 
domain, with $C$ consisting precisely of these subsets of $X$ that 
{\em satisfy} the constraint (have property) $C$.

In the paper, we identify truth assignments (interpretations) with the 
sets of atoms they assign the truth value {\em true}. That is, given an
interpretation $M\subseteq \At$, we have $M\models a$ if and only if $a
\in M$. We say that an interpretation $M \subseteq\At$ {\em satisfies} a 
constraint $A=(X,C)$ ($M\models A$), if $M\cap X \in C$. Otherwise, 
$M$ does not satisfy $A$, ($M\not \models A$).

A constraint $A=(X,C)$ is {\em consistent} if there is $M$ such
that $M\models A$. Clearly, a constraint $A=(X,C)$ is consistent if and
only if $C\not= \emptyset$.

We note that propositional atoms can be regarded as constraints. Let $a
\in \At$ and $M\subseteq \At$. We define $C(a)=(\{a\},\{\{a\}\})$. It is
evident that $M\models C(a)$ if and only if $M\models a$. Therefore, in 
the paper we often write $a$ as a shorthand for the constraint $C(a)$.

\noindent
{\bf Constraint programs.} Constraints are building blocks of rules and 
programs.
\citeA{mt04}
defined {\em constraint programs} as sets of {\em 
constraint} rules 
\begin{equation}
\label{eq1a}
A \leftarrow A_1, \ldots, A_k, \n(A_{k+1}),\ldots,\n(A_m)
\end{equation}
where $A$, $A_1,\ldots,A_n$ are constraints and $\n$ is the {\em default 
negation} operator. 

In the context of constraint programs, we refer to constraints and
negated constraints as {\em literals}. Given a rule $r$ of the form
(\ref{eq1a}), the constraint (literal) $A$ is the {\em head} of $r$
and the set $\{A_1,\ldots,$ $A_k,\ldots,\n(A_{k+1}),\ldots,\n(A_m)\}$ 
of literals is the {\em body} of $r$\footnote{Sometimes we view the 
body of a rule as the {\em conjunction} of its literals.}. We denote 
the head and the body of $r$ by $\hd(r)$ and $\bd(r)$, respectively. 
We define the the {\em headset} of $r$, written $\hs(r)$, as the domain 
of the head of $r$. That is, $\hs(r)=\dom(\hd(r))$.

For a constraint program $P$, we denote by $\At(P)$ the set of atoms 
that appear in the domains of constraints in $P$. We define the {\em 
headset} of $P$, written $\hs(P)$, as the union of the headsets of all 
rules in $P$. 

\noindent
{\bf Models.}
The concept of satisfiability extends in a standard way to literals 
$\n(A)$ ($M\models \n(A)$ if $M\not\models A$), to sets (conjunctions) 
of literals and, finally, to constraint programs.

\noindent
{\bfseries {\slshape M}-applicable rules.} Let $M\subseteq \At$ be an 
interpretation. A rule (\ref{eq1a}) is {\em $M$-applicable} if $M$ 
satisfies every literal in $\bd(r)$. We denote by $P(M)$ the set of 
all $M$-applicable rules in $P$.

\noindent
{\bf Supported models.}
Supportedness is a property of models. Intuitively, every atom $a$ in a 
supported model must have ``reasons'' for being ``in''. Such reasons 
are $M$-applicable rules whose heads contain $a$ in their domains. 
Formally, let $P$ be a constraint program and $M$ a subset of $\At(P)$. 
A model $M$ of $P$ is {\em supported} if $M\subseteq \hs(P(M))$.

\noindent
{\bf Examples.}
  We illustrate the concept with examples. Let $P$ be the constraint 
  program that consists of the following two rules:
  \begin{quote}
  $(\{c, d, e\}, \{\{c\}, \{d\}, \{e\}, \{c,d,e\}\})\leftarrow $\\
  $(\{a, b\}, \{\{a\}, \{b\}\}) \leftarrow (\{c, d\},\{\{c\}, \{c, 
  d\}\}), \n((\{e\}, \{\{e\}\}))$
  \end{quote}

  A set $M=\{a,c\}$ is a model of $P$ as $M$ satisfies the heads of the
  two rules. Both rules in $P$ are $M$-applicable. The first of them
  provides the support for $c$, the second one --- for $a$. Thus, $M$ is
  a supported model.

  A set $M'=\{a, c, d, e\}$ is also a model of $P$. However, $a$ has no
  support in $P$. Indeed, $a$ only appears in the headset of the second 
  rule. This rule is not $M'$-applicable and so, it does not support
  $a$. Therefore, $M'$ is not a supported model of $P$.
  \hfill$\eoe$
  
\noindent
{\bf Nondeterministic one-step provability.} Let $P$ be a constraint 
program and $M$ a set of atoms. A set $M'$ is {\em nondeterministically 
one-step provable} from $M$ by means of $P$, if $M'\subseteq \hs(P(M))$ 
and $M' \models \hd(r)$, for every rule $r$ in $P(M)$.

The {\em nondeterministic one-step provability operator} $T_P^{nd}$ for 
a program $P$ is an operator on ${\cal P}(\At)$ such that for every $M 
\subseteq \At$, $T_P^{nd}(M)$ consists of all sets that are 
nondeterministically one-step provable from $M$ by means of $P$.

The operator $T_P^{nd}$ is {\em nondeterministic} as it assigns to each 
$M\subseteq \At$ a {\em family} of subsets of $\At$, each being a 
possible outcome of applying $P$ to $M$. In general, $T_P^{nd}$ is {\em 
partial}, since there may be sets $M$ such that $T_P^{nd}(M)=\emptyset$ 
(no set can be derived from $M$ by means of $P$). For instance, if 
$P(M)$ contains a rule $r$ such that $\hd(r)$ is inconsistent, then 
$T_P^{nd}(M)=\emptyset$.

\noindent
{\bf Monotone 
constraints.} A constraint $(X,C)$ is 
{\em monotone} if $C$ is closed under superset, that is, for every $W, 
Y \subseteq X$, if $W \in C$ and $W \subseteq Y$ then $Y\in C$. 

Cardinality and weight constraints provide examples of monotone
constraints. Let $X$ be a {\em finite} set and let $C_k(X)=\{Y\colon 
Y \subseteq X,\ k\leq |Y|\}$, where $k$ is a non-negative integer. 
Then $(X,C_k(X))$ is a constraint expressing the property that a 
subset of $X$ has at least $k$ elements. We call it a {\em lower-bound 
cardinality constraint} on $X$ and denote it by $kX$. 

A more general class of constraints are {\em weight constraints}. Let 
$X$ be a finite set, say $X=\{x_1,\ldots,x_n\}$, and let $w, w_1,\ldots,
w_n$ be non-negative reals. We interpret each $w_i$ as the {\em weight} 
assigned to $x_i$. A {\em lower-bound weight constraint} is a constraint 
of the form $(X,C_w)$, where $C_w$ consists of those subsets of $X$ 
whose total weight (the sum of weights of elements in the subset) is at 
least $w$. We write it as
\[
w[x_1=w_1,\ldots, x_n=w_n].
\]

If all weights are equal to 1 and $w$ is an integer, weight constraints 
become cardinality constraints. We also note that the constraint $C(a)$
 is a cardinality constraint $1\{a\}$ and also a weight constraint 
$1[a=1]$. Finally, we observe that lower-bound cardinality and weight 
constraints are monotone.  


Cardinality and weight constraints (in a somewhat more general form)
appear in the language of {\em lparse} programs \cite{sns02}, which
we discuss later in the paper. The 
notation we adopted for these constraints in this paper follows
the one proposed by \citeA{sns02}. 

We use cardinality and weight constraints in some of our examples. 
They are also the focus of the last part of the paper, where we use 
our abstract results to design a new algorithm to compute models of 
{\em lparse} programs. 

\noindent
{\bf Monotone-constraint programs.} We call constraint programs built 
of monotone constraints  --- {\em monotone-constraint programs} or 
{\em programs with monotone constraints}. That is, monotone-constraint 
programs consist of rules of rules of the form (\ref{eq1a}),
where $A$, $A_1,\ldots,A_m$ are {\em monotone} constraints. 

From now on, unless explicitly stated otherwise, programs we consider
are monotone-constraint programs.

\subsection{Horn Programs and Bottom-up Computations}

Since we allow constraints with infinite domains and inconsistent 
constraints in heads of rules, the results given in this subsection are 
more general than their counterparts
by \citeA{mt04} and \citeA{mnt03,mnt06}.
Thus, for the
sake of completeness, we present them with proofs.

A rule (\ref{eq1a}) is {\em Horn} if $k=m$ (no occurrences of the
negation operator in the body or, equivalently, only monotone 
constraints). A constraint program is {\em Horn} if every rule in the 
program is Horn.

With a Horn constraint program we associate {\em bottom-up} computations, 
generalizing the corresponding notion of a bottom-up computation for
a normal Horn program. 

\begin{definition}
\label{defPC}
Let $P$ be a Horn program. A {\em $P$-computation} is a (transfinite)
sequence $\langle X_\alpha\rangle$ such that
\begin{enumerate}
\item $X_0 = \emptyset$,
\item for every ordinal number $\alpha$, $X_\alpha\subseteq
X_{\alpha+1}$ and $X_{\alpha+1} \in T_P^{nd}(X_\alpha)$, 
\item for every {\em limit} ordinal $\alpha$, $X_{\alpha}
=\bigcup_{\beta<\alpha} X_\beta$.
\end{enumerate}
\end{definition}

Let $t=\langle X_\alpha \rangle$ be a $P$-computation. Since for every 
$\beta < \beta'$, $X_\beta\subseteq X_{\beta'} \subseteq \At$, 
there is a least ordinal number $\alpha_t$ such that $X_{\alpha_t
+1} = X_{\alpha_t}$, in other words, a least ordinal when the
$P$-computation stabilizes. We refer to $\alpha_t$ as the {\em length} of
the $P$-computation  $t$.

\noindent
{\bf Examples.}
Here is a simple example showing that some programs have computations
of length exceeding $\omega$ and so, the transfinite induction in the
definition cannot be avoided. Let $P$ be the program consisting of the 
following rules:

\begin{quote}
$(\{a_0\},\{\{a_0\}\}) \leftarrow .$\\
$(\{a_i\},\{\{a_i\}\}) \leftarrow (X_{i-1},\{X_{i-1}\})$, 
 for $i=1,2,\ldots$\\
$(\{a\},\{\{a\}\}) \leftarrow (X_\infty,\{X_\infty\})$,
\end{quote}
where $X_i=\{a_0,\ldots a_i\}$, $0\leq i$, and $X_\infty=\{a_0,a_1,
\ldots\}$.
Since the body of the last rule contains a constraint with an infinite
domain $X_\infty$, it does not become applicable in any finite step of 
computation. However, it does become applicable in the step $\omega$ and 
so, $a\in X_{\omega+1}$. Consequently, $X_{\omega+1}\not=X_\omega$.
\hfill$\eoe$

For a $P$-computation $t=\langle X_\alpha\rangle$, we call $\bigcup_
{\alpha} X_\alpha$ the {\em result} of the computation and denote it by 
$R_t$. Directly from the definitions, it follows that
$R_t=X_{\alpha_t}$.

\begin{proposition}
\label{propresmod}
Let $P$ be a Horn constraint program and $t$ a $P$-computation. 
Then $R_t$ is a supported model of $P$.
\end{proposition}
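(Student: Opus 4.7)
The plan is to observe that the result $R_t$ of any $P$-computation is itself a fixed point of the nondeterministic one-step provability operator, from which both required properties follow at once. The key identity I would establish first is $R_t \in T_P^{nd}(R_t)$.

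To obtain this identity, I would invoke the definition of $\alpha_t$ as the least ordinal at which the computation stabilizes, i.e.\ $X_{\alpha_t+1} = X_{\alpha_t}$, together with the remark immediately preceding the proposition that $R_t = X_{\alpha_t}$. Combining these gives $R_t = X_{\alpha_t+1}$. Clause~(2) of Definition~\ref{defPC} applied at $\alpha_t$ asserts $X_{\alpha_t+1} \in T_P^{nd}(X_{\alpha_t})$, and under the identities just noted this reads $R_t \in T_P^{nd}(R_t)$.

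Unpacking the definition of $T_P^{nd}$ then yields two consequences: (i) $R_t \subseteq \hs(P(R_t))$, and (ii) $R_t \models \hd(r)$ for every $r \in P(R_t)$. Item (i) is exactly the support condition, once $R_t$ is known to be a model. For the model property, I would take an arbitrary rule $r \in P$ and split on cases: if $R_t \not\models \bd(r)$, then $r$ is vacuously satisfied; otherwise $r$ is $R_t$-applicable, so $r \in P(R_t)$, and (ii) gives $R_t \models \hd(r)$.

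There is no real obstacle once the fixed-point identity is noticed. The only subtlety is ensuring that $\alpha_t$ is well defined, which rests on the standard cardinality argument that a non-decreasing chain of subsets of $\At$ indexed by all ordinals must stabilize; this is alluded to in the paragraph just before the proposition. The Horn hypothesis itself is used only implicitly, through the fact that the notion of a $P$-computation is defined only for Horn programs in Definition~\ref{defPC}; the rest of the reasoning is a direct unwinding of the definitions of $T_P^{nd}$, support, and model.
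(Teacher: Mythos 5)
Your proof is correct, and for half of the statement it takes a genuinely cleaner route than the paper. For the model property both arguments are essentially the same: from $R_t=X_{\alpha_t}=X_{\alpha_t+1}$ and clause~(2) of Definition~\ref{defPC} one gets that every $R_t$-applicable rule has its head satisfied by $X_{\alpha_t+1}=R_t$. The difference is in the support condition $R_t\subseteq \hs(P(R_t))$. The paper proves it by a transfinite induction along the computation, showing $X_\alpha\subseteq \hs(P(R_t))$ for every $\alpha$; the successor step there needs the inclusion $\hs(P(X_\beta))\subseteq \hs(P(R_t))$, which is where the monotonicity of the body constraints (the Horn hypothesis) is actually invoked. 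You instead extract everything from the single fixed-point identity $R_t\in T_P^{nd}(R_t)$, whose first clause is literally $R_t\subseteq \hs(P(R_t))$; this avoids the induction entirely and, as you note, uses the Horn assumption only through the fact that $P$-computations are defined for Horn programs. What the paper's longer argument buys is the slightly stronger intermediate fact that every stage $X_\alpha$ is already supported by $P(R_t)$, but that is not needed for the proposition itself. Both proofs equally rely on the unproved assertions preceding the proposition (existence of $\alpha_t$ and $R_t=X_{\alpha_t}$), so your reliance on them is not a gap relative to the paper.
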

\begin{proof}
   Let $M=R_t$ be the result of a $P$-computation $t = \langle
   X_\alpha\rangle$. We need to show that: (1) $M$ is a model of $P$;
   and (2) $M\subseteq \hs(P(M))$.

   \noindent
   (1) Let us consider a rule $r\in P$ such that $M\models \bd(r)$. Since
   $M=R_t=X_{\alpha_t}$ (where $\alpha_t$ is the length of $t$),
   $X_{\alpha_t} \models \bd(r)$. Thus, $X_{\alpha_t+1}\models \hd(r)$.
   Since $M=X_{\alpha_t+1}$, $M$ is a model of $r$ and, consequently,
   of $P$, as well.

   \noindent
   (2) We will prove by induction that, for every set $X_\alpha$ in the
   computation $t$, $X_\alpha\subseteq \hs(P(M))$. The base case holds
   since $X_0 = \emptyset \subseteq \hs(P(M))$.

   If $\alpha=\beta+1$, then $X_\alpha \in T_P^{nd}(X_{\beta})$. It
   follows that $X_\alpha \subseteq \hs(P(X_{\beta}))$. Since $P$ is
   a Horn program and $X_{\beta}\subseteq M$, $\hs(P(X_{\beta}))
   \subseteq \hs(P(M))$. Therefore, $X_\alpha \subseteq \hs(P(M))$.

   If $\alpha$ is a limit ordinal, then $X_\alpha=\bigcup_{\beta<\alpha}
   X_\beta$. By the induction hypothesis, for every $\beta<\alpha$,
   $X_\beta\subseteq \hs(P(M))$. Thus, $X_\alpha\subseteq \hs(P(M))$.
   By induction, $M\subseteq \hs(P(M))$. 
\end{proof}

\noindent
{\bf Derivable models.} We use computations to define {\em derivable} 
models of Horn constraint programs. A set $M$ of atoms is a {\em 
derivable model} of a Horn constraint program $P$ if for some
$P$-computation $t$, we have $M=R_t$. By Proposition \ref{propresmod}, 
derivable models of $P$ are supported models of $P$ and so, also models 
of $P$.

  Derivable models are similar to the least model of a normal Horn
  program in that both can be derived from a program by means of a
  bottom-up computation. However, due to the nondeterminism of 
  bottom-up computations of Horn constraint programs, derivable models 
  are not in general unique nor minimal.
  
\noindent
{\bf Examples.}
  For example, let $P$ be the following Horn constraint program:
  \[
  P = \{ 1\{a, b\} \leftarrow \}
  \]
  Then $\{a\}$, $\{b\}$ and $\{a, b\}$ are its derivable models. The
  derivable models $\{a\}$ and $\{b\}$ are minimal models of $P$. 
  The third derivable model, $\{a, b\}$, is not a minimal model of $P$.
  \hfill$\eoe$
  
Since inconsistent monotone constraints may appear in the heads of Horn
rules, there are Horn programs $P$ and sets $X\subseteq \At$, such that 
$T_P^{nd}(X)=\emptyset$. Thus, some Horn constraint programs have no 
computations and no derivable models. However, if a Horn constraint 
program has models, the existence of computations and derivable models 
is guaranteed.

To see this, let $M$ be a model of a Horn constraint program $P$. We 
define a {\em canonical computation} $t^{P,M} = \langle X_\alpha^{P,M}
\rangle$ by specifying the choice of the next set in the computation 
in part (2) of Definition \ref{defPC}. Namely, for every ordinal 
$\beta$, we set 
\[
X_{\beta+1}^{P,M} = \hs(P(X_\beta^{P,M})) \cap M. 
\]
That is, we include in $X_{\alpha}^{P,M}$ {\em all} those atoms 
occurring in the heads of $X_\beta^{P,M}$-applicable rules that belong 
to $M$. We denote the result of $t^{P,M}$ by $Can(P,M)$. Canonical 
computations are indeed $P$-computations.

\begin{proposition}
\label{canIsComp}
Let $P$ be a Horn constraint program. If $M \subseteq \At$ is a 
model of $P$, the sequence $t^{P,M}$ is a $P\mbox{-}$computation.
\end{proposition}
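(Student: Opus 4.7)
The plan is to verify the three clauses of Definition \ref{defPC} directly for $t^{P,M} = \langle X_\alpha^{P,M}\rangle$. Clause (1) is immediate ($X_0^{P,M}=\emptyset$ is stipulated), and clause (3) at limit ordinals is built into the definition of $t^{P,M}$ (following part (3) of the definition). So the substantive work is clause (2): showing $X_\alpha^{P,M} \subseteq X_{\alpha+1}^{P,M}$ and $X_{\alpha+1}^{P,M} \in T_P^{nd}(X_\alpha^{P,M})$ at every ordinal.

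First, I would prove by transfinite induction on $\alpha$ the combined statement that $X_\beta^{P,M} \subseteq X_\alpha^{P,M} \subseteq M$ for all $\beta \leq \alpha$. The containment $X_\alpha^{P,M} \subseteq M$ is transparent from the defining equation $X_{\alpha+1}^{P,M} = \hs(P(X_\alpha^{P,M})) \cap M$ and from the union at limits. For the monotonicity $X_\gamma^{P,M} \subseteq X_{\gamma+1}^{P,M}$, I would pick $a \in X_\gamma^{P,M}$ and trace it back to a rule $r$ that witnessed its inclusion at some earlier successor step $\delta+1 \leq \gamma$; that rule satisfies $a \in \hs(r)$ and was $X_\delta^{P,M}$-applicable. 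The critical ingredient here is that $P$ is Horn, so $\bd(r)$ is a conjunction of monotone constraints; combined with the inductive hypothesis $X_\delta^{P,M} \subseteq X_\gamma^{P,M}$, this gives $r \in P(X_\gamma^{P,M})$. Then $a \in \hs(P(X_\gamma^{P,M})) \cap M = X_{\gamma+1}^{P,M}$.

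With monotonicity of the sequence in hand, I would turn to $X_{\alpha+1}^{P,M} \in T_P^{nd}(X_\alpha^{P,M})$. The containment $X_{\alpha+1}^{P,M} \subseteq \hs(P(X_\alpha^{P,M}))$ is immediate from the definition. For the remaining requirement that $X_{\alpha+1}^{P,M} \models \hd(r)$ for every $r \in P(X_\alpha^{P,M})$, I would use that $X_\alpha^{P,M} \subseteq M$ together with monotonicity of body constraints to conclude $M \models \bd(r)$; since $M$ is a model of $P$, we get $M \models \hd(r)$. Writing $\hd(r)=(\hs(r),C)$, the key calculation is $X_{\alpha+1}^{P,M} \cap \hs(r) = \hs(P(X_\alpha^{P,M})) \cap M \cap \hs(r) = M \cap \hs(r) \in C$, where the middle equality uses $\hs(r) \subseteq \hs(P(X_\alpha^{P,M}))$. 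This delivers $X_{\alpha+1}^{P,M} \models \hd(r)$.

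The main obstacle I foresee is bookkeeping in the transfinite induction — specifically ensuring the monotonicity of the sequence at limit stages without circularity. The right prophylactic is to prove the conjunction ``$X_\beta^{P,M} \subseteq X_\alpha^{P,M} \subseteq M$ for all $\beta \leq \alpha$'' as a single induction invariant rather than separating the two properties; with this, at a limit $\alpha$ the union definition and the inductive hypothesis applied at every $\gamma<\alpha$ immediately yield both halves, and at a successor $\alpha = \gamma+1$ the ``trace back to an earlier witness'' argument goes through cleanly since any $a \in X_\gamma^{P,M}$ can be pushed to a successor step $\leq \gamma$ (the least stage at which $a$ enters cannot be a limit, as it would contradict minimality).
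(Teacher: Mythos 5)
Your proof is correct and takes essentially the same route as the paper's: both reduce the claim to showing $X_{\alpha+1}^{P,M}\in T_P^{nd}(X_\alpha^{P,M})$ (via $X_\alpha^{P,M}\subseteq M$, monotonicity of body constraints, $M\models P$, and the computation $X_{\alpha+1}^{P,M}\cap\hs(r)=M\cap\hs(r)$) and to a transfinite induction for $X_\alpha^{P,M}\subseteq X_{\alpha+1}^{P,M}$. The only cosmetic difference is that the paper's induction invariant is $X_\alpha^{P,M}\subseteq\hs(P(X_\alpha^{P,M}))\cap M$, which yields monotonicity of the sequence directly, whereas you obtain it by tracing each atom back to a witnessing rule at an earlier successor stage; both arguments are sound.
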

\begin{proof}
As $P$ and $M$ are fixed, to simplify the notation in the proof we will 
write $X_\alpha$ instead of $X^{P,M}_\alpha$.

To prove the assertion, it suffices to show that
(1) $\hs(P(X_{\alpha})) \cap M \in T_P^{nd}(X_{\alpha})$,
and (2) $X_{\alpha} \subseteq \hs(P(X_{\alpha})) \cap M$, for every ordinal 
$\alpha$.

\noindent
(1) Let $X\subseteq M$ and $r\in P(X)$. Since all constraints in $\bd(r)$ 
are monotone, and $X\models \bd(r)$, $M\models \bd(r)$, as well. From 
the fact that $M$ is a model of $P$ it follows now that $M\models 
\hd(r)$. Consequently, $M\cap \hs(P(X)) \models \hd(r)$ for every $r\in 
P(X)$. Since $M\cap \hs(P(X))\subseteq \hs(P(X))$, 
\[
M\cap \hs(P(X)) \in T^{nd}_P(X).
\]
Directly from the definition of the canonical computation for $P$ and 
$M$ we obtain that for every ordinal $\alpha$, $X_\alpha\subseteq M$.
Thus, (1), follows.

\noindent
(2) We proceed by induction. The basis is evident as $X_0=\emptyset$.
Let us consider an ordinal $\alpha > 0$ and let us assume that (2) holds 
for every ordinal $\beta <\alpha$. If $\alpha=\beta+1$, then 
$X_\alpha=X_{\beta+1}=\hs(P(X_\beta)) \cap M$. Thus, by the induction 
hypothesis, $X_\beta\subseteq X_{\alpha}$. Since $P$ is a Horn 
constraint program, it follows that 
$P(X_\beta)\subseteq P(X_{\alpha})$. 
Thus
\[
X_\alpha=X_{\beta+1}=\hs(P(X_\beta))\cap M \subseteq \hs(P(X_{\alpha}))
\cap M.
\]
If $\alpha$ is a limit ordinal then for every $\beta<\alpha$, $X_\beta
\subseteq X_\alpha$ and, as before, also $P(X_\beta)\subseteq
P(X_{\alpha})$. Thus, by the induction hypothesis for every $\beta<
\alpha$,
\[
X_\beta \subseteq \hs(P(X_\beta))\cap M \subseteq \hs(P(X_\alpha))\cap M,
\]
which implies that 
\[
X_\alpha = \bigcup_{\beta<\alpha} X_\beta \subseteq \hs(P(X_\alpha))\cap
M.
\]
\end{proof}

Canonical computations have the following {\em fixpoint} property.

\begin{proposition}\label{propnewfp}
Let $P$ be a Horn constraint program. For every model $M$ of $P$, we
have
\[
\hs(P(Can(P,M)))\cap M = Can(P,M).
\]
\end{proposition}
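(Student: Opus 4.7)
The plan is to unpack definitions and exploit the fact that the canonical computation, being a genuine $P$-computation (by Proposition \ref{canIsComp}), must stabilize at some ordinal $\alpha_t$. Write $\langle X_\alpha \rangle$ for $\langle X_\alpha^{P,M} \rangle$. By the remarks following Definition \ref{defPC}, the sequence $\langle X_\alpha \rangle$ is monotonically increasing and bounded by $\At$, so there is a least ordinal $\alpha_t$ with $X_{\alpha_t+1} = X_{\alpha_t}$, and $Can(P,M) = R_{t^{P,M}} = X_{\alpha_t}$.

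Next I would plug in the recursive rule that defines the canonical computation at a successor step: $X_{\alpha_t+1} = \hs(P(X_{\alpha_t})) \cap M$. Combining this with the stabilization equality $X_{\alpha_t+1} = X_{\alpha_t}$ and the identification $Can(P,M) = X_{\alpha_t}$ immediately yields
\[
\hs(P(Can(P,M))) \cap M = \hs(P(X_{\alpha_t})) \cap M = X_{\alpha_t+1} = X_{\alpha_t} = Can(P,M).
\]

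There is essentially no hard step here: the proposition is a direct consequence of the definition of $t^{P,M}$ together with the assertion, already established in Proposition \ref{canIsComp}, that $t^{P,M}$ really is a $P$-computation (so that talking about its length and its result is legitimate). The only point requiring any care is verifying that the stabilization ordinal exists; this follows from monotonicity of $\langle X_\alpha \rangle$ together with $X_\alpha \subseteq \At$ for all $\alpha$, a standard cardinality argument. Once that is in hand, the rest is a one-line computation.
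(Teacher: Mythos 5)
Your proof is correct and is essentially identical to the paper's: both identify $Can(P,M)$ with $X_{\alpha_t}$ at the stabilization ordinal and then read off the claim from the defining recursion $X_{\alpha_t+1}=\hs(P(X_{\alpha_t}))\cap M$ together with $X_{\alpha_t+1}=X_{\alpha_t}$. No further comment is needed.
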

\begin{proof}
Let $\alpha$ be the length of the canonical computation $t^{P,M}$.
Then, $X^{P,M}_{\alpha+1}=X^{P,M}_\alpha=Can(P,M)$. Since 
$X_{\alpha+1}= \hs(X_\alpha)\cap M$, the assertion follows. 
\end{proof}

We now gather properties of derivable models that extend properties of 
the least model of normal Horn logic programs.

\begin{proposition}
\label{grtdm}
Let $P$ be a Horn constraint program. Then:
\begin{enumerate}
\item For every model $M$ of $P$, $Can(P,M)$ is a greatest derivable 
model of $P$ contained in $M$
\item A model $M$ of $P$ is a derivable model if and only if $M=Can(P,M)$
\item If $M$ is a minimal model of $P$ then $M$ is a derivable model of
$P$.
\end{enumerate}
\end{proposition}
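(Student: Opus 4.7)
The plan is to prove part (1) first, then derive parts (2) and (3) essentially for free from it together with the two preceding propositions.

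For part (1), I would begin by noting that Proposition \ref{canIsComp} tells us $t^{P,M}$ is an actual $P$-computation, so its result $Can(P,M)$ is by definition a derivable model of $P$. Containment $Can(P,M)\subseteq M$ is immediate by transfinite induction from the recursive definition $X^{P,M}_{\beta+1}=\hs(P(X^{P,M}_\beta))\cap M$ (the base and limit cases are trivial, and the successor case intersects with $M$ explicitly). The substantive task is the maximality claim: if $N$ is any derivable model of $P$ with $N\subseteq M$, then $N\subseteq Can(P,M)$. To prove this, let $s=\langle Y_\alpha\rangle$ be a $P$-computation with $R_s=N$, and show by transfinite induction that $Y_\alpha\subseteq X^{P,M}_\alpha$ for every ordinal $\alpha$. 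At a successor stage, $Y_{\alpha+1}\in T_P^{nd}(Y_\alpha)$, so $Y_{\alpha+1}\subseteq \hs(P(Y_\alpha))$; and since $Y_\alpha\subseteq X^{P,M}_\alpha$ and $P$ is Horn with monotone body constraints, every $Y_\alpha$-applicable rule is $X^{P,M}_\alpha$-applicable, giving $\hs(P(Y_\alpha))\subseteq \hs(P(X^{P,M}_\alpha))$. Combined with $Y_{\alpha+1}\subseteq N\subseteq M$, this yields $Y_{\alpha+1}\subseteq \hs(P(X^{P,M}_\alpha))\cap M=X^{P,M}_{\alpha+1}$. The limit case is a direct union computation. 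Taking unions over all $\alpha$ then gives $N\subseteq Can(P,M)$.

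Part (2) is a short consequence of part (1). For the forward direction, if $M$ is itself a derivable model of $P$, then it is a derivable model contained in $M$, so by the maximality from (1), $M\subseteq Can(P,M)$; combined with the always-valid $Can(P,M)\subseteq M$, we get equality. The converse is trivial because $Can(P,M)$ is, by (1), a derivable model of $P$, so if $M=Can(P,M)$ then $M$ is derivable. Part (3) follows from Proposition \ref{propresmod} together with (2): for a minimal model $M$, the set $Can(P,M)$ is a model of $P$ (as it is a derivable model, hence supported, hence a model), it is contained in $M$, so by minimality $Can(P,M)=M$, and then (2) delivers that $M$ is derivable.

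The main obstacle I anticipate is the transfinite-induction step in the maximality argument for (1) — specifically making sure the Horn assumption is invoked correctly. Without Horn-ness, an $X$-applicable rule need not remain applicable when $X$ grows (because negated literals in the body may become falsified), and the implication $Y_\alpha\subseteq X^{P,M}_\alpha \Rightarrow P(Y_\alpha)\subseteq P(X^{P,M}_\alpha)$ would fail. Since the current setting is Horn and all body constraints are monotone, this inclusion does hold, and the inductive step goes through cleanly. Once that is in place, the rest of the argument is bookkeeping.
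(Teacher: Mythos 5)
Your proposal is correct and follows essentially the same route as the paper: the same transfinite induction showing $Y_\alpha\subseteq X^{P,M}_\alpha$ (using Horn-ness/monotonicity to get $P(Y_\alpha)\subseteq P(X^{P,M}_\alpha)$ at successor stages) establishes the maximality in part (1), and parts (2) and (3) are then derived exactly as in the paper from part (1), Proposition \ref{canIsComp}, and Proposition \ref{propresmod}. Your remark about where the Horn assumption is indispensable matches the role it plays in the paper's argument.
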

\begin{proof}
   (1) Let $M'$ be a derivable model of $P$ such that $M'\subseteq M$.
   Let $T=\langle X_\alpha\rangle$ be a $P$-derivation such that $M'=R_t$.
   We will prove that for every ordinal $\alpha$, $X_\alpha\subseteq
   X^{P,M}_\alpha$. We proceed by transfinite induction. Since $X_0=X^{P,
   M}_0=\emptyset$, the basis for the induction is evident. Let us
   consider an ordinal $\alpha>0$ and assume that for every ordinal
   $\beta <\alpha$, $X_\beta\subseteq X^{P,M}_\beta$.

   If $\alpha=\beta+1$, then $X_{\alpha} \in T^{nd}_P(X_\beta)$ and
   so, $X_{\alpha} \subseteq \hs(P(X_\beta))$. By the induction
   hypothesis and by the monotonicity of the constraints in the bodies of
   rules in $P$, $X_{\alpha}\subseteq \hs(P(X^{P,M}_\beta))$. Thus, since
   $X_\alpha\subseteq R_t=M'\subseteq M$,
   \[
   X_\alpha\subseteq \hs(P(X^{P,M}_\beta))\cap M = X^{P,M}_{\beta+1}=
   X^{P,M}_{\alpha}.
   \]
   The case when $\alpha$ is a limit ordinal is straightforward as
   $X_\alpha=\bigcup_{\beta<\alpha}X_\beta$ and
   $X^{P,M}_\alpha=\bigcup_{\beta<\alpha}X^{P,M}_\beta$.

   \noindent
   (2) ($\Leftarrow$) If $M=Can(P,M)$, then $M$ is the result of the
   canonical $P$-derivation for $P$ and $M$. In particular, $M$ is a
   derivable model of $P$.

   \noindent
   ($\Rightarrow$) if $M$ is a derivable model of $P$, then $M$ is also a
   model of $P$. From (1) it follows that $Can(P,M)$ is the greatest
   derivable model of $P$ contained in $M$. Since $M$ itself is derivable,
   $M=Can(P,M)$.

   \noindent
   (3) From (1) it follows that $Can(P,M)$ is a derivable model of $P$
   and that $Can(P,$ $M) \subseteq M$. Since $M$ is a minimal model,
   $Can(P,M)=M$ and, by $(2)$, $M$ is a derivable model of $P$.
\end{proof}

\subsection{Stable Models}

In this section, we will recall and adapt to our setting the definition
of stable models proposed and studied
by \citeA{mt04} and \citeA{mnt03,mnt06}
Let $P$ be a 
monotone-constraint program and $M$ a subset of $At(P)$. The {\em 
reduct} of $P$, denoted by $P^M$, is a program obtained from $P$ by:
\begin{enumerate}
\item removing from $P$ all rules whose body contains a literal 
$\n(B)$ such that $M\models B$;
\item removing literals $\n(B)$ for the bodies of the remaining 
rules.
\end{enumerate}

The reduct of a monotone-constraint program is Horn since it contains 
no occurrences of default negation. Therefore, the following definition 
is sound.

\begin{definition}
Let $P$ be a monotone-constraint program. A set of atoms $M$ is a {\em 
stable} model of $P$ if $M$ is a derivable model of $P^M$. We denote 
the set of stable models of $P$ by $St(P)$.
\end{definition}

The definitions of the reduct and stable models follow and generalize
those proposed for normal logic programs, since in the setting of Horn 
constraint programs, derivable models play the role of a least model.

As in normal logic programming and its standard extensions, stable
models of monotone-constraint programs are supported models and,
consequently, models.

\begin{proposition}
Let $P$ be a monotone-constraint program. If $M \subseteq At(P)$ is a
stable model of $P$, then $M$ is a supported model of $P$.
\end{proposition}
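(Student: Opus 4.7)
The plan is to leverage the fact, already established in Proposition~\ref{propresmod}, that every derivable model of a Horn program is a supported model of that program. Since $M$ is stable, it is a derivable model of the Horn program $P^M$, hence a supported model of $P^M$; the task is then to transfer both ``$M$ is a model'' and ``$M \subseteq \hs(P(M))$'' from $P^M$ back to $P$. The bridge is a simple bijective correspondence between $M$-applicable rules of $P$ and $M$-applicable rules of $P^M$, which I would establish up front.

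First I would verify the correspondence: a rule $r \in P$ with head $A$ and body $A_1,\ldots,A_k,\n(A_{k+1}),\ldots,\n(A_m)$ survives reduct step~(1) iff $M \not\models A_j$ for all $j > k$, i.e., iff $M$ satisfies every negative literal of $r$; and after reduct step~(2) the resulting rule $r'$ has the same head $A$ and positive body $A_1,\ldots,A_k$. Hence $r \in P(M)$ iff $r' \in P^M(M)$, and in that case $\hd(r') = \hd(r)$ and $\hs(r') = \hs(r)$.

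Next I would show $M$ is a model of $P$. Pick any $r \in P$. If some $\n(A_j)$ in $\bd(r)$ is not satisfied by $M$, then $M \not\models \bd(r)$ and $r$ holds vacuously. Otherwise $r \in P(M)$, so by the correspondence $r' \in P^M(M)$; since $M$ is a model of $P^M$ (being supported in $P^M$ by Proposition~\ref{propresmod}), $M \models \hd(r') = \hd(r)$, so $M \models r$. Finally, for supportedness, Proposition~\ref{propresmod} gives $M \subseteq \hs(P^M(M))$, and the correspondence yields $\hs(P^M(M)) \subseteq \hs(P(M))$, so $M \subseteq \hs(P(M))$, as required.

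There is no real obstacle here; the only thing to be a little careful about is keeping the reduct's two deletion steps straight and checking that ``positive body satisfied in $P^M$'' together with ``negative body satisfied in $P$'' amounts exactly to ``full body satisfied in $P$''. Monotonicity of constraints plays no role in this particular argument (it was only needed in the Horn-computation results); the proof goes through purely by unwinding definitions once the reduct correspondence is in place.
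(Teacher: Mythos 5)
Your proposal is correct and follows essentially the same route as the paper: apply Proposition~\ref{propresmod} to the Horn reduct $P^M$ to get that $M$ is a supported model of $P^M$, then transfer both modelhood and supportedness back to $P$ via the correspondence between rules of $P^M(M)$ and rules of $P(M)$ with the same head. You merely spell out in more detail the reduct correspondence that the paper states briefly; no substantive difference.
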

\begin{proof}
   Let $M$ be a stable model of $P$. Then, $M$ is a derivable model of 
   $P^M$ and, by Proposition \ref{propresmod}, $M$ is a supported model
   of $P^M$.
   It follows that $M$ is a model of $P^M$. Directly from the
   definition of the reduct it follows that $M$ is a model of $P$.
   
   It also follows that $M \subseteq \hs(P^M(M))$. For every rule $r$ 
   in $P^M(M)$, there is a rule $r'$ in $P(M)$, which has the same head 
   and the same non-negated literals in the body as $r$. Thus, $\hs(P^M
   (M))\subseteq \hs(P(M))$ and, consequently, $M\subseteq \hs(P(M))$. It 
   follows that $M$ is a supported model of $P$.
\end{proof}

\noindent
{\bf Examples.}
  Here is an example of stable models of a monotone-constraint program.
  Let $P$ be a monotone-constraint program that contains the following
  rules:
  \begin{quote}
  $2\{a, b, c\}\leftarrow 1\{a, d\}, \n(1\{c\})$\\
  $1\{b, c, d\}\leftarrow 1\{a\}, \n(3\{a, b, d\}))$\\
  $1\{a\}\leftarrow$
  \end{quote}
  Let $M=\{a, b\}$. Therefore, $M\not\models 1\{c\}$ and $M\not\models
  3\{a, b, d\}$. Hence the reduct $P^M$ contains the following three
  Horn rules:
  \begin{quote}
  $2\{a, b, c\}\leftarrow 1\{a, d\}$\\
  $1\{b, c, d\}\leftarrow 1\{a\}$\\
  $1\{a\}\leftarrow$
  \end{quote}
  Since $M=\{a, b\}$ is a derivable model of $P^M$, $M$ is a stable model
  of $P$.

  Let $M'=\{a, b, c\}$. Then $M'\models 1\{c\}$ and $M\not\models
  3\{a,b,d\}$. Therefore, the reduct $P^{M'}$ contains two Horn rules:
  \begin{quote}
  $1\{b, c, d\}\leftarrow 1\{a\}$\\
  $1\{a\}\leftarrow$
  \end{quote}
  Since $M'=\{a,b,c\}$ is a derivable models of $P^{M'}$, $M'$ is also
  a stable model of $P$. We note that stable models of a
  monotone-constraint program, in general, do not form an anti-chain.
  \hfill$\eoe$

If a normal logic program is Horn then its least model is its (only)
stable model. Here we have an analogous situation.

\begin{proposition}
\label{DerIsStable}
Let $P$ be a Horn monotone-constraint program. Then $M \subseteq At(P)$
is a derivable model of $P$ if and only if $M$ is a stable model of $P$.
\end{proposition}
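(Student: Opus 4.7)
The plan is to observe that, when $P$ is a Horn monotone-constraint program, the reduct construction does nothing: $P^M = P$ for every $M \subseteq \At(P)$. Indeed, a Horn program contains no rule whose body mentions the default negation operator $\n$, so step (1) of the definition of the reduct removes no rules (there is no literal of the form $\n(B)$ in any body), and step (2) removes no literals either. Hence the two programs are literally identical.

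With that observation in hand, the proposition reduces to an unfolding of definitions: $M$ is a stable model of $P$ iff $M$ is a derivable model of $P^M$, and since $P^M = P$, this is exactly the condition that $M$ be a derivable model of $P$. I would write this out as a one-line equivalence in both directions, prefaced by the remark about the reduct. There is essentially no obstacle here; the only thing worth doing carefully is to point explicitly to the two clauses of the reduct construction and note that each is vacuous on Horn input, so that the proof is self-contained and the reader is not left to re-check the definition.
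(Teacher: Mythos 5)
Your proposal is correct and is essentially identical to the paper's own proof, which likewise just notes that $P=P^M$ for every $M$ and then unfolds the definition of a stable model. Spelling out why both clauses of the reduct construction are vacuous on Horn input is a reasonable bit of extra care but changes nothing substantive.
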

\begin{proof}
   For every set $M$ of atoms $P=P^M$. Thus, $M$ is a derivable model of
   $P$ if and only if it is a derivable model of $P^M$ or, equivalently,
   a stable model of $P$. 
\end{proof}

In the next four sections of the paper we show that several fundamental
results concerning normal logic programs extend to the class of
monotone-constraint programs.

\section{Strong and Uniform Equivalence of Monotone-cons\-traint Programs}

  Strong equivalence and uniform equivalence concern the problem of
  replacing some rules in a logic program with others without changing 
  the overall semantics of the program. More specifically, the strong 
  equivalence concerns replacement of rules within {\em arbitrary} 
  programs, and the uniform equivalence concerns replacements of all 
  {\em non-fact} rules. In each case, the stipulation is that the 
  resulting program must have the same stable models as the original 
  one. Strong (and uniform) equivalence is an important concept 
  due to its potential uses in program rewriting and optimization.

  Strong and uniform equivalence have been studied in the literature
  mostly for normal logic programs \cite{lpv01,lin02,tu03,ef03}.

\citeA{tu03}
presented an elegant characterization of strong equivalence
of {\em smodels} programs, and
\citeA{ef03}
described a similar
characterization of uniform equivalence of normal and disjunctive
logic programs. We show that both characterizations can be adapted to
the case of monotone-constraint programs. In fact, one can show that
under the representations of normal logic programs as monotone-constraint 
programs \cite{mnt03,mnt06} our definitions and characterizations
of strong and uniform equivalence reduce to those introduced and
developed originally for normal logic programs. 

\subsection{{\bfseries {\slshape M}}-maximal Models}

A key role in our approach is played by models of Horn constraint
programs satisfying a certain maximality condition.

\begin{definition}
\label{defmax}
Let $P$ be a Horn constraint program and let $M$ be a model of $P$.
A set $N\subseteq M$ such that $N$ is a model of $P$ and $M\cap
\hs(P(N))\subseteq N$ is an {\em $M$-maximal} model of $P$, written 
$N \models_M P$.
\end{definition}

Intuitively, $N$ is an $M$-maximal model of $P$ if $N$ satisfies each
rule $r\in P(N)$ ``maximally'' with respect to $M$. That is, for every
$r\in P(N)$, $N$ contains all atoms in $M$ that belong to $\hs(r)$ ---
the domain of the head of $r$.

To illustrate this notion, let us consider a Horn constraint program
$P$ consisting of a single rule:
\[
1 \{ p, q, r \} \leftarrow 1 \{ s, t \} .
\]
Let $M= \{ p, q, s, t \}$ and $N=\{ p, q, s \}$. One can verify that 
both $M$ and $N$ are models of $P$. Moreover, since the only rule in 
$P$ is $N$-applicable, and $M \cap \{p, q, r\} \subseteq N$, $N$ is 
an $M$-maximal model of $P$. On the other hand, $N'=\{ p, s \}$
is not $M$-maximal even though $N'$ is a model of $P$ and it is
contained in $M$.


There are several similarities between properties of models of
normal Horn programs and $M$-maximal models of Horn constraint programs.
We state and prove here one of them that turns out to be especially 
relevant to our study of strong and uniform equivalence. 
\begin{proposition}
\label{max}
Let $P$ be a Horn constraint program and let $M$ be a model of $P$. Then
$M$ is an $M$-maximal model of $P$ and $Can(P,M)$ is the least 
$M$-maximal model of $P$.
\end{proposition}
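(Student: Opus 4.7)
The plan is to split the proposition into the two assertions and to let the canonical computation of Proposition~\ref{canIsComp} carry the second one.

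First I would dispatch $M \models_M P$ by direct verification against Definition~\ref{defmax}: $M\subseteq M$ is trivial, $M$ is a model of $P$ by hypothesis, and $M\cap \hs(P(M))\subseteq M$ holds because any intersection with $M$ is contained in $M$. So this half is immediate.

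For the second assertion, set $K := Can(P,M)$. I would first argue that $K$ is itself an $M$-maximal model of $P$. The inclusion $K\subseteq M$ falls out of a short transfinite induction on the canonical computation, using the recursion $X_{\beta+1}^{P,M}=\hs(P(X_\beta^{P,M}))\cap M$; that $K$ is a model of $P$ is Proposition~\ref{propresmod}; and the required condition $M\cap \hs(P(K))\subseteq K$ is exactly the inclusion direction of the fixpoint identity in Proposition~\ref{propnewfp}. No new ideas beyond citing those earlier results are needed.

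The substance of the proposition is the minimality clause: any $M$-maximal model $N$ of $P$ satisfies $K\subseteq N$. Here I would fix such an $N$ and prove by transfinite induction that $X_\alpha^{P,M}\subseteq N$ for every ordinal $\alpha$, and then pass to the length of the canonical computation. The base case ($X_0=\emptyset$) and the limit case (union of an ascending chain of subsets of $N$) are automatic. The whole weight of the argument sits in the successor step: assuming $X_\beta^{P,M}\subseteq N$, I exploit that $P$ is Horn and that every body constraint is monotone to lift this to $P(X_\beta^{P,M})\subseteq P(N)$, hence $\hs(P(X_\beta^{P,M}))\subseteq \hs(P(N))$; intersecting with $M$ and then invoking the $M$-maximality property of $N$ gives $X_{\beta+1}^{P,M}=\hs(P(X_\beta^{P,M}))\cap M \subseteq \hs(P(N))\cap M \subseteq N$, closing the induction.

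The only delicate point is this successor step, because it is the single place where the Horn assumption (so that applicability is preserved when the interpretation grows) and the defining inequality $M\cap \hs(P(N))\subseteq N$ of $M$-maximality are both used at once. Everything else is either a definition-chase or a direct appeal to Propositions~\ref{propresmod} and~\ref{propnewfp}.
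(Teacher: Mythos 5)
Your proposal is correct and follows essentially the same route as the paper's proof: direct verification of $M\models_M P$, the fixpoint identity of Proposition~\ref{propnewfp} to show $Can(P,M)$ is $M$-maximal, and a transfinite induction for minimality whose successor step combines the Horn/monotonicity lifting $P(X_\beta)\subseteq P(N)$ with the defining inclusion $\hs(P(N))\cap M\subseteq N$. No discrepancies to report.
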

\begin{proof}
The first claim follows directly from the definition. To prove the
second one, we simplify the notation: we will write $N$ for 
$Can(P,M)$ and $X_\alpha$ for $X_\alpha^{P,M}$.

We first show that $N$ is an $M$-maximal model of $P$. Clearly,
$N\subseteq M$. Moreover, by Proposition \ref{propnewfp},
$\hs(P(N))\cap M= N$. Thus, $N$ is indeed an $M$-maximal model of $P$.

We now show $N$ is the least $M$-maximal model of $P$.

Let $N'$ be any $M$-maximal model of $P$. We will show by transfinite 
induction that $N\subseteq N'$. Since $X_0=\emptyset$, the basis for the
induction holds. Let us consider an ordinal $\alpha > 0$ and let us 
assume that $X_\beta\subseteq N'$, for every $\beta<\alpha$. To show
$N\subseteq N'$, it is sufficient to show that $X_{\alpha}\subseteq N'$.

Let us assume that $\alpha=\beta+1$ for some $\beta<\alpha$. Then, since
$X_\beta\subseteq N'$ and $P$ is a Horn constraint program, we have
$P(X_\beta) \subseteq P(N')$. Consequently,
\[
X_\alpha=X_{\beta+1}= \hs(P(X_\beta))\cap M \subseteq \hs(P(N'))\cap M 
\subseteq N',
\]
the last inclusion follows from the fact that$N'$ is an $M$-maximal
model of $P$.

If $\alpha$ is a limit ordinal, then $X_\alpha=\bigcup_{\beta<\alpha}
X_\beta$ and the inclusion $X_\alpha\subseteq N'$ follows directly from
the induction hypothesis.
\end{proof}

\subsection{Strong Equivalence and SE-models}

Monotone-constraint programs $P$ and $Q$ are {\em strongly equivalent},
denoted by $P \equiv_s Q$, if for every monotone-constraint program $R$,
$P\cup R$ and $Q\cup R$ have the same set of stable models.

To study the strong equivalence of monotone-constraint programs, we
generalize the concept of an {\em SE-model}
due to \citeA{tu03}.

  There are close connections between strong equivalence of normal logic 
  programs and the logic here-and-there. The semantics of the 
  logic here-and-there is given in terms of Kripke models with two words
  which, when rephrased in terms of pairs of interpretations (pairs of 
  sets of propositional atoms), give rise to SE-models.
  
\begin{definition}
\label{defse}
Let $P$ be a monotone-constraint program and let $X, Y$ be sets of
atoms. We say that $(X,Y)$ is an {\em SE-model} of $P$ if the following 
conditions hold: (1) $X \subseteq Y$; (2) $Y \models P$; and (3) $X 
\models_Y P^Y$. We denote by $SE(P)$ the set of all SE-models of $P$.
\end{definition}

\noindent
{\bf Examples.}
  To illustrate the notion of an SE-model of a monotone-constraint 
  program, let $P$ consist of the following two rules:
  \begin{quote}
  $2\{p,q,r\}\leftarrow 1\{q, r\}, \n(3\{p,q,r\})\}$\\
  $1\{p,s\}\leftarrow 1\{p, r\}, \n(2\{p,r\})$
  \end{quote}
  We observe that $M=\{p,q\}$ is a model of $P$. Let $N=\emptyset$.
  Then $N\subseteq M$ and $P^M(N)$ is empty. It follows that $M\cap
  \hs(P^M(N)) =\emptyset\subseteq N$ and so, $N\models_M P^M$. Hence,
  $(N,M)$ is an SE-models of $P$.

  Next, let $N'=\{p\}$. It is clear that $N'\subseteq M$. Moreover, 
  $P^M(N')= \{1\{p,s\}\leftarrow 1\{p, r\}\}$. Hence $M\cap \hs(P^M(N'
  ))=\{p\} \subseteq N'$ and so, $N'\models_M P^M$. That is, $(N',M)$ is 
  another SE-model of $P$.
  \hfill$\eoe$
  
SE-models yield a simple characterization of strong equivalence of
monotone-constraint programs. To state and prove it, we need several
auxiliary results.

\begin{lemma}\label{newlemma}
   Let $P$ be a monotone-constraint program and let $M$ be a model of $P$.
   Then $(M,M)$ and $(Can(P^M,M),M)$ are both SE-models of $P$.
   \label{canse}
\end{lemma}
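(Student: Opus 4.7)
The plan is to verify the three clauses of Definition \ref{defse} for each of the two pairs $(M,M)$ and $(Can(P^M,M),M)$, leveraging Proposition \ref{max} applied to the Horn program $P^M$.

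First I would handle the three clauses for $(M,M)$. Clause (1) is trivial since $M\subseteq M$, and clause (2) holds by hypothesis. For clause (3), I need $M\models_M P^M$. The crucial preliminary step is to show $M\models P^M$: take any rule $r'\in P^M$, which originates from some rule $r\in P$ whose negative body literals $\n(B)$ all satisfy $M\not\models B$. If $M$ does not satisfy the positive body of $r'$, the rule is vacuously satisfied; otherwise $M$ satisfies all of $\bd(r)$ (positive and negative parts), and since $M\models P$, $M\models \hd(r)=\hd(r')$. Once I know $M\models P^M$, Proposition \ref{max} gives immediately that $M$ is an $M$-maximal model of $P^M$, i.e.\ $M\models_M P^M$.

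Next I would turn to $(Can(P^M,M),M)$. Clauses (1) and (2) are easy: clause (2) is again the hypothesis, and clause (1) follows by transfinite induction on the canonical computation $t^{P^M,M}=\langle X_\alpha^{P^M,M}\rangle$, using the recursive definition $X_{\beta+1}^{P^M,M}=\hs(P^M(X_\beta^{P^M,M}))\cap M\subseteq M$, so that $Can(P^M,M)\subseteq M$. Clause (3) is the payoff of Proposition \ref{max}: since $P^M$ is Horn and $M\models P^M$ (as shown above), Proposition \ref{max} yields that $Can(P^M,M)$ is the least $M$-maximal model of $P^M$; in particular it is an $M$-maximal model of $P^M$, i.e.\ $Can(P^M,M)\models_M P^M$.

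There is no real obstacle here; the lemma is essentially a direct application of Proposition \ref{max} to the reduct $P^M$, provided one first checks the (standard) observation that a model of $P$ remains a model of its reduct $P^M$. The whole argument is short and requires no new ideas beyond those already established in the preceding subsection.
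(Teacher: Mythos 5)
Your proposal is correct and follows essentially the same route as the paper: both verify the three clauses of the SE-model definition directly, with the only substantive step --- the $M$-maximality of $Can(P^M,M)$ --- coming from Proposition \ref{max} applied to the Horn reduct $P^M$. You are somewhat more explicit than the paper in checking that $M\models P^M$ and that $Can(P^M,M)\subseteq M$, but these are routine verifications and the underlying argument is identical.
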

\begin{proof}
   The requirements $(1)$ and $(2)$ of an SE-model hold for $(M,M)$.
   Furthermore, since $M$ is a model of $P$, $M\models P^M$. Finally,
   we also have $\hs(P(M))\cap M\subseteq M$. Thus, $M\models_M P^M$.

   Similarly, the definition of a canonical computation and
   Proposition \ref{propresmod}, imply the first two requirements of the
   definition of SE-models for $(Can(P^M,M), M)$. The third
   requirement follows from Proposition \ref{max}.
\end{proof}

\begin{lemma}
\label{sesteq}
   Let $P$ and $Q$ be two monotone-constraint programs such that $SE(P)=
   SE(Q)$. Then $St(P)=St(Q)$.
\end{lemma}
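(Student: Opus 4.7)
The plan is to use the characterization of stable models via canonical computations (Proposition \ref{grtdm}(2)): a set $M$ is a stable model of $P$ iff $M$ is a model of $P^M$ and $M = Can(P^M,M)$. So given $SE(P)=SE(Q)$, I want to show that for any $M\in St(P)$, the equality $M = Can(Q^M,M)$ holds (and then argue by symmetry).

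First I would fix $M\in St(P)$. Because $M$ is a model of $P$, Lemma \ref{canse} gives $(M,M)\in SE(P)=SE(Q)$, so in particular $M$ is a model of $Q$, and hence of $Q^M$. Next I would introduce $N := Can(Q^M,M)$; by Lemma \ref{canse} applied to $Q$ and $M$, the pair $(N,M)$ is an SE-model of $Q$, hence of $P$. Unpacking Definition \ref{defse}, this gives $N\subseteq M$ and $N\models_M P^M$, i.e.\ $N$ is an $M$-maximal model of $P^M$.

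Now I use the key structural fact from Proposition \ref{max}: since $M\in St(P)$, we have $M=Can(P^M,M)$, and this is the \emph{least} $M$-maximal model of $P^M$. Therefore $M\subseteq N$. Combined with $N\subseteq M$, this forces $N=M$, i.e.\ $M=Can(Q^M,M)$, so $M\in St(Q)$ by Proposition \ref{grtdm}(2). A symmetric argument exchanging $P$ and $Q$ yields $St(Q)\subseteq St(P)$, completing the proof.

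The main conceptual obstacle is making the two halves of the SE-model condition do the right work: the SE-model $(M,M)\in SE(Q)$ tells us $M$ is a model of $Q$ (so that $Q^M$ and $Can(Q^M,M)$ are even the right objects to consider), while the SE-model $(Can(Q^M,M),M)\in SE(P)$ is what forces $Can(Q^M,M)$ to sit above the least $M$-maximal model of $P^M$. Everything else is routine once Proposition \ref{max} and Lemma \ref{canse} are in hand; there are no case analyses or constraint-specific manipulations required.
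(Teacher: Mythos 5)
Your proof is correct and follows essentially the same route as the paper: both arguments use Lemma \ref{canse} to place $(M,M)$ and $(Can(Q^M,M),M)$ in the common set of SE-models, then invoke Proposition \ref{max} to identify $M=Can(P^M,M)$ as the least $M$-maximal model of $P^M$, forcing $M=Can(Q^M,M)$. No gaps.
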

\begin{proof}
   If $M\in St(P)$, then $M$ is a model of $P$ and, by
   Lemma \ref{newlemma}, $(M,M)\in SE(P)$. Hence, $(M,M)\in SE(Q)$ and,
   in particular, $M\models Q$. By Lemma \ref{newlemma} again,
   \[
   (Can(Q^M, M),M)\in SE(Q).
   \]
   By the assumption,
   \[
   (Can(Q^M,M),M)\in SE(P)
   \]
   and so, $Can(Q^M,M)\models_M P^M$ or, in other terms, $Can(Q^M,M)$ is
   an $M$-maximal model of $P^M$. Since $M\in St(P)$, $M=Can(P^M,M)$.
   By Proposition \ref{max}, $M$ is the least $M$-maximal model of $P^M$.
   Thus, $M \subseteq Can(Q^M,M)$. On the other hand, we have
   $Can(Q^M,M)\subseteq M$ and so, $M=Can(Q^M,M)$. It follows that $M$
   is a stable model of $Q$. The
   other inclusion can be proved in the same way.
\end{proof}

\begin{lemma}
\label{secupcap}
   Let $P$ and $R$ be two monotone-constraint programs. Then $SE(P\cup R)=
   SE(P)\cap SE(R)$.
\end{lemma}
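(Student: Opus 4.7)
The plan is to unfold the definition of an SE-model on both sides and observe that every clause of it distributes over the union $P\cup R$. Concretely, for $(X,Y)\in SE(P\cup R)$ the three conditions are $X\subseteq Y$, $Y\models P\cup R$, and $X\models_Y (P\cup R)^Y$; I will check that each of these splits as the conjunction of the corresponding clause for $P$ with that for $R$.

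First, the condition $X\subseteq Y$ is independent of the program, so it is shared. Next, $Y\models P\cup R$ is obviously equivalent to $Y\models P$ and $Y\models R$. The third clause requires two observations about reducts: (i) the reduct commutes with union, i.e.\ $(P\cup R)^Y = P^Y\cup R^Y$ (directly from the two deletion rules defining the reduct, which act rule-by-rule); and (ii) for any Horn program the set of $X$-applicable rules also commutes with union, $(P^Y\cup R^Y)(X) = P^Y(X)\cup R^Y(X)$, hence $\hs((P^Y\cup R^Y)(X))=\hs(P^Y(X))\cup \hs(R^Y(X))$.

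Using these, I will unpack $X\models_Y P^Y\cup R^Y$ (Definition \ref{defmax}) into its three parts. The inclusion $X\subseteq Y$ is shared. Satisfaction $X\models P^Y\cup R^Y$ splits as $X\models P^Y$ and $X\models R^Y$. The maximality inclusion $Y\cap \hs((P^Y\cup R^Y)(X))\subseteq X$ becomes $Y\cap(\hs(P^Y(X))\cup \hs(R^Y(X)))\subseteq X$, which is equivalent to the conjunction of $Y\cap \hs(P^Y(X))\subseteq X$ and $Y\cap \hs(R^Y(X))\subseteq X$. Recombining, this yields $X\models_Y P^Y$ and $X\models_Y R^Y$, proving both inclusions $SE(P\cup R)\subseteq SE(P)\cap SE(R)$ and $SE(P\cup R)\supseteq SE(P)\cap SE(R)$.

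There is no real obstacle here; the only thing to be careful about is that the definition of $M$-maximal model is stated as a property of a Horn program and uses satisfaction plus the maximality inclusion, so I must explicitly invoke $P^Y$ being Horn (as noted after the definition of reduct) to justify applying Definition \ref{defmax} to $P^Y$, $R^Y$, and their union. Everything else is a direct distributive unpacking of definitions.
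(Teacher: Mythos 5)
Your proof is correct and follows essentially the same route as the paper: the paper's proof consists of exactly the two observations you verify (satisfaction of the union splits, and $X\models_Y(P\cup R)^Y$ iff $X\models_Y P^Y$ and $X\models_Y R^Y$), with your version merely spelling out the rule-by-rule commutation of reduct, applicability, and headset with union that the paper leaves implicit.
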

\begin{proof}
   The assertion follows from the following two simple observations.
   First, for every set $Y$ of atoms, $Y\models (P\cup R)$ if and only
   if $Y\models P$ and $Y\models R$. Second, for every two sets $X$ and
   $Y$ of atoms, $X\models_Y(P\cup R)^Y$ if and only if $X\models_Y P^Y$
   and $X\models_Y R^Y$.
\end{proof}

\begin{lemma}
\label{semd}
   Let $P$, $Q$ be two monotone-constraint programs. If $P\equiv_s Q$, then
   $P$ and $Q$ have the same models.
\end{lemma}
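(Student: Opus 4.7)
The plan is to prove the contrapositive by constructing, for any candidate model $M$ of $P$, a program $R$ of facts which forces $M$ itself to be a stable model of $P\cup R$, and then to exploit strong equivalence to transfer this to $Q$.

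More precisely, suppose $P\equiv_s Q$ and let $M$ be a model of $P$. I would define
\[
R_M = \{\, a \leftarrow \; : \; a\in M\,\},
\]
where each $a$ is the (monotone) constraint $C(a)=(\{a\},\{\{a\}\})$, so that $R_M$ is a Horn monotone-constraint program. The claim I would then establish is that $M$ is a stable model of $P\cup R_M$. First, since $M\models P$ and obviously $M\models R_M$, we have $M\models P\cup R_M$. Next, since no negation occurs in $R_M$, the reduct satisfies $(P\cup R_M)^M = P^M\cup R_M$, and $M\models P^M$ follows straight from $M\models P$ and the definition of the reduct. Finally, I would argue that $Can(P^M\cup R_M,M)=M$: the rules in $R_M$ are applicable to every set, their heads are contained in $M$, and their headsets cover $M$, so even the very first nontrivial stage of the canonical computation already yields $M$ (and it cannot leave $M$ because the canonical computation is bounded by $M$). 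By Proposition \ref{grtdm}(2), $M$ is a derivable model of $P^M\cup R_M = (P\cup R_M)^M$, hence a stable model of $P\cup R_M$.

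Now the strong-equivalence hypothesis gives $M\in St(Q\cup R_M)$. Every stable model is a model, so $M\models Q\cup R_M$ and in particular $M\models Q$. Swapping the roles of $P$ and $Q$ yields the converse inclusion, and thus $P$ and $Q$ have the same models.

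The one step that deserves care is verifying that the canonical computation of $P^M\cup R_M$ truncated to $M$ actually produces all of $M$ in a single step; this is where the choice of $R_M$ as unconditional facts over exactly the atoms of $M$ is essential, since it guarantees $\hs((P^M\cup R_M)(\emptyset))\cap M \supseteq \hs(R_M(\emptyset))\cap M = M$. Everything else is a direct appeal to the definition of stable model and to the results already established (in particular Proposition \ref{grtdm} and the definition of $\equiv_s$).
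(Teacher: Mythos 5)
Your proposal is correct and follows essentially the same route as the paper: the paper adds the single fact $(M,\{M\})\leftarrow$ where you add the set of atomic facts $\{a\leftarrow\,:\,a\in M\}$, but in both cases the point is that these facts make $M$ derivable from the reduct (hence stable for the augmented program), after which strong equivalence transfers modelhood to $Q$. Your verification of $Can(P^M\cup R_M,M)=M$ is a correct elaboration of a step the paper leaves implicit.
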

\begin{proof}
   Let $M$ be a model of $P$. By $r$ we denote a constraint rule
   $(M,\{M\})\leftarrow\ $. Then, $M\in St(P\cup \{r\})$. Since $P$
   and $Q$ are strongly equivalent, $M\in St(Q\cup \{r\})$. It follows
   that $M$ is a model of $Q\cup \{r\}$ and so, also a model of $Q$.
   The converse inclusion can be proved in the same way.
\end{proof}

%

\begin{theorem}
\label{sethm}
Let $P$ and $Q$ be monotone-constraint programs. Then $P\equiv_s Q$ if
and only if $SE(P)=SE(Q)$.
\end{theorem}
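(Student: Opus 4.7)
The $(\Leftarrow)$ direction follows immediately from the preceding lemmas. For any monotone-constraint program $R$, Lemma \ref{secupcap} gives $SE(P\cup R) = SE(P)\cap SE(R) = SE(Q)\cap SE(R) = SE(Q\cup R)$, whence Lemma \ref{sesteq} yields $St(P\cup R) = St(Q\cup R)$, so $P\equiv_s Q$.

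For the converse, I argue the contrapositive. Assuming $SE(P)\neq SE(Q)$, I pick (without loss of generality) $(X,Y)\in SE(P)\setminus SE(Q)$ and build a monotone-constraint program $R$ with $St(P\cup R)\neq St(Q\cup R)$. If $X=Y$, then $Y\models P$ by the definition of SE-model while $(Y,Y)\notin SE(Q)$ forces $Y\not\models Q$ via the contrapositive of the $(M,M)$ clause of Lemma \ref{newlemma}; the single-rule Horn program $R=\{(Y,\{Y\})\leftarrow\ \}$ then separates, since the canonical computation for $(P\cup R)^Y$ reaches $Y$ via the fact and stabilises using $Y\models_Y P^Y$ (so $Y\in St(P\cup R)$), while $Y\not\models Q\cup R$ excludes $Y$ from $St(Q\cup R)$.

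The interesting case is $X\subsetneq Y$. If $Y\not\models Q$, the same $R$ still separates; otherwise $Y\models Q$, in which case $X\not\models_Y Q^Y$ forces $Y\cap\hs(Q^Y(X))\not\subseteq X$, so some $X$-applicable rule of $Q^Y$ has an atom of $Y\setminus X$ in its headset. I then take
\[
R = \{\,a\leftarrow\ :\ a\in X\,\}\ \cup\ \{\,b\leftarrow (Y\setminus X,\{W\subseteq Y\setminus X : W\neq\emptyset\})\ :\ b\in Y\setminus X\,\},
\]
a Horn program combining base facts for $X$ with ``propagation'' rules that fire once any atom of $Y\setminus X$ is present. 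Since $X\models_Y P^Y$, the canonical computation for $(P\cup R)^Y$ stalls at $X$ --- no $P^Y$-rule at $X$ contributes a headset atom inside $Y\setminus X$, and the propagation gate is unsatisfied --- so $Y\notin St(P\cup R)$. For $(Q\cup R)^Y$, the first step again yields $X$, after which the identified rule of $Q^Y$ injects an atom of $Y\setminus X$; the propagation rules then sweep up the remaining atoms and the canonical computation stabilises at $Y$, giving $Y\in St(Q\cup R)$.

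The hard part will be verifying the two canonical-computation claims in the case $X\subsetneq Y$. The stall for $P$ requires a transfinite induction using $X\models_Y P^Y$ at every step to bound the headsets contributed by $P^Y$, combined with the observation that the propagation gate $(Y\setminus X,\{W:W\neq\emptyset\})$ remains unsatisfied throughout. The convergence to $Y$ for $Q$ uses Proposition \ref{canIsComp} (to guarantee the canonical computation is well-defined, from $Y\models Q\cup R$) together with the fixpoint characterisation of Proposition \ref{propnewfp} (to identify the result as $Y$), plus a single-step analysis at the point where the first atom of $Y\setminus X$ is injected.
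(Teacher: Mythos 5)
Your proof is correct and, at its core, follows the same strategy as the paper's: the $(\Leftarrow)$ direction is identical (Lemmas \ref{secupcap} and \ref{sesteq}), and for $(\Rightarrow)$ both arguments take a witness $(X,Y)\in SE(P)\setminus SE(Q)$ and build a separating program $R$ that forces $X$ and then lets $Q^Y$, but not $P^Y$, climb from $X$ up to $Y$ in the canonical computation. The differences are in the packaging. The paper argues by contradiction, so it can invoke Lemma \ref{semd} to conclude $Y\models Q$ at once; you argue the contrapositive and therefore must dispose separately of the cases $X=Y$ and $Y\not\models Q$ using the single fact $(Y,\{Y\})\leftarrow$, which you do correctly. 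In the main case the paper's $R$ consists of the two rules $(X,\{X\})\leftarrow$ and $(Y,\{Y\})\leftarrow (X',\{X'\})$, where $X'=(\hs(Q^Y(X))\cap Y)\setminus X$ is the exact set of newly derivable atoms, whereas yours uses atom-level facts for $X$ and one propagation rule per atom of $Y\setminus X$, gated by the monotone constraint ``some atom of $Y\setminus X$ is true.'' Your gate is more permissive but works for the same reason: it stays closed along the $P$-side computation because $X\models_Y P^Y$ pins that computation at $X$, and it opens on the $Q$-side as soon as one new atom appears. Neither choice buys anything over the other; both programs are Horn and monotone.

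The one place where you assert something that genuinely needs an argument is the claim that, in the main case, $X\not\models_Y Q^Y$ ``forces'' $\hs(Q^Y(X))\cap Y\not\subseteq X$. A priori the failure of $X\models_Y Q^Y$ could instead be caused by $X\not\models Q^Y$ while $\hs(Q^Y(X))\cap Y\subseteq X$ --- and in that scenario your $Q$-side computation would also stall at $X$ and the whole construction would collapse. The claim is true, but only via the following argument (which is exactly the one the paper supplies): if $r\in Q^Y(X)$ with $X\not\models\hd(r)$, then $r\in Q^Y(Y)$ by monotonicity of bodies, and $Y\models\hd(r)$ since $Y\models Q^Y$; if in addition $\hs(Q^Y(X))\cap Y\subseteq X$, then $Y\cap\hs(r)=X\cap\hs(r)$, so $X\models\hd(r)$, a contradiction. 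You should include this step explicitly. By contrast, the canonical-computation claims you flag as ``the hard part'' are in fact easy --- both computations stabilize within a few steps and no transfinite induction is needed --- and they check out as you describe.
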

\begin{proof}
($\Leftarrow$) Let $R$ be an arbitrary monotone-constraint program. 
Lemma \ref{secupcap} implies that $SE(P\cup R)= SE(P)\cap SE(R)$ and 
$SE(Q\cup R)=SE(Q)\cap SE(R)$. Since $SE(P)=SE(Q)$, we have that $SE(P
\cup R)= SE(Q\cup R)$. By Lemma \ref{sesteq}, $P\cup R$ and $Q\cup R$ have 
the same stable models.  Hence, $P\equiv_s Q$ holds.

\noindent
($\Rightarrow$)
   Let us assume $SE(P)\setminus SE(Q)\not=\emptyset$ and let us consider
   $(X,Y) \in SE(P)\setminus SE(Q)$. It follows that $X\subseteq Y$ and
   $Y\models P$. By Lemma \ref{semd}, $Y\models Q$. Since $(X,Y)\notin
   SE(Q)$, $X\not\models_Y Q^Y$. It follows that $X\not\models Q^Y$ or
   $\hs(Q^Y(X))\cap Y \not\subseteq X$. In the first case, there is a
   rule $r\in Q^Y(X)$ such that $X\not\models \hd(r)$. Since $X\subseteq
   Y$ and $Q^Y$ is a Horn constraint program, $r\in Q^Y(Y)$. Let us
   recall that $Y\models Q$ and so, we also have $Y\models Q^Y$. It
   follows that $Y\models\hd(r)$. Since
   $\hs(r) \subseteq \hs(Q^Y(X))$, $Y\cap \hs(Q^Y(X))\models \hd(r)$.
   Thus, $\hs(Q^Y(X)) \cap Y \not\subseteq X$ (otherwise, by the
   monotonicity of $\hd(r)$, we would have $X\models \hd(r)$).

   The same property holds in the second case. Thus, it follows that
   \[
   (\hs(Q^Y(X))\cap Y)\setminus X\not=\emptyset.
   \]
   We define
   \[
   X'= (\hs(Q^Y(X))\cap Y)\setminus X.
   \]

   Let $R$ be a constraint program consisting of the following two
   rules:
   \begin{quote}
   $(X,\{X\})\leftarrow$\\
   $(Y,\{Y\})\leftarrow (X',\{X'\})$.
   \end{quote}
   Let us consider a program $Q_0=Q\cup R$. Since $Y\models Q$ and $X
   \subseteq Y$, $Y\models Q_0$. Thus, $Y\models Q_0^Y$ and, in
   particular, $Can(Q_0^Y,Y)$ is well defined. Since $R\subseteq
   Q_0^Y$, $X\subseteq Can(Q_0^Y,Y)$. Thus, we have
   \[
   \hs(Q_0^Y(X))\cap Y \subseteq \hs(Q_0^Y(Can(Q_0^Y,Y))) \cap Y =
   Can(Q_0^Y,Y)
   \]
   (the last equality follows from Proposition \ref{propnewfp}). We
   also have $Q\subseteq Q_0$ and so,
   \[
   X'\subseteq\hs(Q^Y(X))\cap Y \subseteq \hs(Q_0^Y(X))\cap Y.
   \]
   Thus, $X'\subseteq Can(Q_0^Y,Y)$. Consequently, by Proposition
   \ref{propnewfp}, $Y\subseteq Can(Q_0^Y,Y)$. Since
   $Can(Q_0^Y,Y)$ $\subseteq Y$, $Y=Can(Q_0^Y,Y)$ and so, $Y\in St(Q_0)$.

   Since $P$ and $Q$ are strongly equivalent, $Y\in St(P_0)$, where
   $P_0=P\cup R$. Let us recall that $(X,Y)\in SE(P)$. By Proposition
   \ref{max}, $Can(P^Y,Y)$ is a least $Y$-maximal model of $P^Y$. Since
   $X$ is a $Y$-maximal model of $P$ (as $X\models_Y P^Y$), it follows
   that $Can(P^Y,Y)\subseteq X$. Since $X'\not\subseteq X$, $Can(P_0^Y,Y)
   \subseteq X$. Finally, since $X'\subseteq Y$, $Y\not\subseteq X$.  Thus,
   $Y\not=Can(P_0^Y,Y)$, a contradiction.

   It follows that $SE(P)\setminus SE(Q)=\emptyset$. By symmetry,
   $SE(Q)\setminus SE(P)=\emptyset$, too. Thus, $SE(P)=SE(Q)$.
\end{proof}

\subsection{Uniform Equivalence and UE-models}

Let $D$ be a set of atoms. By $r_D$ we denote
a monotone-constraint rule 
\[
r_D = \ \ (D,\{D\})\leftarrow .
\]
Adding a rule $r_D$ to a program forces all atoms in $D$ to be true
(independently of the program).

Monotone-constraint programs $P$ and $Q$ are {\em uniformly equivalent},
denoted by $P \equiv_u Q$, if for every set of {\em atoms} $D$,
$P\cup \{r_D\}$ and $Q\cup \{r_D\}$ have the same stable models.

An SE-model $(X,Y)$ of a monotone-constraint program $P$
is a {\em UE-model} of $P$ if for every SE-model $(X',Y)$ of $P$ with 
$X\subseteq X'$, either $X=X'$ or $X'=Y$ holds. We write $UE(P)$ to 
denote the set of all UE-models of $P$. Our notion of a UE-model is a 
generalization of the notion of a UE-model
due to \citeA{ef03}
to the 
setting of monotone-constraint programs.

\noindent
{\bf Examples.}
  Let us look again at the program we used to illustrate the concept of
  an SE-model. We showed there that $(\emptyset,\{p,q\})$ and $(\{p\},
  \{p,q\})$ are SE-models of $P$. Directly from the definition of
  UE-models it follows that $(\{p\}, \{p,q\})$ is a UE-model of $P$.
  \hfill$\eoe$
  
We will now present a characterization of uniform equivalence of
monotone-con\-straint programs under the assumption that their sets of 
atoms are finite. One can prove a characterization of uniform equivalence
of arbitrary monotone-cons\-traint programs, generalizing one of the
results
by \citeA{ef03}.
However, both the characterization and its proof
are more complex and, for brevity, we restrict our attention to the
finite case only. 

We start with an auxiliary result, which allows us to focus only on 
atoms in $\At(P)$ when deciding whether a pair $(X,Y)$ of sets of atoms
is an SE-model of a monotone-constraint program $P$.

\begin{lemma}
\label{semdat}
Let $P$ be a monotone-constraint program, $X\subseteq Y$ two sets of atoms.
Then $(X, Y)\in SE(P)$ if and only if $(X\cap At(P), Y\cap At(P))\in
SE(P)$.
\end{lemma}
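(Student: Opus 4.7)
The plan is to show that every ingredient of the SE-model definition depends only on the intersections with $\At(P)$, so the two conditions in the biconditional are in fact equivalent by straightforward unpacking.

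First I would observe the basic fact that for any constraint $A=(D,C)$ with $D\subseteq\At(P)$, and any interpretation $M$, we have $M\models A$ iff $M\cap D\in C$ iff $(M\cap\At(P))\cap D\in C$ iff $M\cap\At(P)\models A$. This extends to negated constraints, to conjunctions, and hence to rules and whole programs. Applying this to $Y$ yields that $Y\models P$ iff $Y\cap\At(P)\models P$, which handles clause (2) of Definition \ref{defse}. Clause (1) is preserved trivially: intersecting both sides of $X\subseteq Y$ with $\At(P)$ preserves the inclusion.

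Next I would analyze the reduct. Since a rule is deleted from $P$ when forming $P^Y$ exactly when its body contains some $\n(B)$ with $Y\models B$, and since by the previous paragraph $Y\models B$ iff $Y\cap\At(P)\models B$, the construction of the reduct depends only on $Y\cap\At(P)$. Thus
\[
P^Y = P^{Y\cap\At(P)}.
\]
Moreover, $P^Y$ is Horn and every constraint in it has domain contained in $\At(P)$, so the set of $X$-applicable rules satisfies $P^Y(X)=P^Y(X\cap\At(P))$, and $\hs(P^Y(X))\subseteq\At(P)$.

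Finally I would translate clause (3). By the first paragraph, $X\models P^Y$ iff $X\cap\At(P)\models P^Y$. For the maximality part, since $\hs(P^Y(X))\subseteq\At(P)$ we have
\[
Y\cap\hs(P^Y(X)) = (Y\cap\At(P))\cap\hs(P^Y(X)),
\]
and combining with $P^Y(X)=P^Y(X\cap\At(P))$ and $P^Y=P^{Y\cap\At(P)}$ gives
\[
Y\cap\hs(P^Y(X))\subseteq X \iff (Y\cap\At(P))\cap\hs\!\left(P^{Y\cap\At(P)}(X\cap\At(P))\right)\subseteq X\cap\At(P).
\]
Here the equivalence uses the fact that the left side is already a subset of $\At(P)$, so its containment in $X$ is the same as its containment in $X\cap\At(P)$. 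Therefore $X\models_Y P^Y$ iff $X\cap\At(P)\models_{Y\cap\At(P)} P^{Y\cap\At(P)}$, which is clause (3) for the projected pair. Assembling clauses (1)--(3) yields the lemma.

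There is no serious obstacle; the only point requiring care is the maximality condition in (3), since it mixes $X$, $Y$, and the headsets, but the observation that all relevant headsets lie inside $\At(P)$ makes the projection go through cleanly.
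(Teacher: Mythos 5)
Your proof is correct and follows essentially the same route as the paper's: reduce every clause of the SE-model definition to the observation that constraint satisfaction depends only on the intersection with the constraint's domain, note that $P^Y=P^{Y\cap \At(P)}$ and $P^Y(X)=P^Y(X\cap \At(P))$, and use $\hs(P^Y(X))\subseteq \At(P)$ to translate the maximality condition. The only difference is that you spell out the $X\models P^Y$ part of clause (3) explicitly, which the paper leaves implicit.
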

\begin{proof}
   Since $X\subseteq Y$ is given, and $X\subseteq Y$ implies $X\cap
   At(P)\subseteq Y\cap At(P)$, the first condition of the definition
   of an SE-model holds on both sides of the equivalence.

   Next, we note that for every constraint $C$, $Y\models C$ if and 
   only if $Y\cap \dom(C) \models C$. Therefore, $Y\models P$ if and 
   only if $Y\cap At(P) \models P$. That is, the second condition of the
   definition of an SE-model holds for $(X,Y)$ if and only if it holds
   for $(X\cap At(P), Y\cap At(P))$.

   Finally, we observe that $P^Y = P^{Y\cap At(P)}$
   and $P(X) = P(X\cap At(P))$. Therefore,
   \[
   Y\cap \hs(P^Y(X)) = Y\cap \hs(P^{Y\cap At(P)}(X\cap At(P))).
   \]
   Since $\hs(P^{Y\cap At(P)}(X\cap At(P)))\subseteq At(P)$, it 
   follows that
   \[
   Y\cap \hs(P^Y(X))\subseteq X
   \]
   if and only if
   \[
   Y\cap \At(P)\cap \hs(P^{Y\cap \At(P)}(X\cap \At(P))) \subseteq 
   X\cap \At(P).
   \]
   Thus, $X\models_Y P^Y$ if and only if $X\cap At(P)
   \models_{Y \cap At(P)} P^{Y\cap At(P)}$. That is, the third
   condition of the definition of an SE-model holds for $(X,Y)$ if and 
   only if it holds for $(X\cap At(P), Y\cap At(P))$.
\end{proof}

\begin{lemma}
\label{exmax}
Let $P$ be a monotone-constraint program such that $\At(P)$ is finite. 
Then for every $(X,Y)\in SE(P)$ such that $X\not=Y$,
the set
\begin{equation}
\label{eq211}
\{X'\colon X\subseteq X'\subseteq Y,\ X'\not=Y,\ (X',Y)\in SE(P)\}
\end{equation}
has a maximal element.
\end{lemma}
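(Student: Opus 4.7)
The plan is to use Lemma~\ref{semdat} to reduce the question to one about subsets of the \emph{finite} set $Y\cap\At(P)$, and then split on whether $X$ and $Y$ already agree on $\At(P)$.

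Let $S$ denote the set in (\ref{eq211}) and consider the finite family $T=\{X'\cap\At(P): X'\in S\}$ of subsets of $Y\cap\At(P)$. Since $X\in S$ (as $(X,Y)\in SE(P)$ and $X\neq Y$), $T$ is nonempty, so I can pick $U^*$ to be a maximal element of $T$ containing $X\cap\At(P)$. Note that by Lemma~\ref{semdat}, membership of a pair $(X',Y)$ with $X\subseteq X'\subseteq Y$ in $SE(P)$ depends only on $X'\cap\At(P)$, which is the essential leverage.

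If $U^*\subsetneq Y\cap\At(P)$, I would set $X^*:=U^*\cup(Y\setminus\At(P))$. Then $X\subseteq X^*\subseteq Y$ (since $X\cap\At(P)\subseteq U^*$ and $X\setminus\At(P)\subseteq Y\setminus\At(P)$), $X^*\cap\At(P)=U^*\neq Y\cap\At(P)$ so $X^*\neq Y$, and $(X^*,Y)\in SE(P)$ by Lemma~\ref{semdat} (using that some $X'\in S$ witnesses $U^*\in T$). To show $X^*$ is maximal in $S$, suppose $X^*\subsetneq X''\subseteq Y$ with $X''\in S$. Then $X''\cap\At(P)\in T$ and $X''\cap\At(P)\supseteq U^*$; by maximality of $U^*$ in $T$ we get $X''\cap\At(P)=U^*=X^*\cap\At(P)$, forcing $X''\setminus\At(P)\supsetneq Y\setminus\At(P)$, which contradicts $X''\subseteq Y$.

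If instead $U^*=Y\cap\At(P)$, then some $X'\in S$ satisfies $X'\cap\At(P)=Y\cap\At(P)$, so $Y\setminus X'$ is a nonempty subset of $\At\setminus\At(P)$ (since $X'\neq Y$). I would pick any $y\in Y\setminus X'$ and set $X^*:=Y\setminus\{y\}$. Then $X\subseteq X'\subseteq X^*\subsetneq Y$, and $X^*\cap\At(P)=Y\cap\At(P)=X'\cap\At(P)$, so $(X^*,Y)\in SE(P)$ by Lemma~\ref{semdat}; maximality of $X^*$ in $S$ is immediate because the only set properly containing $X^*$ inside $Y$ is $Y$ itself.

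The only nuisance — really the only thing that requires care — is the possibility that $X$ and $Y$ coincide on $\At(P)$ but differ on $\At\setminus\At(P)$ (which can happen when the ambient $\At$ is strictly larger than $\At(P)$); this is exactly what forces the second case. Finiteness of $\At(P)$ is used solely to guarantee the existence of the maximal $U^*$ in $T$.
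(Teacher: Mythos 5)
Your proof is correct and follows essentially the same route as the paper's: both rest on Lemma~\ref{semdat} to reduce SE-membership to traces on the finite set $\At(P)$, and both split into the case where the trace reaches $Y\cap\At(P)$ (handled by deleting a single atom of $Y$ outside $\At(P)$) and the remaining case (handled by saturating with $Y\setminus\At(P)$). The only real difference is that you exhibit the maximal element explicitly as $U^*\cup(Y\setminus\At(P))$, whereas the paper infers its existence from the finiteness of $Y\setminus X'\subseteq\At(P)$ without constructing it.
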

\begin{proof}
If $\At(P)\cap X =\At(P)\cap Y$, then for every element $y\in
Y\setminus X$, $Y\setminus \{y\}$ is a maximal element of the set 
(\ref{eq211}). Indeed,
since $(X, Y)\in SE(P)$, by Lemma \ref{semdat}, $(X\cap At(P), Y\cap
At(P))\in SE(P)$. Since $X\cap At(P)=Y\cap At(P)$ and $y\not\in
At(P)$, $X\cap At(P)=(Y\setminus\{y\})\cap At(P)$. Therefore,
$((Y\setminus\{y\})\cap At(P), Y\cap At(P))\in SE(P)$. Then from Lemma
\ref{semdat} and the fact $Y\setminus\{y\}\subseteq Y$, we have
$(Y\setminus\{y\}, Y)\in SE(P)$.
Therefore, $Y\setminus\{y\}$ belongs to the set (\ref{eq211})
and so, it is a maximal element of this set.

Thus, let us assume that $\At(P)\cap X \not=\At(P)\cap Y$. Let us define
$X'=X\cup (Y\setminus \At(P))$. Then $X\subseteq X' \subseteq Y$ and 
$X'\not=Y$. Moreover, no element in $X'\setminus X$ belongs to $\At(P)$.
That is, $X'\cap At(P)=X\cap At(P)$. Thus, by Lemma \ref{semdat},
$(X',Y)\in SE(P)$ and so, $X'$ belongs to the set (\ref{eq211}).
Since $Y\setminus X'\subseteq \At(P)$, by the finiteness of $\At(P)$ it
follows that the set (\ref{eq211}) contains a maximal element containing
$X'$. In particular, it contains a maximal element.
\end{proof}

\begin{theorem}
\label{uethm}
Let $P$ and $Q$ be two monotone-constraint programs such that
$\At(P)\cup\At(Q)$ is finite. Then $P\equiv_u Q$ if and only if 
$UE(P)=UE(Q)$.
\end{theorem}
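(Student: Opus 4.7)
The plan is to characterize $St(P\cup\{r_D\})$ purely in terms of $UE(P)$, from which both directions follow quickly. First I would unfold the definitions to verify that $(X,M)\in SE(\{r_D\})$ iff $D\subseteq X\subseteq M$ (the rule $r_D$ is Horn and always applicable, forcing $D\subseteq X$ as the only nontrivial requirement). By Lemma~\ref{secupcap} this yields
\[
SE(P\cup\{r_D\}) \;=\; \{(X,M)\in SE(P)\colon D\subseteq X\subseteq M\}.
\]
Next, Propositions~\ref{grtdm}(2) and~\ref{max} together say $M\in St(R)$ iff $M\models R$ and no $X\subsetneq M$ satisfies $(X,M)\in SE(R)$ (since $M\in St(R)$ iff $M$ is itself the least $M$-maximal model of $R^M$). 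Specialising to $R=P\cup\{r_D\}$ and combining with the displayed identity gives
\[
M\in St(P\cup\{r_D\}) \;\iff\; M\models P,\ D\subseteq M,\ \text{and no}\ (X,M)\in SE(P)\ \text{has}\ D\subseteq X\subsetneq M.
\]
Finally, the finiteness of $\At(P)\cup\At(Q)$ lets me invoke Lemma~\ref{exmax} to promote any such SE-predecessor to a maximal one, which is a UE-model; so the last clause is equivalent to the absence of any $(X,M)\in UE(P)$ with $X\neq M$ and $D\subseteq X$. Thus $St(P\cup\{r_D\})$ is fully determined by $UE(P)$, and the $(\Leftarrow)$ direction is immediate.

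For $(\Rightarrow)$, assume $P\equiv_u Q$ and, by symmetry, suppose $(X,Y)\in UE(P)\setminus UE(Q)$; I would exhibit a distinguishing $D$ by cases. If $X=Y$ then $Y\models P$ but $Y\not\models Q$, so $D=Y$ works: the condition for $Y\in St(P\cup\{r_Y\})$ is vacuously met, whereas $Y\notin St(Q\cup\{r_Y\})$ because $Y$ fails to model $Q$. Otherwise $X\subsetneq Y$. If there exists $X^*$ with $X\subsetneq X^*\subsetneq Y$ and $(X^*,Y)\in SE(Q)$, take $D=X^*$: the UE-maximality of $(X,Y)$ in $P$ forbids any $(X'',Y)\in SE(P)$ with $X^*\subseteq X''\subsetneq Y$, so $Y\in St(P\cup\{r_{X^*}\})$, whereas $(X^*,Y)$ itself witnesses $Y\notin St(Q\cup\{r_{X^*}\})$. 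If no such $X^*$ exists, then $(X,Y)\notin SE(Q)$ (else $(X,Y)$ would be UE-maximal in $Q$, contradicting $(X,Y)\notin UE(Q)$); if further $Y\not\models Q$ use $D=Y$, and otherwise take $D=X$. In this last case $(X,Y)\in SE(P)$ directly destabilises $Y$ in $P\cup\{r_X\}$, while in $Q\cup\{r_X\}$ no $(X',Y)\in SE(Q)$ with $X\subseteq X'\subsetneq Y$ exists ($X'=X$ is ruled out by $(X,Y)\notin SE(Q)$, and $X\subsetneq X'\subsetneq Y$ by the case hypothesis), so $Y\in St(Q\cup\{r_X\})$. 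Each case contradicts $P\equiv_u Q$.

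The main obstacle is the final sub-case of $(\Rightarrow)$, where $D=X$ must be shown to separate $P$ and $Q$; this requires juggling the UE-maximality of $(X,Y)$ in $P$, the failure of $(X,Y)\in SE(Q)$, and the absence of any intermediate SE-predecessor in $Q$ simultaneously. The finiteness of $\At(P)\cup\At(Q)$ enters only once, in the appeal to Lemma~\ref{exmax} that converts an arbitrary SE-predecessor into a UE-predecessor; everything else is essentially mechanical.
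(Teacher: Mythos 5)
Your proof is correct, and while it uses the same ingredients as the paper (Lemmas \ref{secupcap} and \ref{exmax}, Propositions \ref{grtdm} and \ref{max}), it organizes them differently. The paper never isolates your key intermediate fact — that $M\in St(P\cup\{r_D\})$ iff $M\models P$, $D\subseteq M$, and no $(X,M)\in SE(P)$ satisfies $D\subseteq X\subsetneq M$ (equivalently, by Lemma \ref{exmax}, no such UE-model exists) — even though this is implicitly what drives its argument. Instead, for ($\Leftarrow$) the paper runs the canonical computation $Can((Q\cup\{r_D\})^Y,Y)$ explicitly and derives a contradiction from the least-$Y$-maximal-model property, and for ($\Rightarrow$) it applies the ``promote an SE-predecessor to a maximal UE-model of the other program'' step twice in succession, obtaining sets $X\subsetneq X'\subsetneq X''\subsetneq Y$ that contradict $(X,Y)\in UE(P)$. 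Your ($\Rightarrow$) instead performs an explicit case analysis that names a separating context $D\in\{Y,X^*,X\}$ in each case; this is slightly longer to state but more transparent about \emph{which} added facts witness the failure of uniform equivalence, and it correctly confines the use of finiteness (Lemma \ref{exmax}) to the ($\Leftarrow$) direction, whereas the paper invokes it in both. What your route buys is a reusable, purely UE-model-based characterization of $St(P\cup\{r_D\})$ from which ($\Leftarrow$) is a one-line corollary; what the paper's route buys is avoidance of the four-way case split. I verified the one nontrivial unfolding you rely on, namely $SE(\{r_D\})=\{(X,M)\colon D\subseteq X\subseteq M\}$, and the sub-case with $D=X$, where the three hypotheses ($(X,Y)$ UE-maximal in $P$, $(X,Y)\notin SE(Q)$, and no intermediate SE-predecessor in $Q$) do combine exactly as you claim; no gap there.
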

\begin{proof}
   ($\Leftarrow$) Let $D$ be an arbitrary set of atoms and $Y$ be a stable
   model of $P\cup \{r_D\}$. Then $Y$ is a model of $P\cup \{r_D\}$.
   In particular, $Y$ is a model of $P$ and so, $(Y,Y) \in UE(P)$. It
   follows that $(Y,Y)\in UE(Q)$, too. Thus, $Y$ is a model of $Q$. Since
   $Y$ is a model of $r_D$, $D\subseteq Y$. Consequently, $Y$ is a model
   of $Q\cup \{r_D\}$ and thus, also of $(Q\cup \{r_D\})^Y$.

   Let $X=Can((Q\cup \{r_D\})^Y,Y)$. Then $D\subseteq X\subseteq Y$ and,
   by Proposition \ref{max}, $X$ is a $Y$-maximal model of $(Q\cup
   \{r_D\})^Y$. Consequently, $X$ is a $Y$-maximal model of $Q^Y$.
   Since $X\subseteq Y$ and $Y\models Q$, $(X,Y)\in SE(Q)$.

   Let us assume
   that $X\not=Y$. Then, by Lemma \ref{exmax}, there is a maximal set
   $X'$ such that $X\subseteq X'\subseteq Y$, $X'\not= Y$ and $(X',Y)\in
   SE(Q)$. It follows that $(X',Y)\in UE(Q)$. Thus, $(X',Y)\in UE(P)$ and
   so, $X'\models_Y P^Y$. Since $D\subseteq X'$, $X'\models_Y (P\cup\{r_D
   \})^Y$. We recall that $Y$ is a stable model of $P\cup \{r_D\}$. Thus,
   $Y=Can((P\cup\{
   r_D\})^Y,Y)$. By Proposition \ref{max}, $Y\subseteq X'$ and so we get
   $X'= Y$, a contradiction. It follows that $X=Y$ and,
   consequently, $Y$ is a stable model of $Q\cup \{r_D\}$.

   By symmetry, every stable model of $Q\cup \{r_D\}$ is also a stable
   model of $P\cup \{r_D\}$.

   \noindent
   ($\Rightarrow$)
   First, we note that $(Y,Y)\in UE(P)$ if and only if $Y$ is a model
   of $P$. Next, we note that $P$ and $Q$ have the same models. Indeed,
   the argument used in the proof of Lemma \ref{semd} works also under
   the assumption that $P\equiv_u Q$. Thus, $(Y,Y)\in UE(P)$ if and only
   if $(Y,Y)\in UE(Q)$.

   Now let us assume that $UE(P)\neq UE(Q)$. Let $(X,Y)$ be an element
   of $(UE(P)\setminus UE(Q))\cup (UE(Q)\setminus UE(P))$.
   Without loss of generality, we can assume that $(X,Y)
   \in UE(P)\setminus UE(Q)$. Since $(X,Y)\in UE(P)$, it follows that
   \begin{enumerate}
   \item $X\subseteq Y$
   \item $Y\models P$ and, consequently, $Y\models Q$
   \item $X\not=Y$ (otherwise, by our earlier observations, $(X,Y)$
   would belong to $UE(Q)$).
   \end{enumerate}

   Let $R=(Q\cup \{r_X\})^Y$. Clearly, $R$ is a Horn constraint program.
   Moreover, since $Y\models Q$ and $X\subseteq Y$, $Y\models R$. Thus,
   $Can(R,Y)$ is defined. We have $X\subseteq Can(R,Y) \subseteq Y$. We
   claim that $Can(R,Y)\neq Y$. Let us assume to the contrary that
   $Can(R,Y)=Y$. Then $Y\in St(Q\cup \{r_X\})$. Hence, $Y\in St(P\cup
   \{r_X\})$, that is, $Y=Can((P\cup \{r_X\})^Y,Y)$. By Proposition
   \ref{max}, $Y$ is the least $Y$-maximal model of $(P\cup \{r_X\})^Y$
   and $X$ is a $Y$-maximal model of $(P\cup \{r_X\})^Y$ (since $(X,Y)\in
   SE(P)$, $X\models_Y P^Y$ and so, $X\models_Y (P\cup \{r_X\})^Y$, too).
   Consequently, $Y \subseteq X$ and, as $X\subseteq Y$, $X=Y$, a
   contradiction.

   Thus, $Can(R,Y)\neq Y$.
   By Proposition \ref{max}, $Can(R,Y)$ is a $Y$-maximal model of $R$.
   Since $Q^Y \subseteq R$, it follows that $Can(R,Y)$ is a $Y$-maximal
   model of $Q^Y$ and so, $(Can(R,Y),Y)\in SE(Q)$. Since $Can(R,Y)\not=
   Y$, from Lemma \ref{exmax} it follows that there is a maximal set $X'$
   such that $Can(R,Y)\subseteq X'\subseteq Y$, $X'\not=Y$ and $(X',Y)\in
   SE(Q)$. By the definition, $(X',Y)\in UE(Q)$. Since $(X,Y)\notin
   UE(Q)$. $X\not=X'$. Consequently, since $X\subseteq X'$, $X'\not=Y$
   and $(X,Y)\in UE(P)$, $(X',Y)\notin UE(P)$.

   Thus, $(X',Y)\in UE(Q)\setminus UE(P)$. By applying now the same
   argument as above to $(X',Y)$ we show the existence of $X''$ such
   that $X'\subseteq X''\subseteq Y$, $X'\not=X''$,
   $X''\not=Y$ and $(X'',Y)\in SE(P)$. Consequently, we have $X\subseteq
   X''$, $X\not= X''$ and $Y\not=X''$, which contradicts the fact that 
   $(X,Y)\in UE(P)$. It follows then that $UE(P)=UE(Q)$.
\end{proof}

\noindent
{\bf Examples.}
Let $P=\{ 1 \{ p, q \} \leftarrow \n(2 \{ p, q \}) \}$, and
      $Q = \{ p \leftarrow \n(q)$, $q \leftarrow \n(p) \}$.
Then $P$ and $Q$ are strongly equivalent. We note that both programs 
have
$\{ p \}$, $\{ q \}$, and $\{ p, q \}$ as models. Furthermore, $(\{p\},
\{p\})$, $(\{q\},\{q\})$, $(\{p\},\{p,q\})$, $(\{q\},\{p,q\})$,
$(\{p,q\},\{p,q\})$ and $(\emptyset,\{p,q\})$ are ``all'' SE-models of 
the two programs
\footnote{From Lemma \ref{semdat} and Theorem \ref{sethm}, it follows
that only those SE-models that contain atoms only from $At(P)\cup At(Q)$
are the essential ones.}.

Thus, by Theorem
\ref{sethm}, $P$ and $Q$ are strongly equivalent.

We also observe that the first five SE-models are precisely UE-models
of $P$ and $Q$. Therefore, by Theorem \ref{uethm}, $P$ and $Q$ are also 
uniformly equivalent.

It is possible for two monotone-constraint programs to be uniformly 
but not strongly equivalent. If we add rule $p \leftarrow $ 
to $P$, and rule $p \leftarrow q$ to $Q$, then the two resulting
programs, say $P'$ and $Q'$, are uniformly equivalent. However, they 
are not strongly 
equivalent. The programs $P'\cup\{ q \leftarrow p \}$ and $Q'\cup\{ q
\leftarrow p \}$ have different stable models. Another way to show it 
is by observing that $(\emptyset, \{p, q\})$ is an SE-model of $Q'$ but 
not an SE-model of $P'$.
\hfill$\eoe$

\section{Fages Lemma}

In general, supported models and stable models of a logic program (both
in the normal case and the monotone-constraint case) do not coincide. 
Fages Lemma \cite{fag94}, later extended by \citeA{el03},
establishes a sufficient condition under which a supported model of a
normal logic program is stable. In this section, we show that Fages 
Lemma extends to programs with monotone constraints.

\begin{definition}
A monotone-constraint program $P$ is called {\em tight}
on a set $M \subseteq
At(P)$ of atoms, if there exists a mapping $\lambda$ from $M$ to
ordinals such that for every rule $A \leftarrow A_1, \ldots, A_k, 
\n(A_{k+1}),$ $\ldots,\n(A_m)$ in $P(M)$, if $X$ is the domain of $A$ and 
$X_i$ the domain of $A_i$, $1\leq  i\leq k$, then for every $x \in M \cap
X$ and for every $a \in M \cap \bigcup_{i=1}^{k} X_i$, $\lambda(a) < 
\lambda(x)$.
\end{definition}

We will now show that tightness provides a sufficient condition for
a supported model to be stable. In order to prove a general result, we
first establish it in the Horn case.

\begin{lemma} \label{fages.horn}
Let $P$ be a Horn monotone-constraint program and let $M$ be a supported
model of $P$. If $P$ is tight on $M$, then $M$ is a stable model of $P$.
\end{lemma}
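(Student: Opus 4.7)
The plan is to reduce the claim to the equality $M = Can(P,M)$. Once this is established, Proposition \ref{grtdm}(2) yields that $M$ is a derivable model of $P$, and since $P = P^M$ (because $P$ is Horn), Proposition \ref{DerIsStable} gives that $M$ is a stable model of $P$. Because $M$ is a model of $P$, Proposition \ref{canIsComp} ensures $Can(P,M)$ is well defined, and its construction forces $Can(P,M) \subseteq M$. So only the inclusion $M \subseteq Can(P,M)$ requires work.

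For this, I would argue by transfinite induction on $\lambda$, showing that every $a \in M$ with $\lambda(a) = \alpha$ lies in $Can(P,M)$, under the induction hypothesis that the same holds for all $b \in M$ with $\lambda(b) < \alpha$. Fix such an $a$. Since $M$ is supported, there is a rule $r \in P(M)$ with $a \in \hs(r)$; let $A_1, \ldots, A_k$ be its body constraints (all positive, as $P$ is Horn), with respective domains $X_1, \ldots, X_k$. Tightness of $P$ on $M$ forces $\lambda(b) < \alpha$ for every $b \in M \cap \bigcup_{i=1}^{k} X_i$, so the induction hypothesis yields $M \cap X_i \subseteq Can(P,M)$ for each $i$.

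The heart of the argument is then to check that $r$ remains applicable at $Can(P,M)$. Combining $M \cap X_i \subseteq Can(P,M)$ with $Can(P,M) \subseteq M$ yields $Can(P,M) \cap X_i = M \cap X_i$; since $M \models A_i$ says that $M \cap X_i$ satisfies the constraint $A_i$, we immediately obtain $Can(P,M) \models A_i$ for every $i$, hence $r \in P(Can(P,M))$. The fixpoint property of canonical computations (Proposition \ref{propnewfp}) now delivers $a \in \hs(r) \cap M \subseteq \hs(P(Can(P,M))) \cap M = Can(P,M)$, closing the induction.

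The main point requiring care, though not really an obstacle, is aligning the tightness condition with the canonical computation: tightness bounds $\lambda$ uniformly over all of $M \cap X_i$, which is precisely what the induction hypothesis needs in order to recover the whole body domain, not merely a piece of it. Monotonicity of the body constraints is not actually invoked in the key step, since one obtains honest equality $Can(P,M) \cap X_i = M \cap X_i$ rather than only containment; the monotonicity of constraints enters implicitly only through the earlier infrastructure (Propositions \ref{canIsComp} and \ref{propnewfp}) that makes the canonical computation and its fixpoint behave well.
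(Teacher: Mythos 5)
Your proof is correct and follows essentially the same route as the paper's: a transfinite induction along $\lambda$ establishing $M\subseteq Can(P,M)$, using supportedness to select a supporting rule, tightness plus the induction hypothesis to place the body domains inside $Can(P,M)$, and Proposition \ref{propnewfp} to close the induction before invoking Proposition \ref{DerIsStable}. Your observation that the equality $Can(P,M)\cap X_i = M\cap X_i$ renders monotonicity unnecessary at the key step is a slight sharpening of the paper's wording, but the argument is otherwise the same.
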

\begin{proof}
   Let $M$ be an arbitrary supported model of $P$ such that $P$ is tight
   on $M$. Let $\lambda$ be a mapping showing the tightness of $P$ on $M$.
   We will show that for every ordinal $\alpha$ and for every atom $x\in M$
   such that $\lambda(x)\leq \alpha$, $x\in Can(P,M)$. We will proceed by
   induction.

   For the basis of the induction, let us consider an atom $x\in M$ such
   that $\lambda(x)=0$. Since $M$ is a supported model for $P$ and
   $x \in M$, there exists a rule $r\in P(M)$ such that $x\in \hs(r)$.
   Moreover, since $P$ is tight on $M$, for every $A\in \bd(r)$ and for
   every $y\in \dom(A)\cap M$, $\lambda(y) < \lambda(x) = 0$. Thus, for
   every $A\in \bd(r)$, $\dom(A)\cap M=\emptyset$. Since $M\models \bd(r)$
   and since $P$ is a Horn monotone-constraint program, it follows that
   $\emptyset\models \bd(r)$. Consequently, $\hs(r)\cap M \subseteq Can
   (P,M)$ and so, $x\in Can(P,M)$.

   Let us assume that the assertion holds for every ordinal $\beta <
   \alpha$ and let us consider $x\in M$ such that $\lambda(x)=\alpha$.
   As before, since $M$ is a supported model of $P$, there exists a rule
   $r\in P(M)$ such that $x\in \hs(r)$. By the assumption, $P$ is tight
   on $M$ and, consequently, for every $A\in \bd(r)$ and for every $y\in
   \dom(A)\cap M$, $\lambda(y) < \lambda(x)=\alpha$. By the induction
   hypothesis, for every $A\in \bd(r)$, $\dom(A)\cap M \subseteq Can(P,M)$.
   Since $P$ is a Horn monotone-constraint program, $Can(P,M)\models
   \bd(r)$. By Proposition \ref{propnewfp}, $\hs(r)\cap M\subseteq Can(P,
   M)$ and so, $x\in Can(P,M)$.

   It follows that $M\subseteq Can(P,M)$. By the definition of a
   canonical computation, we have $Can(P,M) \subseteq M$. Thus, $M=Can(P,
   M)$. By Proposition \ref{DerIsStable}, $M$ is a stable model of $P$.
\end{proof}

Given this lemma, the general result follows easily.

\begin{theorem}
\label{fages.thm}
Let $P$ be a monotone-constraint program and let $M$ be a supported
model of $P$. If $P$ is tight on $M$, then $M$ is a stable model of $P$.
\end{theorem}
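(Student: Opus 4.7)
The plan is to reduce the general statement to the already-established Horn case (Lemma \ref{fages.horn}) by passing to the reduct $P^M$. The key observation is that the reduct $P^M$ is Horn, so if we can verify that the three hypotheses of Lemma \ref{fages.horn} transfer from $P$ to $P^M$ (with the same witness set $M$ and, for tightness, the same ordinal mapping $\lambda$), we will have $M$ a derivable model of $P^M$, which is exactly what it means for $M$ to be a stable model of $P$.

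First I would observe that the rules in $P^M(M)$ are in one-to-one correspondence with the rules in $P(M)$: a rule $r\in P(M)$ survives the reduct construction precisely because $M$ satisfies every negated literal in $\bd(r)$, and the reduct then just strips those negated literals, leaving the head and the positive (monotone constraint) body literals unchanged. Using this correspondence, I would verify that (i) $M$ is a model of $P^M$ (immediate, since $M\models P$ and the heads are unchanged), (ii) $M$ is a supported model of $P^M$, because $\hs(P^M(M))=\hs(P(M))$ and $M$ is supported by $P$, and (iii) $P^M$ is tight on $M$ via the same $\lambda$, since the heads and positive body constraints of rules in $P^M(M)$ are exactly those of the corresponding rules in $P(M)$, so the inequality $\lambda(a)<\lambda(x)$ required by the definition of tightness carries over verbatim.

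With these three facts in hand, Lemma \ref{fages.horn} applied to the Horn program $P^M$ and its supported model $M$ yields that $M$ is a stable model of $P^M$. By Proposition \ref{DerIsStable} this is the same as saying $M$ is a derivable model of $P^M$, which by the definition of stable models for monotone-constraint programs means $M\in St(P)$, as required.

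I do not anticipate a serious obstacle here; the only subtlety worth being careful about is that the reduct is defined relative to $M$, so the matching between $P(M)$ and $P^M(M)$ must be justified explicitly before transferring the supportedness and tightness witnesses. Everything else is bookkeeping on top of Lemma \ref{fages.horn}.
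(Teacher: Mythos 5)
Your proposal is correct and follows exactly the paper's own route: pass to the Horn reduct $P^M$, check that supportedness and tightness (with the same $\lambda$) transfer from $P$ to $P^M$ via the correspondence between $P(M)$ and $P^M(M)$, and then invoke Lemma \ref{fages.horn} together with Proposition \ref{DerIsStable}. The paper leaves the transfer of supportedness and tightness as "one can check"; you supply precisely those details.
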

\begin{proof}
One can check that if $M$ is a supported model of $P$, then it
is a supported model of the reduct $P^M$. Since $P$ is tight on $M$,
the reduct $P^M$ is tight on $M$, too. Thus, $M$ is a stable model of
$P^M$ (by Lemma \ref{fages.horn}) and, consequently, a derivable model 
of $P^M$ (by Proposition \ref{DerIsStable}). It follows that $M$ is a 
stable model of $P$.
\end{proof}

\section{Logic $\plmc$ and the Completion of 
a Monotone-con\-straint Program}
\label{secplmc}

The {\em completion} of a normal logic program \cite{cl78} is a 
propositional theory whose models are precisely supported models of the 
program. Thus, supported models of normal logic programs can be computed 
by means of SAT solvers. Under some conditions, for instance, when the 
assumptions of Fages Lemma hold, supported models are stable. Thus,
computing models of the completion can yield stable models, an idea 
implemented in the first version of {\em cmodels} software
\cite{cmodels}.

Our goal is to extend the concept of the completion to programs with 
monotone constraints. The completion, as we define it, retains much of
the structure of monotone-constraint rules and allow us, in the 
restricted setting of {\em lparse} programs, to use pseudo-boolean 
constraint solvers to compute supported models of such programs. In this 
section we define the completion and prove a result relating supported 
models of programs to models of the completion. We discuss extensions
of this result in the next section and their practical computational 
applications in Section \ref{sec-appl}.

To define the completion, we first introduce an extension of
propositional logic with monotone constraints, a formalism we denote by 
$\plmc$. A {\em formula} in the logic $\plmc$ is an expression built 
from monotone constraints by means of boolean connectives $\wedge$,
$\vee$ (and their {\em infinitary} counterparts), $\rightarrow$ and 
$\neg$. The notion of a model of a constraint, which we discussed 
earlier, extends in a standard way to the class of formulas in the 
logic $\plmc$.

For a set $L =\{A_1,\ldots,A_k, \n(A_{k+1}),\ldots, \n(A_m)\}$ of 
literals, we define
\[
L^\wedge = A_1\wedge \ldots\wedge A_k\wedge \neg
A_{k+1}\wedge\ldots\wedge \neg A_m.
\]
Let $P$ be a monotone-constraint program. We form the {\em completion} 
of $P$, denoted $\comp(P)$, as follows:
\begin{enumerate}
\item For every rule $r\in P$ we include in $\comp(P)$ a $\plmc$ formula
\[
[\bd(r)]^\wedge \rightarrow \hd(r)
\]
\item For every atom $x\in \At(P)$, we include in $\comp(P)$ a $\plmc$
formula 
\[
x \rightarrow \bigvee \{[\bd(r)]^\wedge\colon r\in P, x\in 
\hs(r)\}
\]
(we note that when the set of rules in $P$ is infinite, the 
disjunction may be infinitary).
\end{enumerate}

The following theorem generalizes a fundamental result on the program
completion from normal logic programming \cite{cl78} to the case of
programs with monotone constraints.
\begin{theorem}
\label{cmp.thm}
     Let $P$ be a monotone-constraint program. A set $M\subseteq \At(P)$
     is a supported model of $P$ if and only if $M$ is a model
     of $\comp(P)$.
\end{theorem}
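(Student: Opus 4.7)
The plan is to prove both directions by unpacking definitions, exploiting the fact that the two types of clauses in $\comp(P)$ correspond exactly to the two conditions for $M$ to be a supported model of $P$: clauses of type (1) encode ``$M$ is a model of $P$,'' while clauses of type (2) encode the support condition $M\subseteq \hs(P(M))$.

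For the forward implication, I would assume $M$ is a supported model of $P$ and verify that $M$ satisfies every clause of $\comp(P)$. A type (1) clause $[\bd(r)]^\wedge \rightarrow \hd(r)$ is satisfied because $M\models [\bd(r)]^\wedge$ is equivalent to $M\models\bd(r)$ (that is, $r\in P(M)$), and then by the fact that $M$ is a model of $P$ we get $M\models\hd(r)$. For a type (2) clause associated with $x\in\At(P)$, assume $x\in M$. Since $M$ is supported, $x\in\hs(P(M))$, so there exists a rule $r\in P(M)$ with $x\in\hs(r)$; because $r\in P(M)$ means $M\models\bd(r)$, the disjunct $[\bd(r)]^\wedge$ in the consequent is satisfied by $M$.

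For the converse, assume $M\models \comp(P)$. The type (1) clauses immediately give $M\models P$, since $M\models [\bd(r)]^\wedge$ iff $M\models\bd(r)$, and then $M\models\hd(r)$ follows from the clause. To establish $M\subseteq\hs(P(M))$, pick any $x\in M$; the type (2) clause for $x$ forces some disjunct $[\bd(r)]^\wedge$ (with $x\in\hs(r)$) to hold, meaning $r\in P(M)$ and hence $x\in\hs(r)\subseteq\hs(P(M))$. Combining the two observations, $M$ is a supported model of $P$.

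There is no substantive obstacle: the theorem is essentially a translation between syntactic and semantic presentations of the same two conditions. The only mild subtlety to flag is that for programs with infinitely many rules mentioning a given atom $x$, the type (2) clauses are infinitary disjunctions, but since the logic $\plmc$ was defined to allow infinitary $\vee$ with the standard semantics (a model satisfies a disjunction iff it satisfies some disjunct), the case analysis above goes through verbatim without any change in the argument.
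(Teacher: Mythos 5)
Your proposal is correct and follows essentially the same argument as the paper: both directions proceed by matching the type (1) formulas of $\comp(P)$ with the condition $M\models P$ and the type (2) formulas with the support condition $M\subseteq \hs(P(M))$, via the observation that $M\models[\bd(r)]^\wedge$ iff $M\models\bd(r)$. Your remark about infinitary disjunctions is consistent with the semantics the paper assigns to $\plmc$ and requires no change to the argument.
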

\begin{proof}
   $(\Rightarrow)$ Let us suppose that $M$ is a supported model of $P$.
   Then $M$ is a model of $P$, that is, for each rule $r \in P$, if $M
   \models \bd(r)$ then $M \models \hd(r)$. Since $M\models \bd(r)$ if
   and only if $M\models[\bd(r)]^\wedge$, it follows that all formulas
   in $\comp(P)$ of the first type are satisfied by $M$.

   Moreover, since $M$ is a supported model of $P$, $M \subseteq
   \hs(P(M))$. That is, for every atom $x\in M$, there exists at least
   one rule $r$ in $P$ such that $x\in\hs(r)$ and $M\models\bd(r)$.
   Therefore, all formulas in $\comp(P)$ of the second type are satisfied
   by $M$, too.

   \noindent
   $(\Leftarrow)$ Let us now suppose that $M$ is a model of $\comp(P)$.
   Since $M\models \bd(r)$ if and only if $M\models[\bd(r)]^\wedge$,
   and since $M$ satisfies formulas of the first type in $\comp(P)$,
   $M$ is a model of $P$.

   Let $x\in M$. Since $M$ satisfies the formula $x \rightarrow \bigvee
   \{[\bd(r)]^\wedge\colon r\in P, x\in \hs(r)\}$, it follows that
   $M$ satisfies $\bigvee \{[\bd(r)]^\wedge\colon r\in P, x\in\hs(r)\}$.
   That is, there is $r\in P$ such that $M$ satisfies $[\bd(r)]^\wedge$
   (and so, $\bd(r)$, too) and $x\in\hs(r)$. Thus, $x\in \hs(P(M))$.
   Hence, $M$ is a supported model of $P$. 
\end{proof}

  Theorems \ref{fages.thm} and \ref{cmp.thm} have the following
  corollary.

  \begin{corollary}
  Let $P$ be a monotone-constraint program. A set $M\subseteq At(P)$ is a
  stable model of $P$ if $P$ is tight on $M$ and $M$ is a model of $\comp(P)$. 
  \end{corollary}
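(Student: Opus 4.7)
The plan is to observe that this corollary is an immediate consequence of chaining Theorem \ref{cmp.thm} and Theorem \ref{fages.thm}. I would first unpack the two hypotheses: that $P$ is tight on $M$, and that $M \models \comp(P)$. The goal is to produce a stable model, and the only bridge between ``model of the completion'' and ``stable model'' we have built so far goes through the intermediate notion of a supported model.

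So the first step is to apply Theorem \ref{cmp.thm} to the hypothesis $M \models \comp(P)$, which yields that $M$ is a supported model of $P$. The second step is to feed this conclusion, together with the remaining hypothesis that $P$ is tight on $M$, into Theorem \ref{fages.thm}, whose statement is precisely that a supported model on which $P$ is tight is stable. That directly gives the conclusion.

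There is no real obstacle here: the corollary is a composition of two previously proved implications, and it requires no additional constructions, no induction, and no new lemmas. The only thing to verify is that the two theorems can be composed without a gap, which they can, since the ``supported model of $P$'' conclusion of Theorem \ref{cmp.thm} is exactly the ``supported model of $P$'' hypothesis of Theorem \ref{fages.thm}, and the tightness hypothesis is preserved (indeed, unused) by the first step. Hence a two-line proof of the form ``By Theorem \ref{cmp.thm}, $M$ is a supported model of $P$. By Theorem \ref{fages.thm}, since $P$ is tight on $M$, $M$ is a stable model of $P$'' suffices.
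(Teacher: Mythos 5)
Your proof is correct and matches the paper exactly: the paper states this corollary as an immediate consequence of Theorems \ref{fages.thm} and \ref{cmp.thm}, which is precisely the two-step composition you describe (completion model $\Rightarrow$ supported model, then tightness $\Rightarrow$ stable).
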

  
We observe that for the material in this section it is not necessary to 
require that constraints appearing in the bodies of program rules be 
monotone. However, since we are only interested in this case, we adopted 
the monotonicity assumption here, as well.

\section{Loops and Loop Formulas in Monotone-constraint Programs}
\label{secloop}

The completion alone is not quite satisfactory as it relates {\em 
supported} not {\em stable} models of monotone-constraint programs with 
models of $\plmc$ theories. Loop formulas, proposed
by \citeA{lz02},
provide a way to eliminate those supported models of normal logic 
programs, which are not stable. Thus, they allow us to use SAT solvers 
to compute stable models of {\em arbitrary} normal logic programs and 
not only those, for which supported and stable models coincide.

We will now extend this idea to monotone-constraint programs. In this
section, we will restrict our considerations to programs $P$ that are 
{\em finitary}, that is, $\At(P)$ is finite. This restriction implies
that monotone constraints that appear in finitary programs have finite 
domains.

Let $P$ be a finitary monotone-constraint program. The {\em positive 
dependency graph} of $P$ is the directed graph $G_P=(V,E)$, where $V=
At(P)$ and $\langle u, v \rangle$ is an edge in $E$ if there exists a 
rule $r\in P$ such that $u\in \hs(r)$ and $v\in \dom(A)$ for some
monotone constraint $A\in \bd(r)$ (that is, $A$ appears non-negated in
  $\bd(r)$). We note that positive dependency graphs of finitary
programs are finite.

Let $G=(V,E)$ be a directed graph. A set $L\subseteq V$ is a {\em loop} 
in $G$ if the subgraph of $G$ induced by $L$ is strongly connected.
A loop is {\em maximal} if it is not a 
proper subset of any other loop in $G$. Thus, maximal loops are vertex 
sets of strongly connected components of $G$. 
A maximal loop is {\em terminating} if there is no edge in $G$ 
from $L$ to any other maximal loop.

These concepts can be extended to the case of programs. By a {\em loop} 
({\em maximal loop}, {\em terminating loop}) of a monotone-constraint 
program $P$, we mean the loop (maximal loop, terminating loop) of the 
positive dependency graph $G_P$ of $P$. We observe that every finitary
monotone-constraint program $P$ has a terminating loop, since $G_P$ is
finite.

Let $X\subseteq \At(P)$. By $G_P[X]$  we denote the subgraph of $G_{P}$ 
{\em induced} by $X$. We observe that if $X\not=\emptyset$ then every
loop of $G_P[X]$ is a loop of $G_P$.

Let $P$ be a monotone-constraint program $P$. For every model $M$ of
$P$ (in particular, for every model $M$ of $\comp(P)$), we define $M^-=
M \setminus Can(P^M,M)$. Since $M$ is a model of $P$, $M$ is a model of 
$P^M$. Thus, $Can(P^M,M)$ is well defined and so is $M^-$.

For every loop in the graph $G_P$ we will now define the corresponding
loop formula. First, for a constraint $A=(X,C)$ and a set $L\subseteq
\At$, we set $A_{|L}=(X, \{Y\in C\colon Y\cap L=\emptyset\})$ and
call $A_{|L}$ the {\em restriction} of $A$ to $L$. Next, let $r$ be a
monotone-constraint rule, say
\[
r=\ \ A \leftarrow A_1,\ldots,A_k,\n(A_{k+1}),\ldots, \n(A_m).
\]
If $L\subseteq \At$, then define a $\plmc$ formula $\bdf_L(r)$
by setting
\[
\bdf_L(r) = {A_1}_{|L}\wedge\ldots\wedge {A_k}_{|L}\wedge \neg A_{k+1}
\wedge \ldots \wedge \neg A_m.
\]

Let $L$ be a loop of a monotone-constraint program $P$. Then, the {\em 
loop formula} for $L$, denoted by $LP(L)$, is the $\plmc$ formula
\[
LP(L) = \bigvee L \rightarrow \bigvee \{\bdf_L(r)\colon r\in P\ \mbox{
and}\ L\cap \hs(r) \neq \emptyset\} 
\]
(we recall that we use the convention to write $a$ for the constraint
$C(a) = (\{a\},$ $\{\{a\}\})$. A {\em loop completion} of a finitary 
monotone-constraint program $P$ is the $\plmc$ theory
\[
LComp(P) = \comp(P)\cup \{LP(L) \colon \mbox{$L$ is a loop in $G_P$}\}.
\]

The following theorem exploits the concept of a loop formula to provide 
a necessary and sufficient condition for a model being a stable model. 
transfinite one.
\begin{theorem}
\label{loop.thm}
     Let $P$ be a finitary monotone-constraint program. A set $M\subseteq
     \At(P)$ is a stable model of $P$ if and only if $M$ is a
     model of $LComp(P)$.
\end{theorem}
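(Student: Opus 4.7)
The plan is to prove both implications separately, in the style of Lin and Zhao, using the canonical computation machinery of Section~\ref{prel} together with the structure of the positive dependency graph.

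For the forward direction, suppose $M$ is a stable model of $P$. Then $M$ is supported, so Theorem~\ref{cmp.thm} yields $M \models \comp(P)$. Fix a loop $L$; I will show $M \models LP(L)$. If $M \cap L = \emptyset$, the antecedent $\bigvee L$ is false in $M$ and the implication holds vacuously. Otherwise, consider the canonical computation $\langle X_\alpha^{P^M,M} \rangle$, whose result equals $M$ by stability. Since $X_0 = \emptyset$ and $M \cap L \neq \emptyset$, there is a least ordinal $\alpha$ with $X_\alpha \cap L \neq \emptyset$; the limit clause of Definition~\ref{defPC} forces $\alpha = \beta+1$ to be a successor with $X_\beta \cap L = \emptyset$. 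Pick $x \in X_{\beta+1} \cap L$: by the construction of the canonical computation, there is a reduct rule $r' \in P^M(X_\beta)$ with $x \in \hs(r')$, originating from $r \in P$ such that $M \not\models A_j$ for each negated body literal $\n(A_j)$ of $r$. Each positive body constraint $A_i$ of $r$ is satisfied by $X_\beta$, a set disjoint from $L$; this external witness is exactly what is needed to conclude $M \models {A_i}_{|L}$, and hence $M \models \bdf_L(r)$, yielding $M \models LP(L)$.

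For the reverse direction, assume $M \models LComp(P)$. Then $M \models \comp(P)$, so by Theorem~\ref{cmp.thm} $M$ is a supported model. Write $N = Can(P^M, M)$ and $M^- = M \setminus N$, and suppose toward contradiction that $M^- \neq \emptyset$. The induced subgraph $G_P[M^-]$ is a nonempty finite digraph (by finitarity), so it contains a terminating strongly connected component $L$, which is itself a loop of $G_P$. Since $L \subseteq M^-$ we have $M \models \bigvee L$, so the loop formula $LP(L) \in LComp(P)$ supplies some $r \in P$ with $L \cap \hs(r) \neq \emptyset$ and $M \models \bdf_L(r)$. Then $M \not\models A_j$ for each negated body literal of $r$, so $r$ gives rise to a reduct rule $r' \in P^M$ with the same head and positive body. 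For each positive body constraint $A_i$ of $r$: the terminating-SCC property prevents edges in $G_P[M^-]$ from $L$ to other vertices, forcing $\dom(A_i) \cap M^- \subseteq L$, while $M \models {A_i}_{|L}$ forces $M \cap \dom(A_i) \cap L = \emptyset$; together these give $M \cap \dom(A_i) \subseteq N$. Since $A_i$ is monotone and $M \models A_i$, also $N \models A_i$, so $r' \in P^M(N)$. By Proposition~\ref{propnewfp}, $\hs(r') \cap M \subseteq N$; but any $x \in L \cap \hs(r) = L \cap \hs(r')$ lies in $\hs(r') \cap M \cap M^-$, so $x \notin N$, a contradiction.

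The main obstacle is the reverse direction, where the terminating-SCC property of $L$ inside $G_P[M^-]$ must be combined precisely with the restriction ${A_i}_{|L}$ and with the monotonicity of $A_i$ in order to transfer satisfaction of the positive body from $M$ down to $N$, thereby triggering the fixpoint contradiction against Proposition~\ref{propnewfp}; this is the step where both finitarity and monotonicity are essential. The forward direction is more routine given the canonical computation, but care is needed to verify that $X_\beta$ being disjoint from $L$ genuinely witnesses $M \models {A_i}_{|L}$ in the precise sense intended by the definition of the restriction.
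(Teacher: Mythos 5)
Your proof follows the paper's argument essentially step for step: the forward direction uses the canonical computation of $P^M$ and the first stage at which the loop $L$ is entered to exhibit an externally supported rule, and the reverse direction picks a terminating loop of $G_P[M^-]$ and derives a contradiction with the fixpoint property of $Can(P^M,M)$ from Proposition~\ref{propnewfp}. The delicate point you flag --- transferring satisfaction of ${A_i}_{|L}$ from the stage $X_\beta$ (disjoint from $L$) up to $M$ --- is handled in the paper by the very same appeal to monotonicity of the restricted constraint, so your proposal matches the published proof.
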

\begin{proof}
   $(\Rightarrow)$ Let $M$ be a stable model of $P$. Then $M$ is a
   supported model of $P$ and, by Theorem \ref{cmp.thm}, $M \models
   \comp(P)$.

   Let $L$ be a loop in $P$.
   If $M\cap L=\emptyset$ then $M\models LP(L)$.
   Thus, let us assume that $M\cap L\not=\emptyset$.
   Since $M$ is a stable model of $P$, $M$ is a derivable model of
   $P^M$, that is, $M=Can(P^M,M)$. Let $(X_n)_{n=0,1,\ldots}$ be the
   canonical $P^M$-derivation with respect to $M$ (since we assume that
   $P$ is finite and each constraint in $P$ has a finite domain,
   $P$-derivations reach their results in finitely many steps). Since
   $Can(P^M,M) \cap L = M\cap L \not=\emptyset$, there is a smallest
   index $n$ such that $X_n\cap L\not=\emptyset$. In particular, it
   follows that $n>0$ (as $X_0=\emptyset$) and $L\cap X_{n-1}=
   \emptyset$.

   Since $X_n=\hs(P^M(X_{n-1})\cap M$ and $X_n\cap L\not=\emptyset$, there
   is a rule $r\in P^M(X_{n-1})$ such that $\hs(r)\cap L\not=\emptyset$,
   that is, such that $L\cap \hs(r)) \neq \emptyset$.
   Let $r'$ be a rule in $P$, which
   contributes $r$ to $P^M$. Then, for every literal $\n(A)\in \bd(r')$,
   $M\models \n(A)$. Let $A\in \bd(r')$. Then $A\in \bd(r)$ and so,
   $X_{n-1}\models A$. Since $X_{n-1}\cap L=\emptyset$, $X_{n-1}\models
   A_{|L}$, too, By the monotonicity of $A_{|L}$, $M\models A_{|L}$.
   Thus, $M\models\bdf_L(r')$. Since $\hs(r')\cap L\not=\emptyset$,
   $L\cap \hs(r)) \neq \emptyset$
   and so, $M\models LP(L)$. Thus, $M\models LComp(P)$.

   \noindent
   $(\Leftarrow)$ Let us consider a set $M\subseteq \At(P)$ such that
   $M$ is not a stable model of $P$. If $M$ is not a supported model of
   $P$ that $M\not\models \comp(P)$ and so $M$ is not a model of
   $LComp(P)$. Thus, let us assume that $M$ is a supported
   model of $P$. It follows that $M^-\not=\emptyset$. Let $L\subseteq
   M^-$ be a terminating loop for $G_P[M^-]$.

   Let $r'$ be an arbitrary rule in $P$ such that $L\cap \hs(r')) \neq
   \emptyset$, and let $r$ be the rule obtained from $r'$ by removing
   negated constraints from its body. Now, let us assume that $M\models
   \bdf_{r'}(L)$. It follows that for every literal $\n(A)\in \bd(r')$, $M
   \models \n(A)$. Thus, $r\in P^M$. Moreover, since $L$ is a terminating
   loop for $G_P[M^-]$, for every constraint $A\in \bd(r')$, $\dom(A)\cap
   M^- \subseteq L$. Since $M\models A_{|L}$, it follows that $Can(P^M,M)
   \models A$. Consequently, $\hs(r')\cap L \subseteq
   \hs(r')\cap M\subseteq Can(P^M,M)$ and so, $L\cap Can(P^M,M)\not=
   \emptyset$, a contradiction. Thus, $M\not\models \bigvee\{\bdf_{r'}(L)
   \colon r'\in P\ \mbox{and}\ L\cap \hs(r')) \neq \emptyset\}$. Since
   $M\models \bigvee L$, it follows that $M\not\models LP(L)$ and so,
   $M\not\models LComp(P)$.
\end{proof}

The following result follows directly from the proof of Theorem
\ref{loop.thm} and provides us with a way to filter out specific 
non-stable supported models from $\comp(P)$.

\begin{theorem}
\label{loop.cor}
     Let $P$ be a finitary monotone-constraint program and $M$ a model
     of $\comp(P)$. If $M^-$ is not empty, then $M$ violates the loop
     formula of every terminating loop of $G_P[M^-]$.
\end{theorem}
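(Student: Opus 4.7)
My plan is to lift the argument from the $(\Leftarrow)$ direction of Theorem \ref{loop.thm}, since that proof already establishes the stronger statement that \emph{every} terminating loop of $G_P[M^-]$ yields a violated loop formula whenever $M$ is a supported-but-not-stable model. The corollary just repackages that observation under the slightly weaker hypothesis that $M \models \comp(P)$.

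To set things up, I would start from a model $M$ of $\comp(P)$ with $M^- \neq \emptyset$. By Theorem \ref{cmp.thm}, $M$ is a supported model of $P$, so in particular $M$ is a model of $P^M$, making $Can(P^M,M)$ and $M^-$ well-defined. I then fix an arbitrary terminating loop $L$ of $G_P[M^-]$ (which exists since $G_P[M^-]$ is a finite, non-empty directed graph) and aim to show that $M$ violates $LP(L)$. Since $L \subseteq M^- \subseteq M$, the antecedent $\bigvee L$ is trivially satisfied, so everything reduces to showing that $M$ satisfies no disjunct $\bdf_L(r')$ for any $r' \in P$ with $L \cap \hs(r') \neq \emptyset$.

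Fixing such an $r'$, I would suppose for contradiction that $M \models \bdf_L(r')$. Satisfaction of the negated portion of $\bd(r')$ puts the Horn rule $r$ obtained from $r'$ by deleting the default negations into $P^M$. For each non-negated constraint $A \in \bd(r')$, the key observation is that $L$ being terminating in $G_P[M^-]$ forces $\dom(A) \cap M^- \subseteq L$, while $M \models A_{|L}$ forces $(M \cap \dom(A)) \cap L = \emptyset$. These two facts together give $M \cap \dom(A) \subseteq Can(P^M,M)$, so by monotonicity of $A$ I conclude $Can(P^M,M) \models A$. Hence $r$ is $Can(P^M,M)$-applicable in $P^M$, and Proposition \ref{propnewfp} yields $\hs(r') \cap M \subseteq Can(P^M,M)$, contradicting $\emptyset \neq L \cap \hs(r') \subseteq M^- = M \setminus Can(P^M,M)$.

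The main subtlety, and really the only one, is the graph-theoretic step $\dom(A) \cap M^- \subseteq L$: one must use that termination of $L$ is meant inside $G_P[M^-]$ rather than inside $G_P$, so that edges from vertices of $L$ are allowed to leave $M^-$ but cannot land in $M^- \setminus L$. Everything else is a direct unwinding of the definitions of $A_{|L}$ and $\bdf_L(r')$ together with the fixpoint property of canonical computations already proved.
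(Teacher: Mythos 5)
Your proposal is correct and follows exactly the route the paper intends: the paper gives no separate argument for this theorem, stating only that it ``follows directly from the proof of Theorem \ref{loop.thm},'' and your write-up is precisely the $(\Leftarrow)$ half of that proof (reduct rule enters $P^M$, termination of $L$ in $G_P[M^-]$ gives $\dom(A)\cap M^-\subseteq L$, the restriction $A_{|L}$ gives $M\cap\dom(A)\cap L=\emptyset$, and Proposition \ref{propnewfp} yields the contradiction with $L\subseteq M^-$), carried out for an arbitrary terminating loop. No gaps.
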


Finally, we point out that, Theorem \ref{loop.thm} does not hold
when a program $P$ contains infinitely many rules. Here is a counterexample:

\noindent
{\bf Examples.}
Let $P$ be the set of following rules:
\begin{quote}
\noindent
$1\{a_0\} \leftarrow 1\{a_1\}$\\
$1\{a_1\} \leftarrow 1\{a_2\}$\\
$\cdots$\\
$1\{a_n\} \leftarrow 1\{a_{n+1}\}$\\
$\cdots$
\end{quote}

Let $M=\{a_0,\ldots,a_n,\ldots\}$. Then $M$ is a supported model of $P$.
The only stable model of $P$ is $\emptyset$. However, $M^-=M\setminus
\emptyset$ does not contain any terminating loop. The problem arises
because there is an infinite simple path in $G_P[{M^-}]$. Therefore,
$G_P[{M^-}]$ does not have a sink, yet it does not have a terminating
loop either.
\hfill$\eoe$

  The results of this section, concerning the program completion and 
  loop formulas --- most importantly, the loop-completion theorem ---
  form the basis of a new software system to compute stable models of
  {\em lparse} programs. We discuss this matter in Section \ref{sec-appl}.

\section{Programs with Convex Constraints}
\label{secconvex}

We will now discuss programs with convex constraints, which are closely
related to programs with monotone constraints. Programs with convex
constraints are of interest as they do not involve explicit occurrences 
of the default negation operator $\n$, yet are as expressive as programs 
with monotone-constraints. Moreover, they directly subsume an essential
fragment of the class of {\em lparse} programs \cite{sns02}.

A constraint $(X,C)$ is {\em convex}, if for every $W,Y,Z \subseteq X$ 
such that $W \subseteq Y \subseteq Z$ and $W,Z\in C$, we have $Y \in
C$.
A constraint rule of the form
(\ref{eq1a})
is a 
{\em convex-constraint rule} if $A$, $A_1,\ldots,A_n$ are convex
constraints and $m=k$.
Similarly, a constraint program built of
convex-constraint rules is a {\em convex-constraint program}.

The concept of a model discussed in Section \ref{prel} applies to
convex-constraint programs. To define supported and stable models 
of convex-constraint programs, we view them as special programs with 
monotone-constraints.

To this end, we define the {\em upward} and {\em downward closures} of 
a constraint $A=(X,C)$ to be constraints $A^+=(X,C^+)$ and $A^-=(X,C^-)$,
respectively, where
\begin{quote}
$C^+= \{Y\subseteq X\colon \mbox{for some $W\in C$, $W\subseteq Y$}\}$,
{and}\\
$C^-= \{Y\subseteq X\colon \mbox{for some $W\in C$, $Y\subseteq W$}\}$.
\end{quote}
We note that the constraint $A^+$ is monotone. We call a constraint
$(X,C)$ {\em antimonotone} if $C$ is closed under subset, that is, for 
every $W, Y \subseteq X$, if $Y \in C$ and $W \subseteq Y$ then $W\in 
C$. It is clear that the constraint $A^-$ is antimonotone. 

The upward and downward closures allow us to represent any convex 
constraint as the ``conjunction'' of a monotone constraint and an 
antimonotone constraint.Namely, we have the following property of convex 
constraints.

\begin{proposition}
\label{can}
A constraint $(X,C)$ is convex if and only if $C=C^+ \cap C^-$.
\end{proposition}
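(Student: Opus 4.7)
The plan is to prove both inclusions and both directions directly from the definitions, unpacking what membership in $C^+$, $C^-$ and in $C$ mean in terms of superset/subset witnesses in $C$. The argument is entirely set-theoretic; there is no real obstacle, and I expect both directions to be short.

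For the forward direction, I would assume $(X,C)$ is convex and show $C = C^+ \cap C^-$. The inclusion $C \subseteq C^+ \cap C^-$ is immediate: any $Y \in C$ serves as its own witness (using $W = Y$) both in $C^+$ (via $Y \subseteq Y$) and in $C^-$ (via $Y \subseteq Y$). For the reverse inclusion, take $Y \in C^+ \cap C^-$. By definition there exist $W_1, W_2 \in C$ with $W_1 \subseteq Y$ and $Y \subseteq W_2$. Thus $W_1 \subseteq Y \subseteq W_2$ with both endpoints in $C$, so convexity gives $Y \in C$.

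For the backward direction, assume $C = C^+ \cap C^-$ and verify convexity. Given $W \subseteq Y \subseteq Z$ with $W, Z \in C$, observe that $W \in C$ and $W \subseteq Y$ witnesses $Y \in C^+$, while $Z \in C$ and $Y \subseteq Z$ witnesses $Y \in C^-$. Hence $Y \in C^+ \cap C^- = C$, as required.

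The only subtlety worth noting is keeping the roles of the two witnesses $W_1, W_2$ distinct in the forward direction — they need not be equal — but convexity is exactly the closure condition that lets us sandwich $Y$ between them. No further machinery from the paper is needed.
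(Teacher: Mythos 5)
Your proof is correct and follows essentially the same argument as the paper: the forward direction uses each set as its own witness for $C \subseteq C^+ \cap C^-$ and sandwiches an element of $C^+ \cap C^-$ between its two witnesses via convexity, while the backward direction reads off membership in $C^+$ and $C^-$ from the endpoints of the chain. No gaps.
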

\begin{proof}
($\Leftarrow$) Let us assume that $C=C^+\cap C^-$and let us consider
a set $M$ such that $M'\subseteq M \subseteq M''$, where $M',M''\in C$. 
it follows that $M'\in C^+$ and $M''\in C^-$. Thus, $M\in C^+$ and $M
\in C^-$. Consequently, $M\in C$, which implies that $(X,C)$ is 
convex.

\noindent
($\Rightarrow$) The definitions directly imply that $C\subseteq C^+$ 
and $C\subseteq C^-$. Thus, $C\subseteq C^+\cap C^-$. Let us consider
$M\in C^+\cap C^-$. Then there are sets $M',M''\in C$ such that $M'
\subseteq M$ and $M\subseteq M''$. Since $C$ is convex, $M\in C$.
Thus, $C^+\cap C^-\subseteq C$ and so, $C=C^+\cap C^-$.
\end{proof}

Proposition \ref{can} suggests an encoding of convex-constraint programs as 
monotone-constraint programs. To present it, we need more notation. 
For a constraint $A=(X,C)$, we call the constraint $(X,\overline{C})$, 
where $\overline{C}= {\cal P}(X) \setminus C$, the {\em dual constraint} 
for $A$. We denote it by $\overline{A}$. 
It is a direct consequence of the
definitions that a constraint $A$ is monotone if and only if its
dual $\overline{A}$ is antimonotone.

Let $C$ be a convex constraint. We set $mc(C)=\{C\}$ if $C$ is monotone.
We set $mc(C)=\{\n(\overline{C})\}$, if $C$ is antimonotone. We define 
$mc(C)=\{C^+,\n(\overline{C^-})\}$, if $C$ is neither monotone nor 
antimonotone. Clearly, $C$ and $mc(C)$ have the same models.

Let $P$ be a convex-constraint program. By $mc(P)$ we denote the program 
with monotone constraints obtained by replacing every rule 
$r$ in $P$ with a rule $r'$ such that
\[
\hd(r')=\hd(r)^+\ \ \mbox{and}\ \ \bd(r')=\bigcup\{mc(A)\colon A\in
\bd(r)\}
\]
and, if $\hd(r)$ is {\em not} monotone, also with an additional rule
$r''$ such that
\[
\hd(r'')= (\emptyset, \emptyset)\ \ \mbox{and}\ \ \bd(r'')=
\{\overline{\hd(r)^-}\}\cup \bd(r'). 
\]
By our observation above, all constraints appearing in rules of $mc(P)$ 
are indeed monotone, that is, $mc(P)$ is a program with monotone 
constraints.

It follows from Proposition \ref{can} that $M$ is a model of $P$ if and 
only if $M$ is a model of $mc(P)$. We extend this correspondence to 
supported and stable models of a convex constraint program $P$ and the
monotone-constraint program $mc(P)$.

\begin{definition}
  Let $P$ be a convex constraint program. Then a set of atoms $M$ is
  a supported (or stable) model of $P$ if $M$ is a supported (or
  stable) model of $mc(P)$.
\end{definition}

With these definitions, monotone-constraint programs can be viewed 
(almost) directly as convex-constraint programs. Namely, we note that 
monotone and antimonotone constraints are convex. Next, we observe that 
if $A$ is a monotone constraint, the expression $\n(A)$ has the same 
meaning as the antimonotone constraint $\overline{A}$ in the sense that 
for every interpretation $M$, $M\models \n(A)$ if and only if $M\models 
\overline{A}$. 

Let $P$ be a monotone-constraint program. By $cc(P)$ we denote the 
program obtained from $P$ by
replacing every rule $r$ of the form (\ref{eq1a}) in $P$ with $r'$
such that
\[
\hd(r')=\hd(r)\ \ \mbox{and}\ \ \bd(r')=\bigcup\{A_i\colon i=1,\ldots,k\}
\cup \bigcup\{\overline{A_j}\colon j=k+1,\ldots,m\}
\]
One can show that 
programs $P$ and $cc(P)$ have the same models, supported models and 
stable models. In fact, for every monotone-constraint program $P$ we 
have $P=mc(cc(P))$. 

\noindent
{\bf Remark.}
Another consequence of our discussion is that the default negation 
operator can be eliminated from the syntax at the price of allowing 
antimonotone constraints and using antimonotone constraints as negated 
literals.
\hfill$\Box$

Due to the correspondences we have established above, one can extend to 
convex-constraint programs all concepts and results we discussed earlier 
in the context of monotone-constraint programs. In many cases, they can 
also be stated {\em directly} in the language of convex-constraints. The 
most important for us are the notions of the completion and loop formulas, 
as they lead to new algorithms for computing stable models of {\em 
lparse} programs. Therefore, we will now discuss them in some detail.

As we just mentioned, we could use $\comp(mc(P))$ as a definition of 
the completion $\comp(P)$ for a convex-constraint logic program $P$. 
Under this definition Theorems \ref{convexloop.thm} extends to the case 
of convex-constraint programs. However, $\comp(mc(P))$ involves monotone 
constraints and their negations and {\em not} convex constraints that
appear in $P$. Therefore, we will now propose another approach, which 
preserves convex constraints of $P$.

To this end, we first extend the logic $\plmc$ with convex constraints. 
In this extension, which we denote by $\plcc$ and refer to as the {\em 
propositional logic with convex-constraints}, formulas are boolean 
combinations of convex constraints. The semantics of such formulas is 
given by the notion of a model obtained by extending over boolean 
connectives the concept of a model of a convex constraint.

Thus, the only difference between the logic $\plmc$, which we used to 
define the completion and loop completion for monotone-convex programs 
and the logic $\plcc$ is that the former uses monotone constraints as 
building blocks of formulas, whereas the latter is based on convex 
constraints. In fact, since monotone constraints are special convex 
constraints, the logic $\plmc$ is a fragment of the logic $\plcc$.

Let $P$ be a convex-constraint program. The completion of $P$,
denoted by \\
$\comp(P)$, is the following set of $\plcc$ formulas:

\begin{enumerate}
\item For every rule $r\in P$ we include in $\comp(P)$ a $\plcc$ formula
\[
[\bd(r)]^\wedge \rightarrow \hd(r)
\]
(as before, for a set of convex constraints $L$, $L^\wedge$ denotes the
conjunction of the constraints in $L$)
\item For every atom $x\in \At(P)$, we include in $\comp(P)$ a $\plcc$
formula 
\[
x \rightarrow \bigvee \{[\bd(r)]^\wedge\colon r\in P,\ x\in 
\hs(r)\}
\]
(again, we note that when the set of rules in $P$ is infinite, the 
disjunction may be infinitary).
\end{enumerate}

One can now show the following theorem.

\begin{theorem}
Let $P$ be a convex-constraint program and let $M\subseteq \At(P)$.
Then $M$ is a supported model of $P$ if and only if 
$M$ is a model of $\comp(P)$.
\end{theorem}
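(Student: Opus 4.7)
The plan is to reduce this to Theorem \ref{cmp.thm} via the translation $mc(P)$, using the definition that $M$ is a supported model of a convex-constraint program $P$ precisely when $M$ is a supported model of $mc(P)$. There are two natural routes: either prove directly $M \models \comp(P) \Leftrightarrow M \models \comp(mc(P))$ and chain through Theorem \ref{cmp.thm}, or imitate the proof of Theorem \ref{cmp.thm} line by line in the convex setting. I would favor the second route because the argument of Theorem \ref{cmp.thm} only uses two semantic facts about constraints, namely (i) $M \models \bd(r) \Leftrightarrow M \models [\bd(r)]^\wedge$ and (ii) $x \in \hs(P(M))$ iff some $M$-applicable rule contains $x$ in its headset --- both of which survive verbatim for convex constraints.

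For the $(\Rightarrow)$ direction I would start from the assumption that $M$ is a supported model of $P$, i.e.\ of $mc(P)$. This immediately gives that $M$ is a model of $mc(P)$, hence of $P$ by Proposition \ref{can}, so every rule-completion axiom $[\bd(r)]^\wedge \rightarrow \hd(r)$ is satisfied. For the atom-completion axioms I would fix $x \in M$ and use $M \subseteq \hs(mc(P)(M))$ to obtain an $M$-applicable rule $r' \in mc(P)$ with $x \in \hs(r')$. The key observation is that $r'$ must come from the first clause in the definition of $mc$ (since the auxiliary rules $r''$ have headset $\emptyset$), so it arises from some $r \in P$ with $\hs(r') = \hs(r)$ and $\bd(r')$ consisting of the monotone/antimonotone components of the convex constraints in $\bd(r)$; since each convex $A$ and its translation $mc(A)$ have the same models, $M \models \bd(r)$, which is what the axiom demands.

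For $(\Leftarrow)$ I would reverse the argument. The rule-completion axioms immediately yield that $M$ is a model of $P$, hence of $mc(P)$. For each $x \in M$, the atom-completion axiom supplies a rule $r \in P$ with $x \in \hs(r)$ and $M \models \bd(r)$; translating to $mc(P)$ produces an $M$-applicable rule $r' \in mc(P)$ with $x \in \hs(r')$, establishing $M \subseteq \hs(mc(P)(M))$ and completing supportedness.

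The only step that requires genuine care, and which I would expect to be the main bookkeeping obstacle, is verifying the tight correspondence between $M$-applicability in $P$ and in $mc(P)$: one must check that the auxiliary rules $r''$ (created only when $\hd(r)$ is non-monotone) neither contribute spurious support nor spoil applicability, and that the passage $A \mapsto mc(A)$ preserves satisfaction on both the body and head sides. Once this is spelled out as a short lemma, the proof reduces to a mechanical rewriting of the argument for Theorem \ref{cmp.thm}.
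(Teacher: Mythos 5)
Your proposal is correct and follows essentially the same route as the paper: the paper's (sketched) proof also reduces the statement to Theorem \ref{cmp.thm} via the definitional identification of supported models of $P$ with those of $mc(P)$, leaving as ``routine checking'' exactly the rule-level correspondence you spell out (that $\hs(r')=\hs(r)$, that the auxiliary rules $r''$ have empty headset, and that $A$ and $mc(A)$ have the same models). Your version merely makes that routine checking explicit, which is a faithful filling-in rather than a different argument.
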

\begin{proof}
(Sketch) By the definition, $M$ is a supported model of $P$ if and only 
if $M$ is a supported model of $mc(P)$. It is a matter of routine 
checking that $\comp(mc(P))$ and $\comp(P)$ have the same models. Thus
the assertion follows from Theorem \ref{cmp.thm}.
\end{proof}

Next, we restrict attention to {\em finitary} convex-constraint programs,
that is, programs with finite set of atoms, and extend to this class of 
programs the notions of the positive dependency graph and loops. 
%
To this end,
we exploit 
its representation as a monotone-constraint program $mc(P)$. That is,
we define the positive dependency graph, loops and loop formulas for $P$
as the positive dependency graph, loops and loop formulas of $mc(P)$, 
respectively. In particular, $L$ is a loop of $P$ if and only if $L$ is 
a loop of $mc(P)$ and the loop formula for $L$, with respect to a 
convex-constraint program $P$, is defined as the loop formula $LP(L)$ 
with respect to the program $mc(P)$\footnote{There is one minor
simplification one might employ. For a monotone constraint $A$, $\neg A$
and $\overline{A}$ are equivalent and $\overline{A}$ is antimonotone and
so, convex. Thus, we can eliminate the operator $\neg$ from loop
formulas of convex-constraint programs by writing $\overline{A}$ instead
of $\neg A$.}. We note that since loop formulas for monotone-constraint 
programs only modify non-negated literals in the bodies of rules and
leave negated literals intact, there seems to be no simple way to extend 
the notion of a loop formula to the case of a convex-constraint program 
$P$ without making references to $mc(P)$.

We now define a {\em loop completion} of a finitary convex-constraint 
program $P$ as the $\plcc$ theory
\[
LComp(P) = \comp(P)\cup \{LP(L) \colon \mbox{$L$ is a loop of $P$}\}.
\]

We have the following theorem that provides a necessary and 
sufficient condition for a set of atoms to be a stable model of a
convex-constraint program.

\begin{theorem}
\label{convexloop.thm}
      Let $P$ be a finitary convex-constraint program. A set $M\subseteq
      \At(P)$ is a stable model of $P$ if and only if $M$ is a model of
      $LComp(P)$.
\end{theorem}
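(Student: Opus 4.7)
The plan is to reduce the statement to Theorem \ref{loop.thm}, the analogous loop-formula characterization already proved for finitary monotone-constraint programs, via the translation $mc(P)$. By the definition at the start of Section \ref{secconvex}, $M$ is a stable model of $P$ iff $M$ is a stable model of $mc(P)$; moreover, $mc(P)$ is a finitary monotone-constraint program whose atom set is $\At(P)$. Hence Theorem \ref{loop.thm} gives $M\in St(P)$ iff $M\models LComp(mc(P))$, and it only remains to verify that $M\models LComp(P)$ iff $M\models LComp(mc(P))$.

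I would split $LComp$ into its two pieces. The loop-formula pieces are literally identical by definition: the positive dependency graph, loops, and the loop formulas $LP(L)$ of $P$ were defined as those of $mc(P)$, so $\{LP(L):L \text{ a loop of } P\}$ appears verbatim on both sides. Thus the problem reduces to checking that $\comp(P)$ (built in $\plcc$ from the convex constraints of $P$) and $\comp(mc(P))$ (built in $\plmc$ from the monotone constraints produced by $mc$) have the same models. I would re-use exactly the sketch argument already indicated for the previous $\comp$-theorem of this section, leaning on two observations: first, for any convex constraint $A$ and set $M$, Proposition \ref{can} gives $M\models A$ iff $M\models A^+$ and $M\not\models \overline{A^-}$; second, the $mc$ translation preserves the satisfaction of every literal of a rule body, since the body literals $mc(A)$ of a convex $A$ are designed so that $M\models A$ iff $M$ satisfies all literals in $mc(A)$.

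The main obstacle will be bookkeeping around the auxiliary ``head-negation'' rule $r''$ that the $mc$ translation introduces when $\hd(r)$ is not monotone. In $\comp(mc(P))$ the clause coming from $r''$ reads $[\bd(r'')]^\wedge \rightarrow \hd(r'')$, where $\hd(r'')=(\emptyset,\emptyset)$ is inconsistent, so the clause is equivalent to $\neg(\overline{\hd(r)^-}\wedge [\bd(r')]^\wedge)$; combining it with the clause from $r'$, namely $[\bd(r')]^\wedge \rightarrow \hd(r)^+$, yields $[\bd(r)]^\wedge \rightarrow (\hd(r)^+ \wedge \neg \overline{\hd(r)^-})$, which by Proposition \ref{can} is equivalent to $[\bd(r)]^\wedge \rightarrow \hd(r)$, the corresponding clause in $\comp(P)$. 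For the support-direction clauses $x\rightarrow \bigvee\{[\bd(r)]^\wedge: r\in P,\ x\in\hs(r)\}$, note that $\hs(r'')=\emptyset$, so $r''$ never contributes to a support disjunction; hence for each $x\in \At(P)$ the rules of $mc(P)$ with $x$ in the headset are precisely the rules $r'$ arising from rules $r\in P$ with $x\in \hs(r)$, and under $M$ their body-conjunctions are equisatisfiable. Once these checks are in place, $\comp(P)$ and $\comp(mc(P))$ have the same models, and chaining the equivalences finishes the proof.
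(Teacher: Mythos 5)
Your proposal follows exactly the paper's own argument: the paper's proof is a two-line sketch that reduces to Theorem \ref{loop.thm} via $mc(P)$ and asserts that $LComp(P)$ and $LComp(mc(P))$ have the same models as ``routine checking.'' Your write-up simply fills in that routine checking (correctly, including the handling of the auxiliary rule $r''$ with inconsistent head and the use of Proposition \ref{can}), so it is the same approach, carried out in more detail.
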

\begin{proof}
    (Sketch) 
    Since $M$ is a stable model of $P$ if and only of $M$ is a stable
    model of $mc(P)$, Theorem \ref{loop.thm} implies that $M$ is a 
    stable model of $P$ if and only if $M$ is a stable model of 
    $LComp(mc(P))$. It is a matter of routine checking that $LComp(mc
    (P))$ and $LComp(P)$ have the same models. Thus, the result follows.
\end{proof}

In a similar way, Theorem \ref{loop.cor} implies the following result for
convex-constraint programs.

\begin{theorem}
\label{convexloop.cor}
      Let $P$ be a finitary convex-constraint program and $M$ a model
      of $\comp(P)$. If $M^-$ is not empty, then $M$ violates the loop
      formula of every terminating loop of $G_P[M^-]$.
\end{theorem}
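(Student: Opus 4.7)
The plan is to reduce the statement to Theorem \ref{loop.cor} by passing to the associated monotone-constraint program $mc(P)$, mirroring the strategy used in the proof sketch of Theorem \ref{convexloop.thm}. Since the positive dependency graph, loops, and loop formulas of a finitary convex-constraint program $P$ were \emph{defined} to be those of $mc(P)$, and since the definitions of supported model, stable model, reduct, and hence $M^- = M \setminus Can(P^M,M)$ for $P$ are given via $mc(P)$, nearly all the relevant data transfer from $P$ to $mc(P)$ without change.

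First, I would verify that $M$ being a model of $\comp(P)$ implies $M$ is a model of $\comp(mc(P))$; this is the same routine equivalence already invoked in the proofs of the preceding two theorems (a convex head $\hd(r)$ is satisfied by $M$ iff both its monotone closure $\hd(r)^+$ and the antimonotone part $\hd(r)^-$ are, which is exactly what the two $mc$-rules $r'$ and $r''$ jointly encode, while the bodies translate literal-for-literal via $mc(\cdot)$). Second, I would observe that $M^-$ as computed for $P$ coincides with $M^-$ as computed for $mc(P)$, directly from the definition of stable models of convex-constraint programs. Third, the induced subgraph $G_P[M^-]$ is literally $G_{mc(P)}[M^-]$, so the notion of a terminating loop $L$ of $G_P[M^-]$ coincides with that for $mc(P)$. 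Finally, $LP(L)$ for $P$ is by definition $LP(L)$ for $mc(P)$. Applying Theorem \ref{loop.cor} to $mc(P)$ then yields that $M$ violates $LP(L)$ for every terminating loop $L$ of $G_P[M^-] = G_{mc(P)}[M^-]$, which is exactly the desired conclusion.

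The main obstacle, if any, is not conceptual but bookkeeping: one must check carefully that the auxiliary rules $r''$ with head $(\emptyset,\emptyset)$ introduced by the $mc$ translation do not spuriously alter the positive dependency graph or the canonical computation in a way that would create or destroy terminating loops in $G_P[M^-]$. Because such an $r''$ has an inconsistent head, it can fire only when its body is not satisfied by $M$ (otherwise $M$ would not be a model), so it contributes no $M$-applicable rule and hence affects neither $Can(P^M,M)$ nor the satisfaction of loop formulas; after this observation is dispatched, the reduction to Theorem \ref{loop.cor} is immediate.
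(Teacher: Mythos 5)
Your proof is correct and takes exactly the route the paper intends: the paper states this theorem without proof, noting only that it follows from Theorem \ref{loop.cor} ``in a similar way'' to how Theorem \ref{convexloop.thm} follows from Theorem \ref{loop.thm}, i.e., by passing to $mc(P)$ and using that the dependency graph, loops, loop formulas, and $M^-$ for a convex-constraint program are all defined via $mc(P)$. Your additional check that the auxiliary rules $r''$ with inconsistent head $(\emptyset,\emptyset)$ contribute no edges to $G_{mc(P)}$ (their headset is empty) and can never be applicable under a model $M$ is a worthwhile piece of bookkeeping that the paper leaves implicit.
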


We emphasize that one could simply use $LComp(mc(P))$ as a definition 
of the loop completion for a convex-constraint logic program. However,
our definition of the completion component of the loop completion
retains the structure of constraints in a program $P$, which is
important when using loop completion for computation of stable
models, the topic we address in the next section of the paper.

\section{Applications}
\label{sec-appl}

In this section, we will use theoretical results on the program
completion, loop formulas and loop completion of programs with convex 
constraints to design and implement a new method for computing stable 
models of {\em lparse} programs \cite{sns02}. 

\subsection{{\em Lparse} Programs}
\label{wa}

\citeA{sns02} introduced and studied an extension of normal logic
programming with weight atoms. Formally, a {\em weight atom} is an 
expression
\[
A = l[a_1=w_1,\ldots,a_k=w_k]u, 
\]
where $a_i$, $1\leq i\leq k$ are propositional atoms, and $l,u$ and 
$w_i$, $1\leq i\leq k$ are non-negative integers. If all weights $w_i$
are equal to 1, $A$ is a {\em cardinality atom}, written as $l\{a_1,
\ldots,a_k\}u$.

An {\em lparse rule} is an expression of the form
\[
A\leftarrow A_1,\ldots,A_n
\]
where $A$, $A_1,\ldots,A_n$ are weight atoms. We refer to sets of {\em 
lparse} rules as {\em lparse programs}. \citeA{sns02} defined for {\em
lparse} programs the semantics of stable models.

A set $M$ of atoms is a {\em model} of (or {\em satisfies}) a weight atom 
$l[a_1=w_1,\ldots, a_k=w_k]u$ if
\[
l\leq \sum_{i=1}^k \{w_i\colon a_i\in M\} \leq u.
\]

With this semantics a weight atom $l[a_1=w_1,\ldots, a_k= w_k]u$ can be
identified with a constraint $(X,C)$, where $X=\{a_1,\ldots,a_k\}$ and
\[
C=\{Y\subseteq X\colon l\leq \sum_{i=1}^k \{w_i\colon a_i\in Y\} \leq u\}.
\]

  We notice that all weights in a weight atom $W$ are non-negative.
  Therefore, if $M\subseteq M'\subseteq M''$ and both $M$ and $M''$
  are models of $W$, then $M'$ is also a model of $W$. It follows
  that the constraint $(X,C)$ we define above is convex.
  
Since $(X,C)$ is convex, weight atoms represent a class of convex
constraints and {\em lparse} programs syntactically are a class of 
programs with convex constraints. This relationship extends to 
the stable-model semantics. Namely, \citeA{mt04} and \citeA{mnt03,mnt06}
showed that {\em 
lparse} programs can be encoded as programs with monotone constraints so 
that the concept of a stable model is preserved. The transformation used
there coincides with the encoding $mc$ described in the previous section,
when we restrict the latter to {\em lparse} programs. Thus, we have the 
following theorem.

\begin{theorem}
\label{lparse}
Let $P$ be an lparse program. A set $M\subseteq \At$ is a stable model 
of $P$ according to the definition
by \citeA{sns02}
if and only if $M$ 
is a stable model of $P$ according to the definition given in the 
previous section (when $P$ is viewed as a convex-constraint 
program).
\end{theorem}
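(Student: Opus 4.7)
The plan is to prove Theorem~\ref{lparse} as a short chain of equivalences that routes both semantics through the monotone-constraint program $mc(P)$. The two nontrivial facts I need to line up are: (i) by the definition adopted in Section~\ref{secconvex}, a set $M$ is a stable model of the convex-constraint program $P$ precisely when $M$ is a stable model of the monotone-constraint program $mc(P)$; and (ii) by the result of \citeA{mt04} and \citeA{mnt03,mnt06}, an $M$ is a stable model of the \emph{lparse} program $P$ in the sense of \citeA{sns02} if and only if $M$ is a stable model of a specific monotone-constraint encoding of $P$, which I will call $\mu(P)$. The theorem then reduces to showing $mc(P) = \mu(P)$ (or at least that the two programs have identical stable models), which is already asserted informally in the paragraph preceding the theorem; my job is to make that identification precise.

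First I would recall the shape of a weight atom $A = l[a_1=w_1,\ldots,a_k=w_k]u$ and its associated convex constraint $(X,C)$ with $X=\{a_1,\ldots,a_k\}$ and $C=\{Y\subseteq X : l\leq \sum_{a_i\in Y} w_i \leq u\}$. Its upward closure $A^+$ is the lower-bound weight constraint $l[a_1=w_1,\ldots,a_k=w_k]$, and the dual $\overline{A^-}$ of its downward closure is the lower-bound weight constraint $(u{+}1)[a_1=w_1,\ldots,a_k=w_k]$. Thus $mc(A)$ splits a two-sided weight atom $A$ into its monotone ``lower-bound part'' $A^+$ together with the negation of its ``above-$u$'' monotone part $\overline{A^-}$, exactly matching the way \citeA{sns02} split each weight atom into a monotone lower-bound atom and a negated monotone upper-bound atom in their reduct-based definition. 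A brief case distinction (when $A$ is already monotone, when $A$ is antimonotone, and when $A$ is genuinely two-sided both in the head and in the body) checks that the rule transformation $r\mapsto r'$ (plus the auxiliary $r''$ with empty head when $\hd(r)$ is not monotone) is the rule-by-rule counterpart of the SNS02 ``reduct preprocessing,'' so $mc(P)$ and $\mu(P)$ coincide as sets of monotone-constraint rules.

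Next I would string the equivalences together. By Proposition~\ref{can}, $M\models A \Leftrightarrow M\models A^+ \wedge M\models\overline{A^-}$, so $M$ satisfies $P$ iff $M$ satisfies $mc(P)$; this handles the body-conditions of rules. For the head of a non-monotone convex constraint, the combined effect of $\hd(r')=\hd(r)^+$ and the side rule $r''$ with $\hd(r'')=(\emptyset,\emptyset)$ and $\overline{\hd(r)^-}$ added to its body enforces both $M\models \hd(r)^+$ and $M\not\models \overline{\hd(r)^-}$ whenever $M$ satisfies the body of the rule, which together mean exactly $M\models\hd(r)$. With this in hand, the equivalences $M\in\mathit{St}_{\mathrm{SNS02}}(P) \Longleftrightarrow M\in\mathit{St}(\mu(P)) = \mathit{St}(mc(P)) \Longleftrightarrow M\in\mathit{St}(P)$ (the last by Definition from Section~\ref{secconvex}) give the theorem.

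The main obstacle is the verification step $mc(P)=\mu(P)$: it is essentially bookkeeping, but one has to be careful that the auxiliary rule $r''$ that $mc$ introduces when the head of an \emph{lparse} rule is a two-sided weight atom matches SNS02's treatment of upper bounds in heads (which in their formulation is folded into a constraint-style condition rather than a separate rule). Since both devices are exactly forbidding models of the body that violate the upper bound of the head, they yield the same stable models; once this is checked, no further work is required because every other ingredient, derivable models of the reduct and the one-step provability operator, is defined identically in both treatments.
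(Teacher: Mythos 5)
Your proposal follows the same route the paper intends: the paper offers no actual proof of Theorem~\ref{lparse}, only the remark in the preceding paragraph that the encoding of \emph{lparse} programs into monotone-constraint programs used by \citeA{mt04} and \citeA{mnt03,mnt06} ``coincides with'' $mc$ restricted to \emph{lparse} programs, after which the theorem is read off from their equivalence result together with the definition of stable models of convex-constraint programs via $mc(P)$. Your chain $M\in St_{\mathrm{SNS02}}(P) \Leftrightarrow M\in St(\mu(P)) = St(mc(P)) \Leftrightarrow M\in St(P)$ is exactly that argument made explicit.

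There is, however, a genuine flaw in the one nontrivial step, the identification $mc(P)=\mu(P)$. For a weight atom $A=l[a_1=w_1,\ldots,a_k=w_k]u$ with associated constraint $(X,C)$, the upward closure $A^+=(X,C^+)$ is in general \emph{not} the lower-bound weight atom $l[a_1=w_1,\ldots,a_k=w_k]$: take $A=1[a_1=1,a_2=3]1$, so that $C=\{\{a_1\}\}$ and $C^+=\{\{a_1\},\{a_1,a_2\}\}$, while $\{a_2\}$ satisfies $1[a_1=1,a_2=3]$ but lies in neither $C$ nor $C^+$. Symmetrically, $\overline{A^-}$ differs from $(u+1)[a_1=w_1,\ldots,a_k=w_k]$ (in the same example $\emptyset\in\overline{C^-}$ but $w(\emptyset)=0\leq u$). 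So the two programs are not equal, and the reducts $mc(P)^M$ and $\mu(P)^M$ can even contain different sets of rules. What saves the theorem is your parenthetical fallback, which is the statement that actually has to be proved: the discrepancy is invisible on the sets that occur in a stability test. The negated conjunct is evaluated only at $M$ and the positive conjunct only at sets $X_\beta\subseteq M$ arising in the canonical computation, and for $X_\beta\subseteq M$ one checks (using non-negativity of the weights) that $M\cap X\in C^-$ together with $X_\beta\cap X\in C^+$ holds if and only if $\sum\{w_i\colon a_i\in M\cap X\}\leq u$ together with $\sum\{w_i\colon a_i\in X_\beta\cap X\}\geq l$; both conditions then force $M\cap X$ and $X_\beta\cap X$ into $C$ itself. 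An analogous check is needed for the head, where $mc$ uses $\hd(r)^+$ plus the auxiliary rule $r''$. With this model-level argument in place of the false program identity, your proof goes through.
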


It follows that to compute stable models of {\em lparse} programs we 
can use the results obtained earlier in the paper, specifically the
results on program completion and loop formulas for convex-constraint
programs.

\noindent
{\bf Remark.}
To be precise, the syntax of {\em lparse} programs is more 
general. It allows both atoms and negated atoms to appear within weight 
atoms. It also allows weights to be negative. However, negative weights 
in {\em lparse} programs are treated just as a notational convenience. 
Specifically, an expression of the form $a=w$ within a weight atom (where 
$w<0$) represents the expression $\n(a)=-w$ (eliminating negative weights 
in this way from a weight atom requires modifications of the bounds 
associated with this weight atom). Moreover, by introducing new 
propositional variables one can remove occurrences of negative literals
from programs. These transformations preserve stable models (modulo
new atoms). \citeA{mt04} and \citeA{mnt03,mnt06} provide  
a detailed discussion
of this transformation.

In addition to weight atoms, the bodies of {\em lparse} rules may contain
propositional literals (atoms and negated atoms) as conjuncts. We can 
replace these propositional literals with weight atoms as follows: an 
atom $a$ can be replaced with the cardinality atom $1\{a\}$, and a
literal $\n(a)$ --- with the cardinality atom $\{a\}0$. This 
transformation preserves stable models, too. Moreover, the size of the 
resulting program does not increase more than by a constant factor. 
Thus, through the transformations discussed here, monotone- and 
convex-constraint programs capture arbitrary {\em lparse} programs.
\hfill$\Box$

\subsection{Computing Stable Models of {\em Lparse} Programs}

In this section we present an algorithm for computing stable models
of {\em lparse} programs. Our method uses the results we obtained in
Section \ref{secconvex} to reduce the problem to that of computing
models of the loop completion of an {\em lparse} program. The loop
completion is a formula in the logic $\plcc$, in which the class of
convex constraints is restricted to weight constraints, as defined
in the previous subsection. We will denote the fragment of the logic
$\plcc$ consisting of such formulas by $\plwc$.

To make the method practical, we need programs to compute models of 
theories in the logic $\plwc$. We will now show a general way to 
adapt to this task off-the-shelf {\em pseudo-boolean constraint 
solvers}
\cite{es03,arms02,wal97,pbcomp05,lt03}.

{\em Pseudo-boolean constraints} ($\pb$ for short) are integer
programming constraints
in which variables have 0-1 domains. We will write them as inequalities 
\begin{equation}
\label{pbeq}
w_1\times x_1 + \ldots + w_k\times x_k \cmp w,
\end{equation}
where $\cmp$ stands for one of the relations $\leq$, $\geq$, $<$ and 
$>$, $w_i$'s and $w$ are integer coefficients (not necessarily 
non-negative), and $x_i$'s are integers taking value 0 or 1. A set of 
pseudo-boolean constraints is a {\em pseudo-boolean theory}. 

Pseudo-boolean constraints can be viewed as constraints. The basic idea 
is to treat each 0-1 variable $x$ as a propositional atom (which we will
denote by the same letter). Under this 
correspondence, a pseudo-boolean constraint (\ref{pbeq}) is equivalent
to the constraint $(X,C)$, where $X =\{x_1,\ldots,x_k\}$ and
\[
C=\{Y\subseteq X\colon \sum_{i=1}^k\{w_i\colon x_i\in Y\} \cmp w\}
\]
in the sense that solutions to (\ref{pbeq}) correspond to models of 
$(X,C)$ ($x_i=1$ in a solution if and only if $x_i$ is true in the 
corresponding model). In particular, if all coefficients $w_i$ and the 
bound $w$ in (\ref{pbeq}) are non-negative, and if $\cmp=\mbox{
`$\geq$'}$, then the constraint (\ref{pbeq}) is equivalent to a monotone 
lower-bound weight atom $w[x_1=w_1,\ldots,x_n=w_n]$. 

It follows that an arbitrary weight atom can be represented by one or 
two pseudo-boolean constraints. More generally, an arbitrary $\plwc$ 
formula $F$ can be encoded as a set of $\pb$ constraints. We will describe
the translation as a two-step process.

The first step consists of converting $F$ to a {\em
clausal} form $\tocl(F)$\footnote{A $\plwc$ {\em clause} is any formula 
$B_1\wedge \ldots \wedge B_m \rightarrow H_1\vee\ldots\vee H_n$, where 
$B_i$ and $H_j$ are weight atoms.}. To control the size of the 
translation, we introduce auxiliary propositional atoms. Below, we 
describe the translation $F \mapsto \tocl(F)$ under the assumption 
that $F$ is a formula of the loop completion of an {\em lparse} program 
$P$. Our main motivation is to compute stable models of logic programs
and to this end algorithms for computing models of loop completions
are sufficient.

Let $F$ be a formula in the loop completion of an {\em lparse}-program 
$P$. We define $\tocl(F)$ as follows (in the transformation, we use a 
propositional atom $x$ as a shorthand for the cardinality atom $C(x)=
1\{x\}$).

\noindent
1. If $F$ is of the form $A_1 \wedge \ldots \wedge A_n \rightarrow A$,
         then $\tocl(F)=F$\\
2. If $F$ is of the form $ x \rightarrow ([\bd(r_1)]^{\wedge}) \vee
         \ldots \vee ([\bd(r_l)]^{\wedge})$,
         then we introduce new propositional atoms $b_{r,1},\ldots,b_{r,l}$ and
         set $\tocl(F)$ to consist of the following $\plwc$ clauses:
         \[
           x \rightarrow b_{r,1} \vee \ldots \vee b_{r,l}
         \]
         \[
           [\bd(r_i)]^{\wedge} \rightarrow b_{r,i}, \textrm{ for every }\bd(r_i)
         \]
         \[
           b_{r,i} \rightarrow A_j,\textrm{ for every }\bd(r_i)\textrm{ and }
           A_j\in \bd(r_i)
         \]
3. If $F$ is of the form $\bigvee L \rightarrow \bigvee_r
       \{\bdf_L(r)\}$,
         where $L$ is a set of atoms, and every $\bdf_L(r)$ is a conjunction of
         weight atoms, then we introduce new propositional atoms $bdf_{L,r}$ for
         every $\bdf_L(r)$ in $F$ and represent $\bigvee L$ as the weight atom
         $W_L=1[l_i=1:l_i\in L]$. We then define $\tocl(F)$ to consist of
         the following clauses:
         \[
           W_L \rightarrow \bigvee bdf_{L,r}
         \]
         \[
           \bdf_L(r) \rightarrow bdf_{L,r},\textrm{ for every }\bdf_L(r)\in F
         \]
         \[
           bdf_{L,r} \rightarrow A_j,\textrm{ for every }\bdf_L(r)\in F
           \textrm{ and }A_j\in \bdf_L(r).
         \]

It is clear that the size $\tocl(F)$ is linear in the size of $F$.

The second step of the translation, converts a $\plwc$ formula $C$ in
a clausal form into a set of $\pb$ constraints, $\topb(C)$.
To define the translation $C\rightarrow \topb(C)$, let us consider a 
$\plwc$ clause $C$ of the form
\begin{equation}
\label{clause}
B_1 \wedge \ldots \wedge B_m \rightarrow H_1 \vee \ldots \vee H_n,
\end{equation}
where $B_i$'s and $H_i$'s are weight atoms.

We introduce new propositional atoms $b_1,\ldots,b_m$ and $h_1,\ldots,
h_n$ to represent each weight atom in the clause. As noted earlier in
the paper, we simply write $x$ for a weight atoms of the form $1[x=1]$.
With the new atoms, the clause (\ref{clause}) becomes a propositional 
clause $b_1\wedge \ldots \wedge b_m \rightarrow h_1\vee \ldots \vee 
h_n$. We represent it by the following $\pb$ constraint:
\begin{equation}
\label{pbclause}
- b_1 - \ldots - b_m + h_1 + \ldots + h_n \geq 1 - m.
\end{equation}
Here and later in the paper, we use the same symbols to denote
propositional variables and the corresponding 0-1 integer variables.
The context will always imply the correct meaning of the symbols.
Under this convention, it is easy to see that a propositional clause 
$b_1\wedge \ldots \wedge b_m \rightarrow h_1\vee \ldots \vee h_n$ and
its $\pb$ constraint (\ref{pbclause}) have the same models.

We introduce next $\pb$ constraints that enforce the equivalence of the
newly introduced atoms $b_i$ (or $h_i$) and the corresponding weight 
atoms $B_i$ (or $H_i$).

Let $B=l[a_1=w_1,\ldots,a_k=w_k]u$ be a weight atom and $b$ a 
propositional atom. We split $B$ to $B^+$ and $B^-$ and introduce two
more atoms $b^+$ and $b^-$. To model $B\equiv b$, we model with 
pseudo-boolean constraints the following three equivalences:
$b\equiv b^+ \wedge b^-$, $b^+ \equiv B^+$, and $b^- \equiv B^-$.

\noindent
1. The first equivalence can be captured with three propositional 
clauses. Hence the following three $\pb$ constraints model that 
equivalence:
\begin{equation}
      \label{pb3eq1}
      -b + b^+ \geq 0
\end{equation}
\begin{equation}
      \label{pb3eq2}
      -b + b^- \geq 0
\end{equation}
\begin{equation}
      \label{pb3eq3}
      -b^+ - b^- + b \geq -1
\end{equation}
2. The second equivalence, $b^+\equiv B^+$, can be modeled by the 
following two $\pb$ constraints
        \begin{equation}
          \label{pblb1}
          (-l)\times b^+ + \sum_{i=1}^k(a_i\times w_i) \geq 0
        \end{equation}
        \begin{equation}
          \label{pblb2}
          -(\sum_{i=1}^k w_i -l + 1)\times b^+ +
	  \sum_{i=1}^k(a_i\times w_i) \leq l-1
        \end{equation}
3. Similarly, the third equivalence, $b^-\equiv B^-$, can be modeled 
by the following two $\pb$ constraints
        \begin{equation}
          \label{pbub1}
          (\sum_{i=1}^k w_i - u)\times b^- +
	  \sum_{i=1}^k(a_i\times w_i) \leq \sum_{i=1}^k w_i
        \end{equation}
        \begin{equation}
          \label{pbub2}
          (u + 1)\times b^- + \sum_{i=1}^k(a_i\times w_i) \geq u+1
        \end{equation}

We define now $\topb(C)$, for a $\plwc$ clause $C$, as the set of all
pseudo-boolean constraints (\ref{pbclause}) and (\ref{pb3eq1}), 
(\ref{pb3eq2}), (\ref{pb3eq3}), (\ref{pbub1}), (\ref{pbub2}), 
(\ref{pblb1}), (\ref{pblb2}) constructed for every weight atom occurring
in $C$. One can verify that the size of $\topb(C)$ is linear in the
size of $C$. Therefore, $\topb(\tocl(F))$ has size linear in the size of
$F$.

In the special case where all $B_i$'s and $H_j$'s are weight atoms of
the form $1[b_i=1]$ and $1[h_j=1]$, we do not need to introduce any
new atoms and $\pb$ constraints (\ref{pb3eq1}), (\ref{pb3eq2}),
(\ref{pb3eq3}), (\ref{pbub1}), (\ref{pbub2}), (\ref{pblb1}),
(\ref{pblb2}). Then $\topb(C)$ consists of a single $\pb$ constraint
(\ref{pbclause}).

We have the following theorem establishing the correctness of the
transformation $\tau$. The proof of the theorem is straightforward.

\begin{theorem}
     Let $F$ be a loop completion formula in logic $\plwc$, and $M$ a
     set of atoms, $M\subseteq \At(F)$. Then $M$ is a model of $F$ in
     $\plwc$ logic if and only if $M$ has a unique extension $M'$ by
     some of the new atoms in $\At(\topb(\tocl(F)))$ such that $M'$
     is a model of the pseudo-boolean theory $\topb(\tocl(F))$. 
\end{theorem}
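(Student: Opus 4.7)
The plan is to decompose the translation $\topb \circ \tocl$ into its two constituent steps and argue, for each step separately, that every model of the input extends uniquely to a model of the output. The overall bijection between models of $F$ and models of $\topb(\tocl(F))\restriction \At(F)$ then follows by composition.

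First I would handle $\tocl$. A formula $F$ in $LComp(P)$ falls into one of the three syntactic shapes listed in the definition. Shape~1 ($A_1\wedge\cdots\wedge A_n\rightarrow A$) introduces no new atoms, so the identity extension works. For shape~2 (a completion formula for an atom $x$), I would argue that in the three-clause encoding every auxiliary atom $b_{r,i}$ is \emph{definitionally} equivalent to $[\bd(r_i)]^\wedge$: the clauses $[\bd(r_i)]^\wedge \rightarrow b_{r,i}$ and $b_{r,i}\rightarrow A_j$ (for each $A_j\in\bd(r_i)$) together force $b_{r,i}\equiv [\bd(r_i)]^\wedge$. Once this equivalence is imposed, the first clause of the encoding, $x\rightarrow b_{r,1}\vee\cdots\vee b_{r,l}$, becomes equivalent to the original $F$. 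Hence for any $M\models F$ the values of the $b_{r,i}$ are determined, and the resulting extension satisfies $\tocl(F)$; conversely any model of $\tocl(F)$ projects to a model of $F$. Shape~3 (loop formulas) is analogous, with the atoms $bdf_{L,r}$ playing the role of the $b_{r,i}$ and the cardinality encoding of $\bigvee L$ as $W_L=1[l_i=1:l_i\in L]$ being equivalent to $\bigvee L$ on standard interpretations.

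Next I would handle $\topb$. A $\plwc$ clause $C$ of the form~(\ref{clause}) is encoded by first abbreviating each $B_i$ by $b_i$ and each $H_j$ by $h_j$, then adding the single linear-inequality encoding (\ref{pbclause}) of the resulting propositional clause. I would note that (\ref{pbclause}) is the standard, well-known pseudo-boolean encoding of a propositional clause and so has exactly the same 0/1-models. The remaining inequalities (\ref{pb3eq1})--(\ref{pbub2}) are introduced once per weight atom $B=l[a_1=w_1,\ldots,a_k=w_k]u$ and auxiliary variable $b$. For each such $B$ the key routine verification is that: (i) (\ref{pb3eq1})--(\ref{pb3eq3}) exactly encode the propositional equivalence $b\equiv b^+\wedge b^-$; (ii) (\ref{pblb1})--(\ref{pblb2}) exactly encode $b^+\equiv (\sum a_iw_i\geq l)$, i.e.\ $b^+\equiv B^+$, by a case split on whether the weighted sum meets the lower bound; (iii) (\ref{pbub1})--(\ref{pbub2}) analogously encode $b^-\equiv B^-$. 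By Proposition~\ref{can}, $B\equiv B^+\wedge B^-$, so combining these equivalences yields $b\equiv B$. Consequently the values of $b,b^+,b^-$ are uniquely determined by $M$, and $M$ satisfies $\tocl(F)$ iff its unique extension satisfies $\topb(\tocl(F))$.

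Finally I would compose the two bijections: given $M\subseteq \At(F)$ with $M\models F$, extend uniquely by the $b_{r,i}$ and $bdf_{L,r}$ of $\tocl$ to obtain $M_1\models \tocl(F)$, then extend $M_1$ uniquely by the $b,b^+,b^-$ atoms of $\topb$ to obtain $M'\models \topb(\tocl(F))$; the reverse direction is symmetric. The main obstacle, and essentially the only non-bookkeeping part of the argument, is step~(ii) and (iii) above: checking by a straightforward case analysis on the truth value of $b^+$ (respectively $b^-$) that the two linear inequalities (\ref{pblb1})--(\ref{pblb2}) (respectively (\ref{pbub1})--(\ref{pbub2})) are tight enough to force the intended equivalence in both directions, using crucially that all $w_i$ are non-negative integers. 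Everything else is propagation of definitional equivalences and standard clause-to-PB encoding, which is why the authors describe the proof as straightforward.
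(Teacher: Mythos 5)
Your proposal is correct and is precisely the ``straightforward'' argument the paper alludes to but omits: the auxiliary atoms introduced by $\tocl$ and $\topb$ are all definitionally forced (the $b_{r,i}$ and $bdf_{L,r}$ by the implication pairs in the clausal form, and $b,b^+,b^-$ by the inequality pairs (\ref{pb3eq1})--(\ref{pbub2}) together with Proposition \ref{can}), which gives both the existence and the uniqueness of the extension, and the case analysis you identify---checking that each inequality is vacuous for one value of the auxiliary variable and enforces the intended bound for the other, using the non-negativity of the weights---is indeed the only substantive verification. No gaps.
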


We note that when we use solvers designed for $\plwc$ theories, then
translation $\topb$ is no longer needed. The benefit of using such
solvers is that we do not need to split weight atoms in the $\plwc$
theories and do not need the auxiliary atoms introduced in $\topb$.

\subsubsection{The Algorithm}

We follow the approach proposed
by \citeA{lz02}.
As in that paper, we 
first compute the completion of a {\em lparse} program. Then, we iteratively 
compute models of the completion using a $\pb$ solver. Whenever a 
model is found, we test it for stability. If the model is not a 
stable model of the program, we extend the completion by loop formulas
identified in Theorem \ref{convexloop.cor}. Often, adding a single loop
formula filters out several models of $\comp(P)$ that are not stable
models of $P$.

The results given in the previous section ensure that our algorithm is 
correct. We present it in Figure \ref{fig.alg}. We note that it may 
happen that in the worst case exponentially many loop formulas are 
needed before the first stable model is found or we determine that no 
stable models exist \cite{lz02}. However, that problem arises only rarely in 
practical situations\footnote{In fact, in many cases programs turn out
to be tight with respect to their supported models. Therefore, supported 
models are stable and no loop formulas are necessary at all.}.

\begin{figure}
\noindent
\rule{12.2cm}{0.5mm}\\
Input: $P$ --- a {\em lparse} program;\\
\hspace*{0.4in} $A$ --- a pseudo-boolean solver

\noindent
{\bf BEGIN}\\
\mbox{}\ \ \ compute the completion $\comp(P)$ of $P$;\\
\mbox{}\ \ \ $T$ := $\topb(\tocl(\comp(P)))$;\\
\mbox{}\ \ \ {\bf do}\\
\mbox{}\ \ \ \ \ \ {\bf if} (solver $A$ finds no models of $T$)\\
\mbox{}\ \ \ \ \ \ \ \ \ \ \ output ``no stable models found'' and terminate;\\ 
\mbox{}\ \ \ \ \ \ \ $M$ := a model of $T$ found by $A$;\\
\mbox{}\ \ \ \ \ \ \ {\bf if} ($M$ is stable) output $M$ and terminate;\\
\mbox{}\ \ \ \ \ \ \ compute the reduct $P^M$ of $P$ with respect to $M$;\\
\mbox{}\ \ \ \ \ \ \ compute the greatest stable model $M'$, contained in
$M$, of $P^M$;\\
\mbox{}\ \ \ \ \ \ \ $M^-$ := $M\setminus M'$;\\
\mbox{}\ \ \ \ \ \ \ find all terminating loops in $M^-$;\\
\mbox{}\ \ \ \ \ \ \ compute loop formulas and convert them into $\pb$
constraints using\\
\mbox{}\ \ \ \ \ \ \ \ \ \ \ $\topb$ and $\tocl$;\\
\mbox{}\ \ \ \ \ \ \ add all $\pb$ constraints computed in the previous
step to $T$;\\
\mbox{}\ \ \ {\bf while} ({\bf true});\\
{\bf END}\\
\rule{12.2cm}{0.5mm}\\
\caption{Algorithm of $\pmd$}
\label{fig.alg}
\end{figure}

The implementation of $\pmd$ supports several $\pb$ solvers such as
{\em satzoo} \cite{es03}, {\em pbs} \cite{arms02}, {\em wsatoip}
\cite{wal97}. It also supports a program {\em wsatcc} \cite{lt03} for 
computing models of $\plwc$ theories. When this last program is used,
the transformation, from ``clausal'' $\plwc$ theories to pseudo-boolean
theories is not needed. The first two of these four programs are 
complete $\pb$ solvers. The latter two are local-search solvers based 
on {\em wsat} \cite{skc94}.

We output the message ``no stable model found'' in the first line
of the loop and not simply ``no stable models exist'' since in the case 
when $A$ is a local-search algorithm, failure to find a model of the 
completion (extended with loop formulas in iteration two and the 
subsequent ones) does not imply that no models exist.

\subsection{Performance}

In this section, we present experimental results concerning the
performance of $\pmd$. The experiments compared $\pmd$, combined with 
several $\pb$ solvers, to $\smd$ \cite{sns02} and $\cmd$ \cite{cmodels}.
We focused our experiments on problems whose statements explicitly involve 
pseudo-boolean constraints, as we designed $\pmd$ with such problems in
mind.

For most benchmark problems we tried $\cmd$ did not perform well. 
Only in one case (vertex-cover benchmark) the performance of $\cmd$ was 
competitive, although even in this case it was not the best performer. 
Therefore, we do not report here results we compiled for $\cmd$. For a 
complete set of results we obtained in the experiments we refer to 
\url{http://www.cs.uky.edu/ai/pbmodels}.

In the experiments we used instances of the following problems: {\em 
traveling salesperson}, {\em weighted $n$-queens}, {\em weighted Latin 
square}, {\em magic square}, {\em vertex cover}, and {\em Towers of 
Hanoi}. The {\em lparse} programs we used for the first four problems 
involve general pseudo-boolean constraints. Programs modeling the 
last two problems contain cardinality constraints only.

\noindent
{\bf Traveling salesperson problem (TSP)}. An instance consists of a 
weighted complete graph with $n$ vertices, and a bound $w$. All edge 
weights and $w$ are non-negative integers. A solution to an instance is 
a Hamiltonian cycle whose total weight (the sum of the weights of all 
its edges) is less than or equal to $w$.

We randomly generated $50$ weighted complete graphs with $20$ vertices,
To this end, in each case we assign to every edge of a complete
undirected
graph an integer weight selected uniformly at random from the 
range $[1..19]$.  By setting $w$ to $100$ we obtained a set of ``easy'' 
instances, denoted by {\em TSP-e} (the bound is high enough for every 
instance in the set to have a solution). From the same collection of 
graphs, we also created a set of ``hard'' instances, denoted by {\em 
TSP-h}, by setting $w$ to $62$. Since the requirement on the total weight 
is stronger, the instances in this set in general take more time.

\noindent
{\bf Weighted $n$-queens problem (WNQ)}. An instance to the problem 
consists of a weighted $n\times n$ chess board and a bound $w$. All 
weights and the bound are non-negative integers. A solution to an 
instance is a placement of $n$ queens on the chess board so that no two
queens attack each other and the weight of the placement (the sum of the 
weights of the squares with queens) is not greater than $w$.

We randomly generated $50$ weighted chess boards of the size $20\times 
20$, where each chess board is represented by a set of $n\times n$ 
integer weights $w_{i,j}$, $1\leq i,j\leq n$, all selected uniformly at 
random from the range $[1..19]$. We then created two sets of instances, 
easy (denoted by {\em wnq-e}) and hard (denoted by {\em wnq-h}), by 
setting the bound $w$ to 70 and 50, respectively.

\noindent
{\bf Weighted Latin square problem (WLSQ)}. An instance consists of an 
$n\times n$ array of weights $w_{i,j}$, and a bound $w$. All weights
$w_{i,j}$ and $w$ are non-negative integers. A solution to an instance 
is an $n\times n$ array $L$ with all entries from $\{1,\ldots,n\}$ and 
such that each element in $\{1,\ldots,n\}$ occurs exactly once in each 
row and in each column of $L$, and $\sum_{i=1}^n\sum_{j=1}^n L[i,j]
\times w_{i,j} \leq w$.

We set $n=10$ and we randomly generated $50$ sets of integer weights, 
selecting them uniformly at random from the range $[1..9]$. Again we 
created two families of instances, easy ({\em wlsq-e}) and hard ({\em
wlsq-h}), by setting $w$ to $280$ and $225$, respectively.

\noindent
{\bf Magic square problem}. An instance consists of a positive integer 
$n$. The goal is to construct an $n\times n$ array using each integer 
$1,\ldots n^2$ as an entry in the array exactly once in such a way that
entries in each row, each column and in both main diagonals sum up to
$n(n^2+1)/2$. For the experiments we used the magic square problem for 
$n=4,5$ and $6$.

\noindent
{\bf Vertex cover problem}. An instance consists of graph with $n$ 
vertices and $m$ edges, and a non-negative integer $k$ --- a bound. A 
solution to the instance is a subset of vertices of the graph with no
more than $k$ vertices and such that at least one end vertex of every
edge in the graph is in the subset.

We randomly generated $50$ graphs, each with $80$ vertices and $400$ 
edges. For each graph, we set $k$ to be a smallest integer such that
a vertex cover with that many elements still exists. 

\noindent
{\bf Towers of Hanoi problem}. This is a slight generalization of the 
original problem. We considered the case with six disks
and three pegs.
An instance 
consists of an initial configuration of disks that satisfies the 
constraint of the problem (larger disk must not be on top of a smaller 
one) but does not necessarily requires that all disks are on one peg. 
These initial configurations were selected so that they were 31,
36, 41 and 63 steps away from the goal configuration (all disks from the
largest to the smallest on the third peg), respectively. We also 
considered a standard version of the problem with seven disks, in which 
the initial configuration is $127$ steps away from the goal.

We encoded each of these problems as a program in the general syntax of 
{\em lparse}, which allows the use of relation symbols and variables
\cite{syr99a}. The programs are available at 
\url{http://www.cs.uky.edu/ai/pbmodels}. We then used these programs in 
combination with appropriate instances as inputs to {\em lparse} 
\cite{syr99a}. In this way, for each problem and each set of instances 
we generated a 
family of ground (propositional) {\em lparse} programs so that stable 
models of each of these programs represent solutions to the corresponding 
instances of the problem (if there are no stable models, there are no 
solutions). We used these families of {\em lparse} programs as inputs 
to solvers we were testing. All these ground programs are also available
at \url{http://www.cs.uky.edu/ai/pbmodels}.

In the tests, we used $\pmd$ with the following four $\pb$ solvers: 
$\satzoo$ \cite{es03}, $\pbs$ \cite{arms02}, $\wsatcc$ \cite{lt03}, 
and $\wsatoip$ \cite{wal97}. In particular, $\wsatcc$ deals with
$\plwc$ theories directly.

All experiments were run on machines with 3.2GHz Pentium 4 CPU, 1GB 
memory, running Linux with kernel version 2.6.11, gcc version 3.3.4.
In all cases, we used 1000 seconds as the timeout limit.

We first show the results for the {\em magic square} and {\em towers of
Hanoi} problems. In Table \ref{tab.msth}, for each solver and each 
instance, we report the corresponding running time in seconds.
Local-search solvers were unable to solve any of the instances
in the two problems and so are not included in the table.

\begin{table}[ht]
\begin{footnotesize}
\begin{center}
\begin{tabular}{|c|c|c|c|}
\hline
\emph{Benchmark}&
\emph{smodels}&
\emph{pbmodels-satzoo}&
\emph{pbmodels-pbs}\tabularnewline
\hline
\hline
\emph{magic square $(4\times4)$}&
$1.36$&
$1.70$&
$2.41$\tabularnewline
\hline
\emph{magic square $(5\times5)$}&
$>1000$&
$28.13$&
$0.31$\tabularnewline
\hline
\emph{magic square $(6\times6)$}&
$>1000$&
$75.58$&
$>1000$\tabularnewline
\hline
\emph{towers of Hanoi $(d=6,t=31)$}&
$16.19$&
$18.47$&
$1.44$\tabularnewline
\hline
\emph{towers of Hanoi $(d=6,t=36)$}&
$32.21$&
$31.72$&
$1.54$\tabularnewline
\hline
\emph{towers of Hanoi $(d=6,t=41)$}&
$296.32$&
$49.90$&
$3.12$\tabularnewline
\hline
\emph{towers of Hanoi $(d=6,t=63)$}&
$>1000$&
$>1000$&
$3.67$\tabularnewline
\hline
\emph{towers of Hanoi $(d=7,t=127)$}&
$>1000$&
$>1000$&
$22.83$\tabularnewline
\hline
\end{tabular}
\end{center}
\caption{Magic square and towers of Hanoi problems}
\label{tab.msth}
\end{footnotesize}
\end{table}

Both $\pmdzoo$ and $\pmdpbs$ perform better than $\smd$ on programs
obtained from the instances of both problems. We observe that
$\pmdpbs$ performs exceptionally well in the tower of Hanoi problem. It 
is the only solver that can compute a plan for $7$ disks, which requires
127 steps. Magic square and Towers of Hanoi problems are highly regular.
Such problems are often a challenge for local-search problems, which 
may explain a poor performance we observed for $pbmodels\mbox{-}wsatcc$ and
$\pmdoip$ on these two benchmarks.

For the remaining four problems, we used 50-element families of 
instances, which we generated randomly in the way discussed above. We
studied the performance of complete solvers ($\smd$, $\pmdzoo$ and
$\pmdpbs$) on all instances. We then included local-search solvers
($\pmdwcc$ and $pbmodels\-wsatoip$) in the comparisons but restricted attention
only to instances that were determined to be satisfiable (as
local-search solvers are, by their design, unable to decide
unsatisfiability). In Table \ref{tab.suminst}, for each family we list 
how many of its instances are satisfiable, unsatisfiable, and for how 
many of the instances none of the solvers we tried was able to decide 
satisfiability.

\begin{table}
\begin{footnotesize}
\begin{center}
\begin{tabular}{|c|c|c|c|}
\hline
&
\emph{\# of SAT instances}&
\emph{\# of UNSAT instances}&
\emph{\# of UNKNOWN instances}\tabularnewline
\hline
\hline
\emph{TSP-e}&
$50$&
$0$&
$0$\tabularnewline
\hline
\emph{TSP-h}&
$31$&
$1$&
$18$\tabularnewline
\hline
\emph{wnq-e}&
$49$&
$0$&
$1$\tabularnewline
\hline
\emph{wnq-h}&
$29$&
$0$&
$21$\tabularnewline
\hline
\emph{wlsq-e}&
$45$&
$4$&
$1$\tabularnewline
\hline
\emph{wlsq-h}&
$8$&
$41$&
$1$\tabularnewline
\hline
\emph{vtxcov }&
$50$&
$0$&
$0$\tabularnewline
\hline
\end{tabular}
\end{center}
\caption{Summary of Instances}
\label{tab.suminst}
\end{footnotesize}
\end{table}

In Table \ref{tab.sum}, for each of the seven families of instances
and for each {\em complete} solver, we report two values $s/w$, 
where $s$ is the number of instances solved by the solver and $w$ is 
the number of times it was the fastest among the three.

\begin{table}
\begin{footnotesize}
\begin{center}
\begin{tabular}{|c|c|c|c|c|}
\hline
&
\emph{smodels}&
\emph{pbmodels-satzoo}&
\emph{pbmodels-pbs}\tabularnewline
\hline
\hline
\emph{TSP-e}&
$45/17$&
$50/30$&
$18/3$\tabularnewline
\hline
\emph{TSP-h}&
$7/3$&
$16/14$&
$0/0$\tabularnewline
\hline
\emph{wnq-e}&
$11/5$&
$26/23$&
$0/0$\tabularnewline
\hline
\emph{wnq-h}&
$2/2$&
$0/0$&
$0/0$\tabularnewline
\hline
\emph{wlsq-e}&
$21/1$&
$49/29$&
$46/19$\tabularnewline
\hline
\emph{wlsq-h}&
$0/0$&
$47/26$&
$47/23$\tabularnewline
\hline
\emph{vtxcov }&
$50/40$&
$50/1$&
$47/3$\tabularnewline
\hline
\emph{sum over all}&
$136/68$&
$238/123$&
$158/48$\tabularnewline
\hline
\end{tabular}
\end{center}
\caption{Summary on all instances}
\label{tab.sum}
\end{footnotesize}
\end{table}

The results in Table \ref{tab.sum} show that overall $\pmdzoo$ solved
more instances than $pbmodels\mbox{-}$ $pbs$, followed by $\smd$. When we look at the 
number of times a solver was the fastest one, $\pmdzoo$ was a clear
winner overall, followed by $\smd$ and then by $\pmdpbs$. Looking
at the seven families of tests individually, we see that $\pmdzoo$
performed better than the other two solvers on five of the families. 
On the other two $\smd$ was the best performer (although, it is a clear 
winner only on the vertex-cover benchmark; all solvers were essentially
ineffective on the \emph{wnq-h}).

We also studied the performance of $\pmd$ combined with local-search 
solvers $\wsatcc$ \cite{lt03} and $\wsatoip$ \cite{wal97}. For this
study, we considered only those instances in the seven families that
we knew were satisfiable. Table \ref{tab.sumsat} presents results for all
solvers we studied (including the complete ones). As before, each entry
provides a pair of numbers $s/w$, where $s$ is the number of solved
instances and $w$ is the number of times the solver performed better
than its competitors.

\begin{table}
\begin{footnotesize}
\begin{center}
\begin{tabular}{|c|c|c|c|c|c|c|}
\hline
&
\emph{smodels}&
\emph{pbmd-satzoo}&
\emph{pbmd-pbs}&
\emph{pbmd-wsatcc}&
\emph{pbmd-wsatoip}\tabularnewline
\hline
\hline
\emph{TSP-e}&
$45/3$&
$50/5$&
$18/2$&
$32/7$&
$47/34$\tabularnewline
\hline
\emph{TSP-h}&
$7/0$&
$16/2$&
$0/0$&
$19/6$&
$28/22$\tabularnewline
\hline
\emph{wnq-e}&
$11/0$&
$26/0$&
$0/0$&
$49/45$&
$49/4$\tabularnewline
\hline
\emph{wnq-h}&
$2/0$&
$0/0$&
$0/0$&
$29/15$&
$29/14$\tabularnewline
\hline
\emph{wlsq-e}&
$21/0$&
$45/0$&
$44/0$&
$45/33$&
$45/14$\tabularnewline
\hline
\emph{wlsq-h}&
$0/0$&
$7/0$&
$8/0$&
$7/1$&
$8/7$\tabularnewline
\hline
\emph{vtxcov }&
$50/0$&
$50/0$&
$47/0$&
$50/36$&
$50/15$\tabularnewline
\hline
\emph{sum over all}&
$136/3$&
$194/7$&
$117/2$&
$231/143$&
$256/110$\tabularnewline
\hline
\end{tabular}
\end{center}
\caption{Summary on SAT instances}
\label{tab.sumsat}
\end{footnotesize}
\end{table}

The results show superior performance of $\pmd$ combined with
local-search solvers. They solve more instances than complete
solvers (including $\smd$). In addition, they are significantly faster,
winning much more frequently than complete solvers do (complete solvers
were faster only on 12 instances, while local-search solvers were
faster on 253 instances).

Our results demonstrate that $\pmd$ with solvers of pseudo-boolean
constraints outperforms $\smd$ on several types of search problems
involving pseudo-boolean (weight) constraints).

We note that we also analyzed the run-time distributions for each of 
these families of instances. A run-time distribution is regarded as 
a more accurate and detailed measure of the performance of algorithms
on randomly generated instances\footnote{\citeA{hs05} provide a detailed
discussion of this matter in the context of local-search methods.}. The
results are consistent with the summary results presented above and 
confirm our conclusions. As the discussion of run-time distributions 
requires much space, we do not include this analysis here.
They are available at the website \url{http://www.cs.uky.edu/ai/pbmodels}.

\section{Related work}

Extensions of logic programming with means to model properties of {\em
sets} (typically consisting of ground terms) have been extensively 
studied. Usually, these extensions are referred to by the common term 
of {\em logic programming with aggregates}. The term comes from the fact
that most properties of sets of practical interest are defined through 
``aggregate'' operations such as sum, count, maximum, minimum and 
average. We chose the term {\em constraint} to stress that we speak 
about abstract properties that define constraints on truth assignments
(which we view as sets of atoms).

\citeA{mum90}, and \citeA{ks91} were among the first to study
logic programs with aggregates. Recently, \citeA{nss99} and 
\citeA{sns02} 
introduced the class of {\em lparse} programs. We discussed this 
formalism in detail earlier in this paper. 

\citeA{p04} and \citeA{pdb06} studied a more general class of aggregates 
and developed 
a systematic theory of aggregates in logic programming based on the 
approximation theory \cite{dmt00a}. The resulting theory covers not only 
the stable models semantics but also the supported-model semantics and 
extensions of 3-valued Kripke-Kleene and well-founded semantics. The 
formalism introduced and studied by \citeA{p04} and \citeA{pdb06} 
allows for arbitrary aggregates (not only monotone ones) to appear in 
the bodies of rules. However, it does not 
allow for aggregates to appear in the heads of program clauses. Due to 
differences in the syntax and the scope of semantics studied there is no 
simple way to relate
\citeS{p04} and \citeS{pdb06} formalism
to programs with monotone (convex) 
constraints. We note though that programs with abstract monotone
constraints with the heads of rules of the form $C(a)$ can be viewed
almost literally as programs in the formalism by \citeA{p04} and 
\citeA{pdb06} and that they have the same stable models according to 
the definitions we used in this paper and those by \citeA{p04}
and \citeA{pdb06}.

\citeA{flp04} developed the theory of {\em disjunctive} logic programs 
with aggregates. Similarly as \citeA{p04} and \citeA{pdb06},
\citeA{flp04} do not allow for aggregates to appear in the heads of 
program clauses. This is one 
of the differences between that approach and programs with monotone 
(convex) constraints we studied here. The other major difference is 
related to the postulate of the minimality of stable models (called {\em 
answer sets} in the context of the formalism considered
by \citeR{flp04}).
In keeping with the spirit of the original answer-set semantics 
\cite{gl90b}, answer sets of disjunctive programs with aggregates, as 
defined
by \citeA{flp04},
are minimal models. Stable models of programs 
with abstract constraints do not have this property. However, for the
class of programs with abstract monotone constraints with the heads of 
rules of the form $C(a)$ the semantics of answer sets defined
by \citeA{flp04}
coincides with the semantics of stable models by
\citeA{mt04} and \citeA{mnt03,mnt06}.

Yet another approach to aggregates in logic programming was presented
by \citeA{sp06}.
That approach considered programs of the syntax similar 
to programs with monotone abstract constraints. It allowed arbitrary
constraints (not only monotone ones) but not under the scope of $\n$
operator. A general principle behind the definition of the stable-model 
semantics
by \citeA{sp06}
is to view a program with constraints 
as a concise representation of a set of its ``instances'', each being a 
normal logic program. Stable models of the program with constraints are 
defined as stable models of its instances and is quite different from 
the operator-based definition
by \citeA{mt04} and \citeA{mnt03,mnt06}.
However, for programs
with {\em monotone} constraint atoms which fall in the scope of the 
formalism of \citeA{sp06} both approaches coincide. 

We also note that recently \citeA{spt06} presented a {\em conservative} 
extension of the syntax proposed by \citeA{mt04} \citeA{mnt06}, in which 
clauses are built of arbitrary constraint atoms.

Finally, we point out the work by
\citeA{fl04} and \citeA{fer05}
which treats 
aggregates as {\em nested expressions}. In particular, \citeA{fer05} 
introduces a propositional logic with a certain nonclassical semantics, 
and shows that it extends several approaches to programs with aggregates, 
including those by
\citeA{sns02}
(restricted to core lparse programs) and 
\citeA{flp04}.
The nature of the relationship of the formalism
by \citeA{fer05}
and programs with abstract constraints remains an open
problem.

\section{Conclusions}

Our work shows that concepts, techniques and results from normal logic
programming, concerning strong and uniform equivalence, tightness and
Fages lemma, program completion and loop formulas, generalize to the 
abstract setting of programs with monotone and convex constraints.
These general properties specialize to {\em new} results about {\em 
lparse} programs (with the exception of the characterization strong 
equivalence of {\em lparse} programs, which was first obtained
by \citeR{tu03}).

Given these results we implemented a new software {\em pbmodels} for 
computing stable models of {\em lparse} programs. The approach reduces
the problem to that of computing models of theories consisting of 
pseudo-boolean constraints, for which several fast solvers exist
\cite{pbcomp05}. Our experimental results show that {\em pbmodels} 
with $\pb$ solvers, especially local search $\pb$ solvers, performs better
than $\smd$ on several types of search problems we tested. Moreover, as
new and more efficient solvers of pseudo-boolean constraints become
available (the problem is receiving much attention in the satisfiability
and integer programming communities), the performance of {\em pbmodels}
will improve accordingly.

\section*{Acknowledgments}

We acknowledge the support of NSF grants IIS-0097278 and IIS-0325063.
We are grateful to the reviewers for their useful comments and
suggestions.

This paper combines and extends results included in conference papers
\cite{lt05,lt05b}.

{\small


}

\end{document}